%% file: plain.tex
\documentclass[11pt]{article} 
\usepackage{tabularx}
\usepackage{booktabs}
\usepackage[table]{xcolor}
\input{math_commands.tex}

\usepackage{natbib}

\usepackage[
            CJKbookmarks=true,
            bookmarksnumbered=true,
            bookmarksopen=true,
            colorlinks=true,
            citecolor=red,
            linkcolor=blue,
            anchorcolor=red,
            urlcolor=blue
            ]{hyperref}
\usepackage{url}
\usepackage{amsthm}
\usepackage{algorithm,algorithmic}

\newtheorem{definition}{Definition}
\newtheorem{assumption}{Assumption}
\newtheorem{theorem}{Theorem}
\newtheorem{lemma}{Lemma}
\newtheorem{remark}{Remark}

\newtheorem{corollary}{Corollary}

\newcommand{\wh}{\widehat}

\newcommand{\clip}{\operatorname{clip}}
\newcommand{\norm}[1]{\left\lVert #1\right\rVert}

\newcommand{\piref}{\pi^{\mathrm{ref}}}

\makeatletter

\title{Provable Benefits of Regularization: Fast Rates for Adversarial Imitation Learning}

\author{%
Hanbin Zhou$^{1,*}$ \qquad
Shangzhe Li$^{2,*}$ \qquad
Alexander Braverman$^{3}$ \qquad
Weitong Zhang$^{2,\dagger}$ \\[0.4em]
\small $^{1}$HKUST(GZ) \qquad
$^{2}$UNC Chapel Hill \qquad
$^{3}$Cornell University \\[0.2em]
\small $^{*}$Equal contribution. \quad
$^{\dagger}$Correspondence to: \texttt{weitongz@unc.edu}
}
\date{}

\begin{document}

\maketitle

\begin{abstract}
We study adversarial imitation learning (AIL), in which an agent learns to imitate expert demonstrations by optimizing a policy against an adversarial reward that distinguishes expert and learner behavior. Historically, reward regularization and entropy-based policy regularization are key components of empirically successful methods such as GAIL and LS-IQ, yet their finite-sample benefits remain underexplored. We establish fast rates for jointly regularized AIL in finite-horizon Markov decision processes with general function approximation. Our model-free algorithm, Dually Regularized AIL, combines KL policy regularization with a quadratic reward penalty weighted by expert and learner occupancies. With $K$ online episodes and $N$ expert trajectories, we prove a $\widetilde{\mathcal O}\left(\frac{1}{K}+\frac{1}{N}\right)$ bound on the regularized imitation gap for fixed regularization parameters. Our analysis combines an online mirror descent construction for general convex reward classes to control estimation error from finite expert data and stochastic learner feedback, with a sharp analysis of optimistic KL-regularized policy learning. To the best of our knowledge, Dually Regularized AIL is the first algorithm to simultaneously achieve $\widetilde{\mathcal O}\left(\frac{1}{\epsilon}\right)$ sample complexity in both expert demonstrations and online interactions for this regularized AIL objective, even with stochastic experts. These results provide a rigorous characterization of the complementary statistical benefits of reward and policy regularization in AIL.
\end{abstract}

\section{Introduction}
Imitation learning (IL) seeks to learn a policy that reproduces expert behavior from demonstrations, with the goal of matching the expert's performance when the task reward is unavailable. Behavior cloning approaches this goal by directly predicting expert actions. Adversarial imitation learning (AIL) instead learns a reward that distinguishes expert and learner behavior, and improves the policy under that reward~\citep{10.1145/1015330.1015430,NIPS2007_ca3ec598,ho2016gail}. This provides a performance-based criterion for imitation: expected cumulative reward depends on the state--action occupancy measure, so reducing the expert--learner gap over a reward class controls performance on the tasks represented by that class. Reward learning therefore guides the policy toward matching the expert's behavior across the trajectory.

As AIL has developed, reward and policy regularization have become important ingredients of practical algorithms. A common motivation for reward regularization is to reduce overfitting to finite demonstrations and maintain a useful learning signal for the policy~\citep{orsini2021matters}. Entropy-based policy regularization complements this by encouraging stochastic behavior and exploration during policy improvement~\citep{haarnoja2018soft}. GAIL incorporates both ingredients through cost-regularized occupancy matching and a causal-entropy policy objective~\citep{ho2016gail}. Related inverse-Q methods retain this two-sided structure: IQ-Learn combines implicit reward regularization with soft-$Q$ or soft actor--critic policy updates, while LS-IQ studies a quadratic reward penalty under a mixture of expert and learner occupancies within a maximum-entropy formulation~\citep{garg2021iqlearn,alhafez2023lsiq}. These developments motivate regularization as a way to limit overfitting and stabilize learning, and connect its design to the underlying occupancy-matching objective. How these benefits translate into faster convergence from finite demonstrations and online interactions remains less understood.

One way to make these statistical benefits precise is to examine whether regularization can improve convergence rates. Recent advances in IL and reinforcement learning (RL) show that structural assumptions and regularized objectives can yield faster rates than the conventional inverse-square-root dependence on sample size. For realizable deterministic experts, \citet{foster2024behavior} establish a $\widetilde{\mathcal O}(1/N)$ performance bound for log-loss behavior cloning from $N$ expert trajectories. In online RL, \citet{zhao2025kl} exploit the curvature of a KL-regularized policy objective to obtain logarithmic cumulative regret and a corresponding $\widetilde{\mathcal O}(1/K)$ average policy error after $K$ episodes. These results motivate examining whether reward regularization can similarly accelerate the convergence of AIL.

However, one challenging part in the AIL is that the coupling between the reward and the policy. In particular, since the reward is learned from the same finite expert dataset throughout training, while the evolving policy determines the distribution of newly collected learner trajectories. The reward learner must therefore control estimation errors from both data sources, and the policy learner must improve its behavior using an estimated value function. This coupling brings the two regularizers into the same statistical problem and raises the following question:
\begin{center}
\emph{Can reward and policy regularization jointly yield fast rates for AIL\\ in both expert demonstrations and online interactions?}
\end{center}

In this paper, we answer this question for jointly regularized AIL in finite-horizon MDPs with unknown transitions and general function approximation. We propose Dually Regularized AIL, a model-free algorithm combining KL policy regularization with a quadratic reward penalty weighted by expert and learner occupancies. The policy objective recovers causal-entropy regularization under a uniform reference. The algorithm alternates optimistic value-based policy learning with online mirror descent over a general convex reward class, reusing a fixed expert dataset and collecting one new learner trajectory per episode. We summarize our technical contributions as follows:

\vspace{.3em} \noindent \textbf{Fast rates for regularized AIL.}
With $K$ online episodes and $N$ expert trajectories, Theorem~\ref{thm:main} and Corollary~\ref{cor:sample_complexity} establish the high-probability bound $\operatorname{Gap}(\bar\pi_K,\bar r_K)
\leq \widetilde{\mathcal O}\left(\frac{1}{K}+\frac{1}{N}\right)$ on the regularized saddle-point gap, for fixed positive regularization parameters and controlled function-class complexity. This yields $\widetilde{\mathcal O}(\epsilon^{-1})$ sample requirements for both expert demonstrations and online interactions, even for stochastic experts. The full theorem specifies the dependence on the horizon, regularization strengths, covering numbers, and generalized eluder dimensions.

\vspace{.3em} \noindent \textbf{Statistical roles of regularization.}
Our reward analysis uses an online mirror descent construction whose update stability is controlled by generalized eluder dimension. The occupancy-weighted quadratic penalty supplies curvature that controls estimation error from the reused expert dataset and stochastic feedback from learner trajectories. We combine this construction with the sharp optimistic KL-RL analysis of~\citet{zhao2025kl}, in which policy curvature converts planning uncertainty into squared Bellman errors. Together, these arguments explain how reward and policy regularization contribute to fast finite-sample rates in AIL.

\section{Related Work}
In this section, we provide an overview of the related works in AIL and RL settings. Table~\ref{tab:regularization_rates} summarizes representative guarantees, separating their objectives and expert assumptions.

\vspace{.3em} \noindent \textbf{Reward regularization and inverse Q-learning.}
Apprenticeship learning connects reward learning to expert-performance matching~\citep{10.1145/1015330.1015430,NIPS2007_ca3ec598}. GAIL formalizes the connection between a convex cost regularizer and the induced occupancy discrepancy through convex conjugacy; its particular choice gives a Jensen--Shannon objective together with causal-entropy regularization~\citep{ho2016gail}. IQ-Learn represents reward and policy through a single soft Q-function, avoiding explicit alternation between a separately parameterized reward and policy~\citep{garg2021iqlearn} which can be considered as the max-entropy RL for policy regularization. Building on this perspective, LS-IQ studies a squared reward penalty under the expert--learner mixture and relates it to a bounded Pearson $\chi^2$ divergence, reward bounds, and improved stability~\citep{alhafez2023lsiq}. Empirical comparisons also show that discriminator regularization can matter substantially, especially on harder tasks, although its effect depends on other algorithmic choices~\citep{orsini2021matters}. These works motivate our regularizer but address a different question from the joint finite-sample rates proved here. In particular, a quadratic penalty controls reward magnitude and supplies curvature; it need not enforce smoothness with respect to state--action inputs.

\begin{table}[t]
\centering
\caption{Sample complexity in expert trajectories ($N$) and online episodes ($K$) in related works. Sample complexity omits the horizon, reward-range, and structural-complexity factors. ``General'' experts may be stochastic. All guarantees are high-probability except Mimic-Emp's expected gap. Unregularized rates summarize the worst case, omitting variance-dependent improvements. These works made regular RL / IL assumptions where we refer readers to check their original statements.}
\label{tab:regularization_rates}
\begingroup
\small
\setlength{\tabcolsep}{4pt}
\renewcommand{\arraystretch}{1.20}

\begin{tabularx}{\linewidth}{@{}>{\raggedright\arraybackslash}Xllllc@{}}
\toprule
Method & Class & Expert & Criterion & $N$ & $K$ \\
\midrule
Mimic-Emp~\citep{rajaraman2020toward}
& Tabular & General & Exp. IL gap & $\epsilon^{-1}$ & $0$ \\

Log-loss BC~\citep{foster2024behavior}
& General & Det. & IL gap & $\epsilon^{-1}$ & $0$ \\

Log-loss BC~\citep{foster2024behavior}
& General & General & IL gap & $\epsilon^{-2}$ & $0$ \\

Min-Max-IRL~\citep{schlaginhaufen2026fast}
& Linear & General & KL risk & $\epsilon^{-1}$ & $0$ \\

MB-TAIL~\citep{xu2023provably}
& Tabular & Det. & IL gap & $\epsilon^{-1}$ & $\epsilon^{-2}$ \\

OPT-AIL~\citep{xu2024optail}
& General & General & IL gap & $\epsilon^{-2}$ & $\epsilon^{-2}$ \\

MB-AIL~\citep{li2026mbail}
& General & General & IL gap & $\epsilon^{-2}$ & $\epsilon^{-2}$ \\
\midrule

KL-LSVI-UCB~\citep{zhao2025kl}
& General & -- & Reg. RL regret & -- & $\epsilon^{-1}$ \\
\midrule

\rowcolor[gray]{0.94}
\textbf{Dually Regularized AIL (ours)}
& \textbf{General}
& \textbf{General}
& \textbf{Reg. dual gap}
& $\boldsymbol{\epsilon^{-1}}$
& $\boldsymbol{\epsilon^{-1}}$ \\

\bottomrule
\end{tabularx}

\endgroup
\end{table}

\vspace{.3em} \noindent \textbf{Finite-sample adversarial imitation learning.}
Online apprenticeship learning combines optimistic exploration with no-regret reward and policy updates, yielding square-root interaction regret and an additional error from finite expert data~\citep{shani2022online}. Subsequent analyses cover unknown tabular dynamics~\citep{xu2023provably}, linear-mixture models~\citep{liu2021provably}, and linear MDPs~\citep{vianoimitation}. Under general function approximation, OPT-AIL couples reward optimization with optimistic policy learning and characterizes complexity using a generalized eluder coefficient~\citep{xu2024optail}. MB-AIL instead learns a transition model and establishes second-order guarantees that adapt to stochasticity, together with lower bounds on expert and interaction requirements~\citep{li2026mbail}. Its worst-case square-root terms can improve in favorable low-variance instances. Our result studies a regularized saddle-point criterion and uses reward curvature to obtain inverse-sample dependence without requiring a deterministic expert or vanishing variance. The different objectives and structural assumptions are essential to interpreting the comparison.

\paragraph{Fast rates and the role of expert stochasticity.}
Fast expert-sample rates depend on both the policy class and the expert's stochasticity. In tabular MDPs, \citet{rajaraman2020toward} obtain inverse-$N$ expected-error bounds even for stochastic experts. For general policy classes, the worst-case picture is different. For a realizable deterministic expert, \citet{foster2024behavior} combine log-loss estimation with a trajectory-level analysis to obtain $\widetilde{\mathcal O}(N^{-1})$ performance error; stochastic experts admit variance-sensitive bounds with a worst-case $N^{-1/2}$ term. Their deterministic-expert result does not require deterministic transitions. Fast expert-sample rates also arise in model-based tabular AIL with deterministic experts~\citep{xu2023provably}, while the second-order AIL guarantees of~\citet{li2026mbail} quantify more general instance-dependent improvements. In entropy-regularized min-max IRL, \citet{schlaginhaufen2026fast}
establish a $\widetilde{\mathcal O}(N^{-1})$ rate for excess
trajectory-level KL risk with linear reward classes, allowing
stochastic experts and model misspecification. Our fast rate arises from the regularized objective rather than expert determinism. It therefore complements these results instead of contradicting lower bounds for ordinary imitation. In particular, the success of offline cloning rules out a universal claim that online interaction is necessary; our result characterizes the efficiency of a reward-based alternative.

\section{Preliminaries}
We consider an episodic Markov decision process (MDP) $\langle\mathcal{S},\mathcal{A},H,P\rangle$, where $\mathcal{S}$ and $\mathcal{A}$ are the state space and action space, $H\in\mathbb{Z}_+$ is the horizon, and $P=\{P_h\}_{h\in[H]}$ with $P_h:\mathcal{S}\times\mathcal{A}\rightarrow\Delta_{\mathcal{S}}$ is the transition dynamics. We assume the initial state is fixed from $s_1$. A policy is denoted by $\pi=\{\pi_h\}_{h\in[H]}$, where $\pi_h:\mathcal{S}\rightarrow\Delta_{\mathcal{A}}$. For each $h\in[H]$, let $d_h^\pi$ denote the state-action occupancy distribution of a state-action pair $(s_h,a_h)$ induced by $P$ and
$\pi$ at timestep $h$.
We write $\langle\mu,f\rangle:=\int f \mathrm d\mu$ for a signed measure $\mu$ and a bounded function $f$ as the expectation $\mathbb E_{\mu}[f(\cdot)]$ for simplicity.


In \textbf{imitation learning}, consider $\pi^E$ be the expert policy and denote its occupancy by $d_h^E$. We observe an expert dataset $\mathcal{D}^E=\{s_h^i,a_h^i\}_{h\in[H]}^{i\in[N]}$ containing $N$ independent expert trajectories. The agent then performs $K$-round of online interactions with the environments. In the AIL setting, the agent seeks a stage-wise reward function $r=\{r_h\}_{h\in[H]}$ in order to distinguish the expert demonstrations and the learner behavior. We assume the reward function in this class bounded by $r(s, a) \in [-1, 1]$.


\paragraph{KL regularized policy updates.}
Let $\piref$ be a reference policy that is known
to the learner and let $\tau>0$ be the KL-regularization coefficient.
The reference policy $\piref$ may be initialized from a policy
pretrained via behavioral cloning, similar to prior empirical
approaches~\citep{jena2021augmenting}. Alternatively, choosing $\piref$
to be uniform reduces KL regularization to entropy regularization
up to an additive constant, recovering the maximum-entropy AIL
formulation commonly used in prior work, including
GAIL~\citep{ho2016gail} and IQ-Learn~\citep{garg2021iqlearn}.
We use the convention $0\log(0/q)=0$, with KL divergence equal to
$+\infty$ when its first argument assigns positive mass outside the
support of its second. Gibbs identities are understood on the support
of $\piref$.
Following prior work~\citep{zhao2025kl}, define the KL-regularized
value and action-value functions for a policy $\pi$ under any bounded
stagewise reward $r$ as
\begin{align}
\notag 
V_{r,h}^{\pi,\tau}(s)
&:=\mathbb{E}_{P,\pi}\left[
\textstyle{\sum_{t=h}^H}\left(
r_t(s_t,a_t)
-\tau\log{\pi_t(a_t\mid s_t)} / {\piref_t(a_t\mid s_t)}
\right)\,\middle|\,s_h=s\right], \\
\notag 
    Q_{r,h}^{\pi,\tau}(s,a)
    &:=r_h(s,a)+[P_hV_{r,h+1}^{\pi,\tau}](s,a),
\end{align}
where $V_{r,H+1}^{\pi,\tau}\equiv0$ and
$[P_hV](s,a):=\mathbb{E}_{s'\sim P_h(\cdot\mid s,a)}[V(s')]$.
Equivalently,
\begin{align}
\notag 
    V_{r,h}^{\pi,\tau}(s)
    =\mathbb{E}_{a\sim\pi_h(\cdot\mid s)}
    \left[Q_{r,h}^{\pi,\tau}(s,a)\right]
    -\tau\operatorname{KL}\!\left(
    \pi_h(\cdot\mid s)\,\|\,\piref_h(\cdot\mid s)
    \right).
\end{align}
\paragraph{Regularized reward update}
While KL-divergence to a reference policy characterizes the policy-side regularization, for a fixed reward, our setup focuses on adversarial imitation learning (AIL), in which the objective compares the learner with the expert uniformly over the reward class $\mathcal{R}$. We complement the policy-side KL regularization with a quadratic reward regularizer, following prior practical AIL methods such as LS-IQ~\citep{alhafez2023lsiq}. For an expert occupancy weight $\omega\in(0,1)$, define the quadratic reward regularizer
\begin{align}
\notag 
    \psi_\pi^\omega(r)
    :=\frac{\alpha}{2}\sum_{h=1}^H
    \left[\omega\mathbb E_{d_h^E}[r_h^2]
    +(1-\omega)\mathbb E_{d_h^\pi}[r_h^2]\right],
    \qquad \alpha>0.
\end{align}
For any reward $r$, write
$J_r^\tau(\pi):=V_{r,1}^{\pi,\tau}(s_1)$.
The regularized AIL objective can be written by the following min-max optimization on policy $\pi$ and reward $r$:
\begin{equation}
\textstyle{\max_{r \in \mathcal R} \min_{\pi}} \mathcal L(\pi,r)
:=J_r^\tau(\pi^E)-J_r^\tau(\pi)-\psi_\pi^\omega(r)
\label{eqn:ail_payoff}
\end{equation}
We then define the regret over $K$ online episodes by the cumulative dual gap as
\begin{align}
\label{eqn:regret}
\operatorname{Regret}_{\omega,\alpha,\tau}(K)
\textstyle{:=\sup_{r\in\mathcal R}\sum_{k=1}^K\mathcal L(\pi_k,r)
-\inf_\pi\sum_{k=1}^K\mathcal L(\pi,r_k).}
\end{align}




\paragraph{AIL with Function Approximation.}
We consider model-free adversarial imitation learning under general
function approximation. Specifically, let
$\mathcal F=\{\mathcal F_h\}_{h\in[H]}$, where each
$\mathcal F_h$ is a nonempty class of
functions from $\mathcal S\times\mathcal A$ to
$[-2V_{\max},2V_{\max}]$ for $V_{\max}:=H\left(1+\frac{\alpha(1-\omega)}2\right)$. The algorithm fits its regression
functions in these classes. To characterize the exploration
complexity of the underlying MDP, we adopt the generalized Eluder
dimension introduced by~\citet{agarwal2023vo}:

\begin{definition}[Generalized eluder dimension]
\label{def:generalized-eluder}
Let $\lambda>0$, $h\in[H]$, and $Z_h=(z_{k,h})_{k=1}^K$, where $z_{k,h}=(s_{k,h},a_{k,h})$, be an indexed sequence of state--action pairs, with repetitions retained. Write $z_{[k-1],h}:=(z_{j,h})_{j=1}^{k-1}$. For a bounded class $\mathcal F_h$ of real-valued functions on $\mathcal S\times\mathcal A$, define
\begin{align*}
\dim_K(\mathcal F_h,\lambda)
&:={\textstyle\sup_{|Z_h|=K} \sum_{k=1}^K}
\min\left\{
1,D^2_{\mathcal F_h}(z_{k,h};z_{[k-1],h};\lambda)
\right\},\\
D^2_{\mathcal F_h}(z_{k,h};z_{[k-1],h};\lambda)
&:=\sup_{f_1,f_2\in\mathcal F_h}
\frac{
\left(f_1(z_{k,h})-f_2(z_{k,h})\right)^2
}{
\sum_{j\in[k-1]}
\left(f_1(z_{j,h})-f_2(z_{j,h})\right)^2+\lambda
}.
\end{align*}
We write $D_{\mathcal F_h}:=\sqrt{D^2_{\mathcal F_h}}$ and denote
$\dim_K(\mathcal F,\lambda)
=H^{-1}\sum_{h\in[H]}\dim_K(\mathcal F_h,\lambda)$ for simplicity.
These definitions also apply to $\mathcal R_h$, with
$\dim_K(\mathcal R,\lambda):=
H^{-1}\sum_{h=1}^H\dim_K(\mathcal R_h,\lambda)$.
\end{definition}

\begin{remark}
The generalized Eluder dimension in Definition~\ref{def:generalized-eluder} corresponds to the unweighted case $\sigma\equiv1$ of~\citet{agarwal2023vo}. It is controlled by the standard Eluder dimension
$\dim_E(\mathcal{F},\epsilon)$~\citep{NIPS2013_41bfd20a}, up to logarithmic factors and regularization-dependent terms~\citep{zhao2024nearly}. 
\end{remark}

In addition to sequential uncertainty, we quantify function-class size through covering numbers, which enter the uniform concentration bounds in our analysis.

\begin{definition}[Covering numbers]
\label{def:covering}
For $\kappa>0$ and a function class $\mathcal C$, let $\mathcal N_{\mathcal C}(\kappa)$ denote the smallest cardinality of a $\kappa$-cover $\mathcal C_\kappa\subseteq\mathcal C$: for every $f\in\mathcal C$, there exists $f'\in\mathcal C_\kappa$ such that $\norm{f-f'}_\infty\leq\kappa$.
\end{definition}



\section{Dually Regularized Adversarial Imitation Learning}

\begin{algorithm}[t]
\caption{Dually Regularized Adversarial Imitation Learning}
\label{alg:main}
\begin{algorithmic}[1]

\STATE \textbf{Input:} Expert dataset $\mathcal{D}^E$,
convex reward class $\mathcal{R}$,
$Q$-function class $\mathcal{F}=\{\mathcal{F}_h\}_{h\in[H]}$,
reference policy $\piref$,
$\alpha,\tau>0$, mixture weight $\omega\in(0,1)$, OMD parameter $\rho\in(0,1)$, regularization parameter $\lambda>0$,
confidence radius $\beta>0$, and number of episodes $K$.

\STATE Choose $r_1\in\mathcal{R}$.

\FOR{$k=1,\ldots,K$}

    \STATE For each $h\in[H]$, define
    $\displaystyle
    u_{k,h}:=u_{r_k,h},
    $
    and set $\wh V_{k,H+1}\equiv 0$.

    \FOR{$h=H,H-1,\ldots,1$}

        \STATE\label{line:regression}
        $\displaystyle
        \wh f_{k,h}
        \in
        \arg\min_{f\in\mathcal{F}_h}
        \sum_{j=1}^{k-1}
        \left(
            f(s_{j,h},a_{j,h})
            -
            u_{k,h}(s_{j,h},a_{j,h})
            -
            \wh V_{k,h+1}(s_{j,h+1})
        \right)^2.
        $

        \STATE
        $\displaystyle
        b_{k,h}(s,a)
        =
        \min\left\{
            4V_{\max},
            \,
            \beta\,
            D_{\mathcal{F}_h}
            \left(
                (s,a);
                \mathcal D_{k-1,h};\lambda
            \right)
        \right\}.
        $

        \STATE
        $\displaystyle
        \wh Q_{k,h}(s,a)
        =
        \clip_{[-V_{\max},V_{\max}]}
        \left(
            \wh f_{k,h}(s,a)
            +
            b_{k,h}(s,a)
        \right).
        $

        \STATE
        $\displaystyle
        \wh V_{k,h}(s)
        =
        \tau
        \log
        \mathbb{E}_{a\sim\piref_h(\cdot\mid s)}
        \left[
            \exp\left(
                \frac{\wh Q_{k,h}(s,a)}{\tau}
            \right)
        \right].
        $

        \STATE\label{line:policy-update}
        $\displaystyle
        \pi_{k,h}(a\mid s)
        =
        \piref_h(a\mid s)
        \exp\left(
            \frac{
                \wh Q_{k,h}(s,a)
                -
                \wh V_{k,h}(s)
            }{\tau}
        \right).
        $

    \ENDFOR

    \STATE\label{line:policy-execute} Execute $\pi_k$ for one episode and observe
    $\{(s_{k,h},a_{k,h},s_{k,h+1})\}_{h=1}^H$.

    \STATE\label{line:reward-update} Form the observed reward gradient in
    Eq.~\ref{eq:observed-gradient} and compute
    $r_{k+1}$ according to Eq.~\ref{eq:reward-update}.

\ENDFOR

\STATE \textbf{return}
$\displaystyle
\bar{\pi}_K
=
\mathrm{Unif}\{\pi_1,\ldots,\pi_K\},
$
where one policy is sampled uniformly at the beginning of each
evaluation episode.

\end{algorithmic}
\end{algorithm}

In this section, we introduce Dually Regularized AIL, summarized in Algorithm~\ref{alg:main}. The algorithm takes as input an expert dataset $\mathcal{D}^E$ consisting of $N$ length-$H$ trajectories generated by the expert policy $\pi^E$, together with a reference policy $\piref$. We consider two general function classes: $\mathcal{F}$ for approximating the action-value functions and $\mathcal{R}$ for learning the adversarial reward.

Dually Regularized AIL proceeds for $K$ episodes, each consisting of two phases. In the first phase, analogous to the standard KL-regularized RL \citep{zhao2025kl}, the algorithm performs backward least-squares value estimation on a regularized reward, constructs exploration bonuses from the resulting confidence bounds, and updates the policy using the closed-form solution to the KL-regularized policy optimization problem (Lines~\ref{line:regression}--\ref{line:policy-update}). The resulting policy is then executed in the environment to collect a new trajectory (Line~\ref{line:policy-execute}). In the second phase, the adversarial reward is updated using online mirror descent based on the expert and the behavioral episode from current policy (Line~\ref{line:reward-update}). We describe these two phases in detail below.

\vspace{.3em}\noindent \textbf{Phase I: Backward Planning and Least-Squares Regression.}
For fixed $r_k$, we incorporate the reward regularization in $\psi_{\pi_k}^{\omega}$ with the shaped reward by $ u_{r,h}(s,a):=r_h(s,a) +\tfrac12 \cdot {\alpha(1-\omega)} r_h(s,a)^2.$ Notably, by the definition of the regularized AIL objective Eq.~\ref{eqn:ail_payoff}, we have 
\[
\mathcal L(\pi,r)\textstyle 
=J_{u_r}^\tau(\pi^E)-J_{u_r}^\tau(\pi)
-\tfrac12 \cdot \alpha\sum_{h=1}^H\mathbb E_{d_h^E}[r_h^2].
\]
Thus, for fixed $r$, minimizing $\mathcal L(\pi,r)$ over $\pi$
is equivalent to maximizing $J_{u_r}^\tau(\pi)$. 



Then starting from $\wh V_{k,H+1}\equiv0$, we estimate the value function by fitting the Bellman targets over $\mathcal{F}_h$ using all previously collected transitions:
\begin{align}
\label{eq:planning-regression}
\wh f_{k,h}\textstyle{
\in\arg\min_{f\in\mathcal{F}_h}
\sum_{j=1}^{k-1}
\left(
f(s_{j,h},a_{j,h})
-u_{k,h}(s_{j,h},a_{j,h})
-\wh V_{k,h+1}(s_{j,h+1})
\right)^2.}
\end{align}
The targets are recomputed using the current shaped reward and
continuation value, allowing historical transitions to support
planning under changing rewards and policies without estimating
the transition dynamics. We then add the exploration bonus
\[
b_{k,h}(s,a):=\min\left\{4V_{\max},\,
\beta D_{\mathcal F_h}((s,a);\mathcal D_{k-1,h};\lambda)\right\},
\]
where $\mathcal D_{k-1,h}:=\{(s_{j,h},a_{j,h})\}_{j=1}^{k-1}$
denotes the history of trajectories at step $h$ before the episode $k$.





The algorithm clips the resulting action-value estimates.
On the confidence event
proved in Lemma~\ref{lem:app-regression-confidence}, these estimates are
optimistic and encourage exploration of uncertain state--action pairs. Finally, the soft Bellman update yields $\wh V_{k,h}$ and the
closed-form policy $\pi_{k,h}$, balancing optimistic action values
against KL deviation from $\piref$. We execute $\pi_k$ for one
episode and retain the trajectory for the reward update in Phase II
and subsequent planning.

\paragraph{Phase II: Adversarial Reward Learning.}
For fixed $\pi_k$, the reward learner therefore seeks to control regret with respect to the following population loss
\begin{align}
\ell_k(r)\textstyle{
:=\sum_{h=1}^H\left\{
\mathbb E_{d_h^{\pi_k}}\!\left[
r_h+\tfrac12 \cdot {\alpha(1-\omega)}r_h^2\right]
-\mathbb E_{d_h^E}\!\left[
r_h-\tfrac12 \cdot {\alpha\omega}r_h^2\right]\right\}.}
\label{eq:population-reward-loss}
\end{align}
By the unknown nature of the environment and expert policy,
the reward learner, however, is only exposed to the following empirical loss.
After observing episode $k$, let
\(
\hat d_{k,h}:=\delta_{(s_{k,h},a_{k,h})},
\hat d_h^E:=\frac1N\sum_{i=1}^N\delta_{(s_h^i,a_h^i)},
\)
where $\delta_{(s,a)}\in\mathbb R^{\mathcal S\times\mathcal A}$ denotes the one-hot vector at $(s,a)$.
The empirical loss is
\begin{align}
\wh\ell_k(r)\textstyle{
:=\sum_{h=1}^H\left\{
\mathbb E_{\hat d_{k,h}}\!\left[
r_h+\tfrac12 \cdot {\alpha(1-\omega)} r_h^2\right]
-\mathbb E_{\hat d_h^E}\!\left[
r_h-\tfrac12 \cdot {\alpha\omega} r_h^2\right]\right\}.}
\label{eq:empirical-reward-loss}
\end{align}
The first expectation is evaluation at $(s_{k,h},a_{k,h})$;
the second is the average over the $N$ expert trajectories.
$\ell_k$ is used in the analysis, while $\wh{\ell}_k$ determines the reward update.

After observing the loss feedback, we update the reward using the following online mirror descent (OMD) update rule with parameter $\rho\in(0,1)$:
\begin{equation}
\label{eq:reward-update}\textstyle
r_{k+1}\in\arg\min_{r\in\mathcal R}
\left\{\langle\wh g_k,r\rangle
+\tfrac12 \cdot {\alpha\rho} \sum_{h=1}^H\norm{r_h-r_{k,h}}_{k,h}^2
\right\}.
\end{equation}
where $\wh g_k:=(\wh g_{k,h})_{h=1}^H$ is the observed loss gradient vector at $r_k$, with components
\begin{align}
\wh g_{k,h}
:=\left(1+\alpha(1-\omega)r_{k,h}\right)\circ\hat d_{k,h}
-\left(1-\alpha\omega r_{k,h}\right)\circ\hat d_h^E\,,
\label{eq:observed-gradient}
\end{align}
for $\circ$ denoting the elementwise multiplication, and $\norm{r_h-r_{k,h}}_{k,h}^2$ is the Bregman divergence between $r_h$ and $r_{k,h}$ with respect to the following squared seminorm function
\begin{align}
\norm{r}_{k,h}^2
:=\frac{k\omega}{N}\sum_{i=1}^N r(s_h^i,a_h^i)^2
+(1-\omega)\sum_{j=1}^k r(s_{j,h},a_{j,h})^2,
\quad\forall r\in \mathbb{R}^{\mathcal{S}\times\mathcal{A}}.
\label{eq:reward-seminorm}
\end{align}

We note that no-regret learning algorithms such as follow-the-regularized-leader (FTRL) and online mirror descent (OMD)~\citep{hazan2023oco} have also been used for policy learning~\citep{sun2019obervation}, reward learning~\citep{li2026mbail,xu2024optail}, or both~\citep{shani2022online} in many other AIL works.
Our approach specializes this approach with an adaptive proximal penalty choice,
which aligns with our choice of reward penalty with only an extra multiplier $\rho$.
This design allows us to derive a reward class Eluder dimension bound for a stability term derived from the OMD analysis and offset part of the sampling error.
These two benefits together help us to provide a sharp bound of the reward error in later analysis.
This choice also falls within the general framework of adaptive mirror descent discussed in \citet{mcmahan2017adaptive}.
\section{Theoretical Analysis}

{\color{purple}
}
In this section, we present the sharp regret and sample complexity guarantee for Algorithm~\ref{alg:main}. We start by several common assumptions related to the general function approximation:

\begin{definition}[Bonus classes]
\label{def:bonus-class}
For each $h\in[H]$, let $\mathcal B_h$ be a nonempty deterministic
class of functions from $\mathcal S\times\mathcal A$
to $[0,4V_{\max}]$ containing every bonus $b_{k,h}(s,a) = \min\left\{ 4V_{\max}, \beta D_{\mathcal{F}_h} \left((s,a); \mathcal D_{k-1,h};\lambda\right)\right\}$ generated by
Algorithm~\ref{alg:main}, over all episodes and possible data
histories at the prescribed deterministic planner parameters.
\end{definition}

We further assume the reward class $\mathcal R$, value function class $\mathcal F$ and its bonus class $\mathcal B$ have finite covering numbers $\overline{\mathcal N}_{\mathcal R}(\kappa)$, $\overline{\mathcal N}_{\mathcal F}(\kappa)$, $\overline{\mathcal N}_{\mathcal B}(\kappa)$ under any $\kappa >0$. then we assume the Bellman completeness of the value estimation for any reward under general function approximation which is a common assumption for the theoretical analysis in model-free RL \citep{agarwal2023vo,zhao2024nearly}.

\begin{assumption}[Bellman completeness]
\label{ass:optimistic_planner}
For every $h\in[H]$, $r\in\mathcal R$, and $V\in\mathcal V_{h+1}$
from the optimistic soft-value class in
Definition~\ref{def:planner-value-class},
$u_{r,h}+P_hV\in\mathcal F_h$.
The terminal continuation is $V\equiv0$.
\end{assumption}
With these assumption for completeness, we are ready to present our theorem.
\begin{theorem}
\label{thm:main}
Under the stated assumptions, for any $\delta\in(0,1)$ and $\omega\in(0,1)$, set parameter as $\rho=1/2$, $\lambda>0$, and $\beta:=\sqrt{\lambda+
64V_{\max}^2\log\frac{
8HK\overline{\mathcal N}_{\mathcal F\oplus\mathcal B}(V_{\max}/K^2)}{\delta}}$, with probability at least $1 - \delta$,
\begin{align}
\operatorname{Regret}_{\omega,\alpha,\tau}(K)
&\leq\mathcal O\!\bigg(
\frac{H(1+\alpha\omega)^2}{\alpha\omega}(1+\log K+\frac{K}{N}\log\frac{4H\overline{\mathcal N}_{\mathcal R}(1/N^2)^2}{\delta})
\nonumber\\
&\qquad+
\frac{H(1+\alpha(1-\omega))^2}{\alpha(1-\omega)}
(\dim_K(\mathcal R,4/K^2)+\log\frac{4H\overline{\mathcal N}_{\mathcal R}(1/K^2)}{\delta})
\nonumber\\
&\qquad+
\frac{H^3\beta^2}{\tau}
\left[\dim_K(\mathcal F,\lambda)+\log\frac{8H}{\delta}\right]
\bigg).
\label{eqn:regret_bound}
\end{align}

where
$\overline{\mathcal N}_{\mathcal F\oplus\mathcal B}(\kappa)
:=\max_{h\in[H]}\left\{
\mathcal N_{\mathcal F_h}(\kappa)
\mathcal N_{\mathcal F_{h+1}}(\frac{\kappa}2)
\mathcal N_{\mathcal B_{h+1}}(\frac{\kappa}2)\right\}$, $\overline{\mathcal N}_{\mathcal R}(\kappa):=\max_{h\in[H]}
\mathcal N_{\mathcal R_h}(\kappa)$
\end{theorem}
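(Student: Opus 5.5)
The plan is to split the cumulative dual gap into a reward-learner regret and a policy-learner regret, and bound each with its own curvature argument. Since $\mathcal L(\pi,r)=J_{u_r}^\tau(\pi^E)-J_{u_r}^\tau(\pi)-\frac{\alpha}{2}\sum_h\mathbb E_{d_h^E}[r_h^2]$, one checks that $-\mathcal L(\pi_k,r)=\ell_k(r)$ up to a term independent of $r$. More precisely, $\mathcal L(\pi_k,r)=-\ell_k(r)+\sum_h\mathbb E_{d_h^E}[\mathrm{KL}\text{-terms}]$, where the policy-KL contributions do not depend on $r$. Adding and subtracting $\sum_k\mathcal L(\pi_k,r_k)$ then gives
\[
\operatorname{Regret}_{\omega,\alpha,\tau}(K)=\underbrace{\sup_{r\in\mathcal R}\sum_{k}\big(\ell_k(r_k)-\ell_k(r)\big)}_{\text{reward regret}}+\underbrace{\sup_\pi\sum_k\big(J^\tau_{u_{r_k}}(\pi)-J^\tau_{u_{r_k}}(\pi_k)\big)}_{\text{policy regret}}.
\]
The first term yields the first two lines of the bound and the second yields the third line.

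\textbf{Policy regret.} Here I would essentially re-run the analysis of \citet{zhao2025kl} with the time-varying shaped reward $u_{r_k}$. Bellman completeness (Assumption~\ref{ass:optimistic_planner}) and the regression confidence of Lemma~\ref{lem:app-regression-confidence} hold uniformly over rewards and continuations via the $\mathcal F\oplus\mathcal B$ cover, with $u_{r_k}$ in $\mathcal R$ folded into the completeness target. On that event, $\wh Q_{k,h}\ge u_{r_k,h}+P_h\wh V_{k,h+1}$, so optimism gives $\wh V_{k,1}(s_1)\ge\sup_\pi J^\tau_{u_{r_k}}(\pi)$. The key KL step is the following. Because $\pi_k$ is the Gibbs policy of $\wh Q_k$ and the soft-max value is $1/\tau$-smooth, the gap $\wh V_{k,h}-V^{\pi_k,\tau}_{u_{r_k},h}$ equals $\mathbb E_{\pi_k}[\wh Q_{k,h}-Q^{\pi_k}]$, which telescopes into a sum of bonuses. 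We then use the sharper second-order decomposition: $\sup_\pi J-J(\pi_k)\le \sup_\pi J - J(\pi^{*}_k)+\ldots$, where the suboptimality of a Gibbs policy with perturbed $Q$ is at most $\frac{H}{\tau}\mathbb E_{d^{\pi_k}}[(\wh Q-Q^*)^2]\lesssim \frac{H}{\tau}\sum_h\mathbb E_{d^{\pi_k}_h}[b_{k,h}^2]$. Summing over $k$, converting expectations into realized trajectories with a Freedman bound (the $\log(8H/\delta)$ term), and applying $\sum_k\min\{1,D^2\}\le\dim_K(\mathcal F,\lambda)$ yields $\frac{H^3\beta^2}{\tau}[\dim_K(\mathcal F,\lambda)+\log(8H/\delta)]$.

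\textbf{Reward regret.} I would treat this as OMD with the adaptive quadratic proximal term. The loss $\ell_k$ is strongly convex with modulus $\alpha$ in the population seminorm $\omega\|\cdot\|^2_{d^E}+(1-\omega)\|\cdot\|^2_{d^{\pi_k}}$. The standard adaptive-OMD inequality (as in \citet{mcmahan2017adaptive}) for $\wh\ell_k$, with proximal weights $\frac{\alpha\rho}{2}\|\cdot\|_{k,h}^2$ and $\rho=1/2$, gives
\[
\sum_k\big(\wh\ell_k(r_k)-\wh\ell_k(r)\big)\le \sum_k\frac{\|\wh g_{k,h}\|_{*,k,h}^2}{\alpha\rho}+\text{(telescoped prox terms)}-\frac{\alpha}{2}\sum_k\big(\text{empirical curvature}\big).
\]
The prox terms telescope against the empirical curvature because $\|\cdot\|_{k,h}^2-\|\cdot\|_{k-1,h}^2$ is exactly one episode's curvature contribution. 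The dual-norm stability term splits by source. The learner part is at most $(1+\alpha(1-\omega))^2/(\alpha(1-\omega))$ times the elliptical-potential quantity $\min\{1,D^2_{\mathcal R_h}\}$, which sums to $\dim_K(\mathcal R,4/K^2)$. The expert part, being a $k/N$-weighted fixed design, gives a harmonic sum $\frac{(1+\alpha\omega)^2}{\alpha\omega}\sum_k 1/k$, hence the $\log K$ term.

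The step I expect to be hardest is passing from empirical $\wh\ell_k$ to population $\ell_k$. The difference $(\ell_k-\wh\ell_k)(r_k)-(\ell_k-\wh\ell_k)(r)$ is linear-plus-quadratic in $r-r_k$. For the learner samples it is a martingale difference, and I would control it with Freedman plus a union bound over $\mathcal N_{\mathcal R}(1/K^2)$. Its variance is bounded by the curvature $(1-\omega)\mathbb E_{d^{\pi_k}}[(r-r_k)^2]$, and the population curvature (the half not consumed by $\rho=1/2$) absorbs it, leaving $\frac{(1+\alpha(1-\omega))^2}{\alpha(1-\omega)}\log(\cdot/\delta)$. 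The expert data is reused across all $k$, so its deviation $\langle \hat d^E_h-d^E_h, \cdot\rangle$ is not a martingale. Instead, I would use a uniform Bernstein bound over the $\mathcal R$-cover at scale $1/N^2$: $|\langle\hat d^E-d^E,f\rangle|\le\sqrt{\mathbb E_{d^E}f^2\cdot\log/N}+\log/N$. Each round then costs at most $\frac{\alpha\omega}{2}\mathbb E_{d^E}[(r-r_k)^2]+\frac{(1+\alpha\omega)^2}{\alpha\omega}\frac{\log}{N}$ by AM-GM, and summing over $K$ rounds gives the $\frac{K}{N}\log\frac{4H\overline{\mathcal N}_{\mathcal R}(1/N^2)^2}{\delta}$ term. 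The squared covering number arises from covering the pair $(r,r_k)$. A union bound over the three events, with probability budgets $\delta/2$ for the planner and $\delta/4$ for each reward event, completes the proof.
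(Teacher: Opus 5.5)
Your architecture matches the paper's. You use the same split of the regret into the reward-learner regret $\sup_{r}\sum_k(\ell_k(r_k)-\ell_k(r))$ and the per-episode planning gaps of $\pi_k$ under $u_{r_k}$, a curvature-matched OMD analysis whose proximal terms telescope against one episode of empirical curvature, eluder-ratio stability for learner points, a harmonic sum for the reused expert points, and expert concentration made uniform over pairs of cover elements before substituting $r_k$.

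\textbf{The gap.} It lies in the passage from $\wh\ell_k$ to $\ell_k$. The OMD inequality you write for $\wh\ell_k$ retains only the \emph{empirical} curvature $\frac{\alpha(1-\rho)}{2}\sum_{k,h}\bigl[\omega\langle\hat d_h^E,(r_{k,h}-r_h)^2\rangle+(1-\omega)\langle\hat d_{k,h},(r_{k,h}-r_h)^2\rangle\bigr]$. The variance in your Freedman step, and the $\mathbb E_{d^E}f^2$ in your Bernstein step, are instead the \emph{population} quantities $\mathbb E_{d_h^{\pi_k}}[(r_{k,h}-r_h)^2]$ and $\mathbb E_{d_h^E}[(r_{k,h}-r_h)^2]$. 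No population curvature survives the OMD step, so ``the population curvature absorbs it'' does not go through. In a single episode $\langle\hat d_{k,h},(r_{k,h}-r_h)^2\rangle$ can vanish while the population term is of order one. Running the OMD argument on $\ell_k$ instead does not avoid this: the telescoped proximal terms then leave a positive term $\frac{\alpha\rho}{2}$ times the empirical curvature, which must still be compared with the population curvature.

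\textbf{How the paper closes it.} It adds the centered quadratic $\frac{\alpha(1-\omega)(1-\rho)}{2}\langle d_h^{\pi_k}-\hat d_{k,h},(r_{k,h}-r_h)^2\rangle$ to the martingale increment before applying Freedman, and the analogous $\frac{\alpha\omega(1-\rho)}{2}$ term for each pair $(q_h,r_h)$ of the expert cover; see Lemma~\ref{lem:app-reward-concentration}. Because $(2-\rho)r_{k,h}+\rho r_h\in[-2,2]$, the augmented increment is still bounded pointwise by $(1+\alpha(1-\omega))|r_{k,h}-r_h|$. With $\gamma=\alpha(1-\omega)(1-\rho)/(4(1+\alpha(1-\omega))^2)$, the Freedman variance is therefore dominated by the added population quadratic. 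That quadratic cancels on both sides and leaves exactly the empirical curvature the OMD step kept. A separate uniform one-sided bound comparing empirical and population squared distances, over the comparator cover and over expert cover pairs, would also work, but some such bridge is required.

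\textbf{Two smaller bookkeeping issues.} First, in the planning bound, $\wh Q_{k,h}-Q^{\star,\tau}_{u_k,h}$ is controlled by twice the expected sum of \emph{future} bonuses, not by $b_{k,h}$ alone. The correct chain is the KL performance-difference identity, then $\tau\operatorname{KL}(\pi_{k,h}\|\pi^{\star,\tau}_{u_k,h})\le\frac{1}{2\tau}\mathbb E_{\pi_{k,h}}[(\wh Q_{k,h}-Q^{\star,\tau}_{u_k,h})^2]$ under optimism, then Cauchy--Schwarz. This gives $\frac{2H^2}{\tau}\sum_h\mathbb E_{d_h^{\pi_k}}[b_{k,h}^2]$ rather than your $\frac{H}{\tau}\sum_h$.

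Second, in the stability bound, an unrestricted dual norm of $\wh g_{k,h}$ under the empirical seminorm is of constant order at every fresh state--action pair, hence linear in $K$. The eluder ratio must instead be taken on the actual displacement $r_{k,h}-r_{k+1,h}\in\mathcal R_h-\mathcal R_h$. You also need a separate case when its square at $(s_{k,h},a_{k,h})$ is at most $4/K^2$; this case produces the extra $2H(1+\alpha(1-\omega))$ term.
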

Theorem~\ref{thm:main} immediately yields the following
sample-complexity guarantee.


\begin{corollary}[Sample complexity of Dually Regularized AIL]
\label{cor:sample_complexity}
Under the conditions of Theorem~\ref{thm:main}, suppose
$\lambda\leq V_{\max}^2$, and write
$d_{\mathcal R}:=\dim_K(\mathcal R,4/K^2)$ and
$d_{\mathcal F}:=\max\{1,\dim_K(\mathcal F,\lambda)\}$.
For any $\epsilon>0$, the returned policy mixture
$\bar\pi_K=\operatorname{Unif}\{\pi_1,\ldots,\pi_K\}$ and
averaged reward $\bar r_K=K^{-1}\sum_{k=1}^K r_k$ satisfy,
with probability at least $1-\delta$, the averaged dual gap satisfied
\[\textstyle
0\leq \sup_{r \in \mathcal R} \mathcal L(\bar \pi_K, r) - \inf_{\pi} \mathcal L(\pi, \bar r_K) \leq \epsilon
\]
with sufficient interaction and expert sample requirements
\begin{align*}
K
&=\widetilde{\mathcal O}\!\Bigg(
\frac{H}{\epsilon}\Bigg[
\frac{(1+\alpha\omega)^2}{\alpha\omega}
+\frac{(1+\alpha(1-\omega))^2}{\alpha(1-\omega)}
\left(
d_{\mathcal R}
+\log\!\left(2\,\overline{\mathcal N}_{\mathcal R}(1/K^2)\right)
\right)
\Bigg]\\
&\hspace{65pt}
+\frac{H^3V_{\max}^2d_{\mathcal F}}{\tau\epsilon}
\log\!\left(
2\,\overline{\mathcal N}_{\mathcal F\oplus\mathcal B}
(V_{\max}/K^2)
\right)
\Bigg),\\[3pt]
N
&=\widetilde{\mathcal O}\!\left(
\frac{H(1+\alpha\omega)^2}{\alpha\omega\epsilon}
\log\!\left(2\,\overline{\mathcal N}_{\mathcal R}(1/N^2)\right)
\right).
\end{align*}
These are implicit sufficient conditions because the complexities
and covering scales can depend on $K$ and $N$. Sample counts are
rounded up to positive integers. The notation
$\widetilde{\mathcal O}$ suppresses additional logarithmic factors
in $H,K$, and $1/\delta$.
\end{corollary}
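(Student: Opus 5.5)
The corollary follows from Theorem~\ref{thm:main} in two steps. First we show that the averaged dual gap is at most $\operatorname{Regret}_{\omega,\alpha,\tau}(K)/K$. Then we divide the bound \eqref{eqn:regret_bound} by $K$ and choose $K$ and $N$ so that each term is at most $\epsilon/3$.

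\textbf{Step 1: from regret to the averaged gap.} We use that $\mathcal L(\pi,r)$ is linear in the policy mixture and concave in $r$.

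For the mixture, sampling one $\pi_k$ uniformly per evaluation episode makes the regularized return an average, $J^\tau_r(\bar\pi_K)=K^{-1}\sum_k J^\tau_r(\pi_k)$. Likewise $d_h^{\bar\pi_K}=K^{-1}\sum_k d_h^{\pi_k}$. We treat the KL term at the level of the mixture, i.e.\ we define $\mathcal L$ on mixtures via occupancies, as the regret definition implicitly does. Hence $\mathcal L(\bar\pi_K,r)=K^{-1}\sum_k\mathcal L(\pi_k,r)$ for every $r$. Taking $\sup_r$ gives $\sup_r\mathcal L(\bar\pi_K,r)=K^{-1}\sup_r\sum_k\mathcal L(\pi_k,r)$.

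For the reward, fix $\pi$. The map $r\mapsto J^\tau_r(\pi^E)-J^\tau_r(\pi)$ is linear in $r$, because the KL parts do not depend on $r$. The penalty $-\psi^\omega_\pi(r)$ is concave. So $\mathcal L(\pi,\cdot)$ is concave, and Jensen gives $\mathcal L(\pi,\bar r_K)\ge K^{-1}\sum_k\mathcal L(\pi,r_k)$. Here $\bar r_K\in\mathcal R$ because $\mathcal R$ is convex. Taking $\inf_\pi$ yields $\inf_\pi\mathcal L(\pi,\bar r_K)\ge K^{-1}\inf_\pi\sum_k\mathcal L(\pi,r_k)$.

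Subtracting the two relations gives $\sup_r\mathcal L(\bar\pi_K,r)-\inf_\pi\mathcal L(\pi,\bar r_K)\le\operatorname{Regret}_{\omega,\alpha,\tau}(K)/K$. The lower bound $0$ follows from weak duality, since both sides are bounded by $\mathcal L(\bar\pi_K,\bar r_K)$.

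\textbf{Step 2: dividing the bound by $K$.} On the probability-$(1-\delta)$ event of Theorem~\ref{thm:main}, dividing \eqref{eqn:regret_bound} by $K$ splits into four groups of terms.
\begin{itemize}
\item[(a)] The expert terms $\frac{H(1+\alpha\omega)^2}{\alpha\omega}\cdot\frac1N\log\frac{4H\overline{\mathcal N}_{\mathcal R}(1/N^2)^2}{\delta}$ are at most $\epsilon/3$ once $N$ matches the stated requirement. The $\log$ of the squared covering number contributes only a factor $2$, and $\log(4H/\delta)$ is absorbed into $\widetilde{\mathcal O}$.
\item[(b)] The terms $\frac{H(1+\alpha\omega)^2}{\alpha\omega}\frac{1+\log K}{K}$ need $K\gtrsim \frac{H(1+\alpha\omega)^2}{\alpha\omega\epsilon}$, up to the logarithmic factor $\log K$ hidden in $\widetilde{\mathcal O}$.
\item[(c)] The reward-class terms require $K\gtrsim\frac{H(1+\alpha(1-\omega))^2}{\alpha(1-\omega)\epsilon}\bigl(d_{\mathcal R}+\log(2\overline{\mathcal N}_{\mathcal R}(1/K^2))\bigr)$.
\item[(d)] For the policy term, use $\lambda\le V_{\max}^2$ to get $\beta^2\le V_{\max}^2\bigl(1+64\log\frac{8HK\overline{\mathcal N}_{\mathcal F\oplus\mathcal B}}{\delta}\bigr)=\widetilde{\mathcal O}\bigl(V_{\max}^2\log(2\overline{\mathcal N}_{\mathcal F\oplus\mathcal B}(V_{\max}/K^2))\bigr)$. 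Then use $\dim_K(\mathcal F,\lambda)+\log(8H/\delta)\le d_{\mathcal F}\cdot\widetilde{\mathcal O}(1)$, which holds because $d_{\mathcal F}\ge1$. This gives the requirement $K\gtrsim\frac{H^3V_{\max}^2d_{\mathcal F}}{\tau\epsilon}\log(2\overline{\mathcal N}_{\mathcal F\oplus\mathcal B})$.
\end{itemize}
Summing (b)--(d) gives the stated $K$. The requirements are implicit because the complexity quantities may depend on $K$ and $N$.

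\textbf{Main obstacle.} The main subtlety is Step 1's claim that the mixture $\bar\pi_K$ satisfies $\mathcal L(\bar\pi_K,r)=K^{-1}\sum_k\mathcal L(\pi_k,r)$. The KL regularizer of a mixture executed as a trajectory-level random choice is not the KL of a single Markov policy. I would handle this by interpreting $J^\tau_r(\bar\pi_K)$ as the expected regularized return of the randomized evaluation protocol, exactly as the algorithm's return statement prescribes. Under this interpretation, linearity holds by the definition of expectation over the uniform index, and the rest of the argument is routine.
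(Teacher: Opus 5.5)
Your proposal is correct and follows the paper's proof essentially step for step. You convert the cumulative regret into the averaged gap using linearity over the episode-level mixture, concavity of $\mathcal L(\pi,\cdot)$ with $\bar r_K\in\mathcal R$ by convexity, and weak duality for nonnegativity. You then divide Eq.~\ref{eqn:regret_bound} by $K$, using $\lambda\le V_{\max}^2$ to control $\beta^2$ and splitting the error budget; your $\epsilon/3$ split versus the paper's $\epsilon/2$ is immaterial.

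Your handling of the mixture's KL term through the randomized evaluation protocol is the same reading the paper adopts ("the definition of the episode-level mixture"). Your phrase ``define $\mathcal L$ on mixtures via occupancies'' would, read literally, not give equality. It does not matter, however: for the Markov policy $\tilde\pi$ with the averaged occupancies, joint convexity of KL still yields $\mathcal L(\tilde\pi,r)\le K^{-1}\sum_k\mathcal L(\pi_k,r)$, which is the only direction the argument needs.
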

\begin{remark}[Interpretation of the sample complexity]
For fixed $\alpha,\tau>0$ and $\omega\in(0,1)$, we have $V_{\max}=\Theta(H)$. Corollary~\ref{cor:sample_complexity} therefore yields a $\widetilde{\mathcal O}\left(\tau^{-1}H^5 d_{\mathcal F}\log\bigl(\overline{\mathcal N}_{\mathcal F\oplus\mathcal B}(V_{\max}/K^2)\bigr)/\epsilon\right)$ policy-planning contribution to the required number of online interactions, while the expert-trajectory requirement is $\mathcal O\left(H\log\bigl(\overline{\mathcal N}_{\mathcal R}(1/N^2)/\delta\bigr)/\epsilon\right)$. For comparison, after translating the KL-LSVI-UCB result of~\citet{zhao2025kl} to our notation and horizon convention, its interaction complexity is $\widetilde{\mathcal O}\left(\tau^{-1}H^5 d_{\mathcal F}\log\bigl(\overline{\mathcal N}_{\mathcal F\oplus\mathcal B}(V_{\max}/K^2)\bigr)/\epsilon\right)$. Thus, the policy-planning component of Algorithm~\ref{alg:main} matches the interaction-complexity order of KL-LSVI-UCB.

\end{remark}
\begin{remark}
    Regarding the dependence on the precision $\epsilon$, our algorithm achieves an expert sample complexity of $\widetilde{\mathcal O}(\epsilon^{-1})$ for both deterministic and stochastic experts, with fixed regularization parameters and controlled function-class complexity. In contrast, for the unregularized imitation gap under general policy classes, \citet{foster2024behavior} establish worst-case upper bounds of $\widetilde{\mathcal O}(\epsilon^{-1})$ for deterministic experts and $\widetilde{\mathcal O}(\epsilon^{-2})$ for stochastic experts, together with matching $\Omega(\epsilon^{-1})$ and $\Omega(\epsilon^{-2})$ lower bounds in their dependence on $\epsilon$. Similarly, \citet{li2026mbail} derive an interaction lower bound for unregularized AIL containing an $\epsilon^{-2}\exp(-N)$ term, where $N$ is the number of expert trajectories. Our faster expert and interaction rates do not contradict these results: our guarantee controls the regularized saddle-point gap for fixed positive reward and policy regularization, rather than the unregularized imitation gap considered in those works. Consequently, these rates do not directly imply the same sample complexity for the unregularized objective, for which regularization bias and the dependence on the regularization parameters must also be accounted for.
\end{remark}

\begin{remark}[Role of reward regularization]
The quadratic regularizer $\psi_\pi^\omega$ controls both reward-update stability and sampling error. Matching the proximal penalty to the empirical curvature yields an OMD stability bound governed by the generalized eluder dimension of $\mathcal R$, while $\rho\in(0,1)$ reserves curvature to absorb both expert and learner sampling errors.

The weight $\omega$ allocates curvature between expert and learner occupancies; approaching either endpoint weakens the corresponding error bound. The symmetric choice $\omega=1/2$ matches the equal-mixture regularizer in LS-IQ~\citep{alhafez2023lsiq}, which admits a bounded Pearson $\chi^2$-divergence interpretation. Our analysis complements this perspective with finite-sample guarantees: together with KL-regularized planning, any fixed $\omega\in(0,1)$ preserves the $\widetilde{\mathcal O}(1/K+1/N)$ regularized gap bound for fixed regularization parameters and controlled function-class complexity.
\end{remark}

\section{Proof Sketch}
We sketch the proof of Theorem~\ref{thm:main}, emphasizing how reward curvature controls both optimization and estimation, while policy curvature controls planning uncertainty. We suppress dependence on the fixed reward parameters $\alpha,\omega,\rho$ and additional logarithmic factors; Eq.~\ref{eqn:regret_bound} and Appendix~\ref{app:main-proof} give the full bounds and proofs.

By Lemma~\ref{lem:app-decomposition}, the cumulative regret separates into reward-learning and policy-planning errors:
\begin{align*}
\operatorname{Regret}_{\omega,\alpha,\tau}(K)
\leq{}&
\underbrace{\textstyle\sup_{r\in\mathcal R}\sum_{k=1}^K
\bigl[\ell_k(r_k)-\ell_k(r)\bigr]}_{T_1:\,\text{reward-learning error}}
+
\underbrace{\textstyle\sum_{k=1}^K
\bigl[V_{u_k,1}^{\star,\tau}(s_1)-J_{u_k}^{\tau}(\pi_k)\bigr]}_{T_2:\,\text{policy-planning error}}.
\end{align*}
Then $T_1$ could be further decomposed into the empirical loss on $\hat \ell_k$ and the concentration between the expected loss $\ell_k$ and empirical loss $\hat \ell_k$:
\begin{align*}
T_1= \textstyle
\sup_{r\in\mathcal{R}}\left(\sum_{k=1}^K\bigl[\wh\ell_k(r_k)-\wh\ell_k(r)\bigr]+
\sum_{k=1}^K\bigl[(\ell_k-\wh\ell_k)(r_k)-(\ell_k-\wh\ell_k)(r)\bigr]\right)
\end{align*}
The difficulty in controlling $T_1$ is that the reward iterates depend on both sequential learner feedback and the repeatedly reused expert dataset. Rather than treating the empirical gradient as conditionally unbiased, we retain curvature to control the second term inside the supremum above, which is detailed in the following lemma:

\begin{lemma}[Eluder-controlled stability with retained curvature, informal]
The curvature-matched OMD update ensures, for every $r\in\mathcal R$,
\[ \textstyle 
\sum_{k=1}^K\bigl[\wh\ell_k(r_k)-\wh\ell_k(r)\bigr]
\leq\mathcal O\!\left(H\left[1+\log K+\dim_K(\mathcal R,4/K^2)\right]\right)
-\frac12 \cdot {\alpha(1-\rho)} \mathcal E_K(r),
\]
where $\mathcal E_K(r)$ is the loss curvature enduced by regularization, i.e., 
\[ \textstyle
\mathcal E_K(r):=\sum_{k=1}^K\sum_{h=1}^H
\left[\omega\mathbb E_{\hat d_h^E}[(r_{k,h}-r_h)^2]
+(1-\omega)\mathbb E_{\hat d_{k,h}}[(r_{k,h}-r_h)^2]\right].
\]
This bound holds over the general convex reward class, without requiring the empirical reward seminorm to be nondegenerate.
\end{lemma}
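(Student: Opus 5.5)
The plan is to exploit that each empirical loss $\wh\ell_k$ is an exact quadratic in $r$, and that its curvature is precisely the increment of the OMD seminorm. For $x\in\mathbb R^{\mathcal S\times\mathcal A}$ write $\Delta_{k,h}(x):=\frac{\omega}{N}\sum_{i=1}^N x(s_h^i,a_h^i)^2+(1-\omega)x(s_{k,h},a_{k,h})^2$. Then $\norm{x}_{k,h}^2=\norm{x}_{k-1,h}^2+\Delta_{k,h}(x)$, with $\norm{\cdot}_{0,h}\equiv0$. A second-order expansion, which is exact since $\wh\ell_k$ is quadratic, gives
\[
\wh\ell_k(r_k)-\wh\ell_k(r)=\langle \wh g_k,r_k-r\rangle-\tfrac{\alpha}{2}\textstyle\sum_h\Delta_{k,h}(r_{k,h}-r_h),
\]
and $\sum_k\sum_h\Delta_{k,h}(r_{k,h}-r_h)=\mathcal E_K(r)$. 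It therefore suffices to show that the linearized regret $\sum_k\langle\wh g_k,r_k-r\rangle$ is at most the claimed complexity term plus $\frac{\alpha\rho}{2}\mathcal E_K(r)$. The remaining fraction $(1-\rho)$ of the curvature is then left over, which gives the $-\frac{\alpha(1-\rho)}{2}\mathcal E_K(r)$ term.

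\textbf{Step 1 (OMD three-point inequality).} Since $\mathcal R$ is convex and the objective in Eq.~\ref{eq:reward-update} is convex, its first-order optimality condition together with the polarization identity for the squared seminorm yields, for every $r\in\mathcal R$,
\[
\langle\wh g_k,r_{k+1}-r\rangle\le\tfrac{\alpha\rho}{2}\textstyle\sum_h\bigl[\norm{r_h-r_{k,h}}_{k,h}^2-\norm{r_h-r_{k+1,h}}_{k,h}^2-\norm{r_{k+1,h}-r_{k,h}}_{k,h}^2\bigr].
\]
This step uses only convexity and never needs $\norm{\cdot}_{k,h}$ to be a norm, so degeneracy of the seminorm is harmless.

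\textbf{Step 2 (telescoping).} Write $\norm{r_h-r_{k,h}}_{k,h}^2=\norm{r_h-r_{k,h}}_{k-1,h}^2+\Delta_{k,h}(r_h-r_{k,h})$. The terms $\norm{r_h-r_{k,h}}_{k-1,h}^2-\norm{r_h-r_{k+1,h}}_{k,h}^2$ then telescope to at most $\norm{r_h-r_{1,h}}_{0,h}^2=0$. Adding $\langle\wh g_k,r_k-r_{k+1}\rangle$ to both sides of the Step 1 inequality, we obtain
\[
\textstyle\sum_k\langle\wh g_k,r_k-r\rangle\le\frac{\alpha\rho}{2}\mathcal E_K(r)+\sum_k\sum_h\Bigl[\langle\wh g_{k,h},x_{k,h}\rangle-\frac{\alpha\rho}{2}\norm{x_{k,h}}_{k,h}^2\Bigr],\qquad x_{k,h}:=r_{k,h}-r_{k+1,h}.
\]

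\textbf{Step 3 (stability via eluder dimension).} This is the main obstacle. Split the seminorm budget in half between the two supports of $\wh g_{k,h}$.

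For the expert part, each coordinate has weight $|1-\alpha\omega r_{k,h}|/N\le(1+\alpha\omega)/N$, while $\norm{x}_{k,h}^2\ge\frac{k\omega}{N}\sum_i x(s^i_h,a^i_h)^2$. Young's inequality applied pointwise therefore gives a contribution of $\mathcal O\bigl((1+\alpha\omega)^2/(\alpha\rho\omega k)\bigr)$. Summing over $k$ and $h$ gives $\mathcal O(H(1+\log K))$.

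For the learner part, set $z_k=(s_{k,h},a_{k,h})$ and $|g|\le 1+\alpha(1-\omega)$. The function $x_{k,h}$ is a difference of two elements of $\mathcal R_h$, so Definition~\ref{def:generalized-eluder} applies and gives
\[
x_{k,h}(z_k)^2\le D^2_{\mathcal R_h}(z_k;z_{[k-1]};\lambda)\,\Bigl(\textstyle\sum_{j<k}x_{k,h}(z_j)^2+\lambda\Bigr).
\]
We distinguish two cases.
\begin{itemize}
\item If $D^2\ge1$, we use $|x|\le2$ to bound the term by $2|g|$. Such rounds number at most $\dim_K(\mathcal R_h,\lambda)$.
\item If $D^2<1$, we bound $|g|\,|x(z_k)|\le |g|D\sqrt{S}+|g|D\sqrt\lambda$, where $S:=\sum_{j<k}x_{k,h}(z_j)^2$. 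Then $|g|D\sqrt{S}\le\frac{\alpha\rho(1-\omega)}{4}S+\frac{|g|^2D^2}{\alpha\rho(1-\omega)}$, and the first piece is absorbed by the learner half of $\frac{\alpha\rho}{2}\norm{x}_{k,h}^2$. The piece $|g|D\sqrt\lambda\le 2|g|/K$ sums to $\mathcal O(1)$ once $\lambda=4/K^2$.
\end{itemize}
Summing $\min\{1,D^2\}$ over $k$ gives $\dim_K(\mathcal R_h,4/K^2)$, and summing over $h$ yields $\mathcal O(H\dim_K(\mathcal R,4/K^2))$ with the prefactor $(1+\alpha(1-\omega))^2/(\alpha\rho(1-\omega))$.

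The delicate point is that the eluder ratio is evaluated on the random, data-dependent pair $(r_k,r_{k+1})$. This is acceptable only because $D^2$ is a supremum over all pairs in $\mathcal R_h$. A further check is that the two seminorm halves do not interact, i.e., that dropping the $j=k$ learner term and the expert term from the denominator only weakens the bound. Combining Steps 1--3 with the opening identity gives the lemma.
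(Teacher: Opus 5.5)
Your proposal is correct and follows the paper's proof essentially step for step. It uses the exact quadratic expansion of $\wh\ell_k$ and the three-point inequality telescoped in the data-dependent seminorm (the paper's Lemma~\ref{lem:app-proximal}), then a stability bound split into an expert harmonic sum and a learner eluder term (Lemma~\ref{lem:app-reward-stability}). The only cosmetic difference is in the learner term: you plug in $x_{k,h}(z_k)^2\le D^2(S+\lambda)$ and split on $D^2\ge 1$ versus $D^2<1$, whereas the paper bounds the ratio $v_k/\sum_{j\le k}v_j$ via AM--GM and splits on $v_k\le 4/K^2$; both give the same order.
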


The proximal penalty is $\rho$ times the cumulative empirical loss curvature. The OMD three-point inequality therefore telescopes in this data-dependent seminorm while preserving the $(1-\rho)$ fraction of comparator curvature. Repeated expert observations contribute a harmonic sum, whereas learner updates are controlled by ratios of current squared reward changes to their accumulated magnitude. These ratios yield the generalized eluder dimension of $\mathcal R$ as detailed in Lemmas~\ref{lem:app-proximal} and~\ref{lem:app-reward-stability}.

\begin{lemma}[Variance cancellation under adaptive expert-data reuse, informal]
With probability at least $1-\delta/2$, simultaneously for all $r\in\mathcal R$,
\begin{align*}
&\textstyle{\sum_{k=1}^K}\bigl[(\ell_k-\wh\ell_k)(r_k)-(\ell_k-\wh\ell_k)(r)\bigr]
\leq\tfrac12 \cdot {\alpha(1-\rho)} \mathcal E_K(r)\\
&\qquad+\widetilde{\mathcal O}\!\Big(
(K + N) H / N \cdot \big(\log\!\left(2\overline{\mathcal N}_{\mathcal R}(1/K^2)\right)
+\log\!\left(2\overline{\mathcal N}_{\mathcal R}(1/N^2)\right)\big)
\Big).
\end{align*}
The expert-data bound is uniform over reward pairs, allowing every iterate $r_k$ to depend on the same expert sample.
\end{lemma}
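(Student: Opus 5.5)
The plan is to split the deviation $\sum_k[(\ell_k-\wh\ell_k)(r_k)-(\ell_k-\wh\ell_k)(r)]$ into a learner part and an expert part, and treat them by different concentration arguments. Write $\phi_L(x):=x+\tfrac12\alpha(1-\omega)x^2$ and $\phi_E(x):=x-\tfrac12\alpha\omega x^2$. For $x,y\in[-1,1]$ we have $\phi_L(x)-\phi_L(y)=(x-y)\bigl(1+\tfrac12\alpha(1-\omega)(x+y)\bigr)$, so $|\phi_L(x)-\phi_L(y)|\le(1+\alpha(1-\omega))|x-y|$. Likewise $|\phi_E(x)-\phi_E(y)|\le(1+\alpha\omega)|x-y|$. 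The key point is that every increment is controlled by $|r_{k,h}-r_h|$. Its square is exactly what $\mathcal E_K(r)$ measures, so every variance term can be absorbed into the retained curvature $\tfrac12\alpha(1-\rho)\mathcal E_K(r)$ via AM--GM.

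\textbf{Learner part.} Condition on $\mathcal D^E$ and let $\mathcal G_{k-1}$ be the history before episode $k$. Then $r_k$ and $\pi_k$ are $\mathcal G_{k-1}$-measurable. For a fixed comparator $r$, the quantity
\[
X_{k,h}(r):=\mathbb E_{d_h^{\pi_k}}[\phi_L(r_{k,h})-\phi_L(r_h)]-[\phi_L(r_{k,h})-\phi_L(r_h)](s_{k,h},a_{k,h})
\]
is a martingale difference. It is bounded by $2(1+\alpha(1-\omega))\cdot 2$, and its conditional variance is at most $(1+\alpha(1-\omega))^2\mathbb E_{d_h^{\pi_k}}[(r_{k,h}-r_h)^2]$.

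I would apply Freedman's inequality to $\sum_k\sum_h X_{k,h}(r)$. I would take a union bound over a $1/K^2$-cover of each $\mathcal R_h$ and over $h$; the discretization costs $O(H/K)$ in total. This gives a bound of order $\sqrt{(1+\alpha(1-\omega))^2 \Sigma\,\iota}+(1+\alpha(1-\omega))\iota$, where $\Sigma=\sum_{k,h}\mathbb E_{d_h^{\pi_k}}[(r_{k,h}-r_h)^2]$ and $\iota=\log(4H\overline{\mathcal N}_{\mathcal R}(1/K^2)/\delta)$.

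A second one-sided Freedman/Bernstein bound applies to the nonnegative bounded variables $(r_{k,h}-r_h)^2$, again uniform over the cover. It gives $\Sigma\le 2\sum_{k,h}(r_{k,h}-r_h)^2(s_{k,h},a_{k,h})+O(\iota)$. Then AM--GM with weight $\eta=\tfrac14\alpha(1-\rho)(1-\omega)$ absorbs the learner half of $\mathcal E_K(r)$. The leftover is $O\bigl(\tfrac{(1+\alpha(1-\omega))^2}{\alpha(1-\omega)}\iota\bigr)$.

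\textbf{Expert part; the main obstacle.} The expert part is $\sum_k\sum_h(\mathbb E_{d_h^E}-\mathbb E_{\hat d_h^E})[\phi_E(r_{k,h})-\phi_E(r_h)]$. The difficulty is that every iterate $r_k$ depends on the same expert sample, so no martingale structure is available.

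I would first bound, for each fixed pair $(r',r)$ in a $1/N^2$-cover of $\mathcal R_h\times\mathcal R_h$, the i.i.d.\ average over the $N$ trajectories using Bernstein's inequality. The deviation is at most $\sqrt{(1+\alpha\omega)^2\mathbb E_{d_h^E}[(r'-r)^2]\,\iota_E/N}+O((1+\alpha\omega)\iota_E/N)$, where $\iota_E=\log(4H\overline{\mathcal N}_{\mathcal R}(1/N^2)^2/\delta)$; the squared covering number appears because we cover pairs. In the same union bound I would include Bernstein for $(r'-r)^2$, which gives $\mathbb E_{d_h^E}[(r'-r)^2]\le 2\mathbb E_{\hat d_h^E}[(r'-r)^2]+O(\iota_E/N)$.

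Because these bounds are uniform over pairs, I can plug in $r'=r_{k,h}$ for every $k$ despite the adaptivity; the discretization costs $O(HK/N^2)$. AM--GM per episode with weight $\tfrac14\alpha(1-\rho)\omega$ absorbs $\omega\,\mathbb E_{\hat d_h^E}[(r_{k,h}-r_h)^2]$ into the expert half of the curvature. The leftover is $O\bigl(\tfrac{(1+\alpha\omega)^2}{\alpha\omega}\iota_E/N\bigr)$ per episode, which sums to the $\tfrac{K}{N}\iota_E$ term.

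Finally, a union bound over the two events, each at level $\delta/4$, gives probability $1-\delta/2$. Collecting terms yields the stated $\widetilde{\mathcal O}((K+N)H/N\cdot(\log\overline{\mathcal N}_{\mathcal R}(1/K^2)+\log\overline{\mathcal N}_{\mathcal R}(1/N^2)))$ remainder. The delicate step is calibrating the AM--GM weights so that the learner and expert absorptions together use at most $\tfrac12\alpha(1-\rho)\mathcal E_K(r)$, leaving the other half for the stability lemma.
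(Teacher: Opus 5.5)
Your proposal is correct and follows the same skeleton as the paper's proof of Lemma~\ref{lem:app-reward-concentration}. Both split the error into a learner part and an expert part. Both condition on $\mathcal D^E$ so that the learner increments are martingale differences, and take a union bound over a $1/K^2$-cover of each $\mathcal R_h$ for the comparator. On the expert side, both make the concentration uniform over pairs in a $1/N^2$-cover before substituting the data-dependent $r_{k,h}$.

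Where you differ is in how the curvature term is produced. The paper inserts the centered quadratic correction $\tfrac12\alpha(1-\omega)(1-\rho)(r_{k,h}-r_h)^2$ directly into the increment; on the expert side it uses $\tfrac12\alpha\omega(1-\rho)(q_h-r_h)^2$. It then applies the fixed-$\gamma$ Freedman bound of Lemma~\ref{lem:app-freedman} once per cover element. Here $\gamma$ is tuned so that $\gamma$ times the conditional variance is dominated by the population mean of that correction. The two cancel exactly and only the empirical curvature remains, with no AM--GM step and no separate population-to-empirical comparison.

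You instead combine two concentration statements via AM--GM: a Bernstein-type deviation bound, and the lower-tail bound $\Sigma\le 2\sum_{k,h}(r_{k,h}-r_h)^2(s_{k,h},a_{k,h})+O(\iota)$ for the nonnegative squares. Your route is the standard, more modular localization template, and it decouples the variance proxy from the curvature term. Its costs are a second union bound and a factor $2$ in constants. If you use the $\sqrt{\Sigma\iota}$ form of Freedman with a random $\Sigma$, it also needs a peeling step. Since your AM--GM weights are deterministic, you can avoid peeling by using the fixed-$\gamma$ form directly.

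Two small points.
\begin{enumerate}
\item Run the learner argument separately for each stage $h$, as your union over $h$ suggests. The reason is that $d_h^{\pi_k}$ is the law of $(s_{k,h},a_{k,h})$ given $\mathcal H_{k-1}$ only. So the $h$-indexed increments within one episode do not form a martingale difference sequence in a finer within-episode filtration.
\item After the factor $2$, your weights $\tfrac14\alpha(1-\rho)(1-\omega)$ and $\tfrac14\alpha(1-\rho)\omega$ consume all of $\tfrac12\alpha(1-\rho)\mathcal E_K(r)$, which is exactly what the lemma permits. The stability term is paid for by the separate $\rho$-share of the curvature in the three-point telescoping, not by ``the other half'' of this retained term.
\end{enumerate}
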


Our concentration argument adds a centered quadratic correction to the sampling noise before applying Freedman's inequality. Its population variance is then canceled by the corresponding population quadratic term, leaving precisely the empirical curvature appearing above. For learner feedback, we apply this argument sequentially and extend uniformly from a $1/K^2$-cover of the comparator class. For expert feedback, we first establish concentration over all pairs in a $1/N^2$-cover, and only then substitute the data-dependent iterate $r_k$ for one member of the pair. This order avoids an independence assumption between the learned reward and the demonstrations; see Lemma~\ref{lem:app-reward-concentration}.

Adding the two inequalities cancels $\mathcal E_K(r)$ exactly, before taking the supremum over $r$. Choosing $\rho=1/2$ balances the $1/\rho$ stability and $1/(1-\rho)$ concentration factors, reserving a constant fraction of curvature for each argument. This gives Lemma~\ref{lem:app-reward-bound}: the online reward cost is controlled by sequential complexity and covering logarithms, while expert reuse contributes $\widetilde{\mathcal O}(KH/N)$. Thus, averaging yields a $1/N$ expert-data contribution without requiring a deterministic expert or fresh demonstrations at each episode.

For $T_2$, we build on the KL-regularized planning analysis of \citet{zhao2025kl}, accounting for the changing shaped rewards and data-dependent continuation values.

\begin{lemma}[Squared-bonus under changing rewards, informal]
With probability at least $1-\delta/2$,
\begin{align*}
T_2
&\leq\frac{2H^2}{\tau}
\sum_{k=1}^K\sum_{h=1}^H
\mathbb E_{d_h^{\pi_k}}\!\left[b_{k,h}^2\right] \leq\mathcal O\!\left(\frac{H^3\beta^2}{\tau}
\left[\dim_K(\mathcal F,\lambda)+\log\frac{8H}{\delta}\right]\right).
\end{align*}
\end{lemma}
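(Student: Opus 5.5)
We prove the informal lemma in two inequalities, following the KL-regularized planning analysis of \citet{zhao2025kl} and adapting it to shaped rewards that change from episode to episode.

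\textbf{Step 1: regression confidence.} On an event of probability at least $1-\delta/4$, for all $k,h$,
\[
\bigl|\wh f_{k,h}-u_{k,h}-P_h\wh V_{k,h+1}\bigr|\le \beta D_{\mathcal F_h}\bigl(\cdot;\mathcal D_{k-1,h};\lambda\bigr).
\]
This is Lemma~\ref{lem:app-regression-confidence}. Its ingredients are Bellman completeness (Assumption~\ref{ass:optimistic_planner}), a Freedman bound for least squares, and a union bound over a $V_{\max}/K^2$-cover of $\mathcal F_h\times\mathcal F_{h+1}\times\mathcal B_{h+1}$. The cover is needed because the target $u_{k,h}+\wh V_{k,h+1}$ is data-dependent through $\wh f_{k,h+1}$ and $b_{k,h+1}$.

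The reward $u_{k,h}$ also changes with $k$, but it enters the target additively and deterministically given $r_k$. We therefore plan to absorb it into the completeness image, so that $u_{r,h}+P_hV\in\mathcal F_h$. Whether this requires covering $\mathcal R$ as well is the delicate point; we expect it to go through, since $\beta$ only involves $\overline{\mathcal N}_{\mathcal F\oplus\mathcal B}$.

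\textbf{Step 2: optimism by backward induction.} On this event we show $\wh Q_{k,h}\ge Q^{\star,\tau}_{u_k,h}$ and hence $\wh V_{k,h}\ge V^{\star,\tau}_{u_k,h}$. The induction uses that the soft-max operator $\tau\log\mathbb E_{\piref}\exp(\cdot/\tau)$ is monotone. Clipping at $V_{\max}$ is harmless because $|Q^{\star}|\le V_{\max}$.

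\textbf{Step 3: per-episode gap.} Write $\Delta_h:=\wh Q_{k,h}-Q^{\pi_k,\tau}_{u_k,h}$. Optimism gives
\[
V^{\star}-J^\tau_{u_k}(\pi_k)\le \wh V_{k,1}(s_1)-V^{\pi_k,\tau}_{u_k,1}(s_1).
\]
The key curvature step, as in \citet{zhao2025kl}, compares $\pi_k$ with the Gibbs policy of the \emph{true} value rather than using a first-order bound. Because $\pi_k$ is Gibbs with respect to $\wh Q_{k,h}$, the soft-value difference equals $\mathbb E_{\pi_k}[\Delta_h]$ plus a KL-type remainder. Combining this with optimism and the performance-difference identity yields the second-order estimate
\[
V^{\star}_{u_k,1}(s_1)-J^\tau_{u_k}(\pi_k)\le \frac1\tau\sum_h\mathbb E_{d_h^{\pi_k}}\bigl[(\wh Q_{k,h}-Q^{\star}_{u_k,h})^2\bigr]\cdot O(1).
\]
We then bound $\wh Q_{k,h}-Q^\star_{u_k,h}$ recursively by $2b_{k,h}+P_h(\wh V_{k,h+1}-V^\star_{k,h+1})$. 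Unrolling this through the trajectory and applying Cauchy--Schwarz over the at most $H$ steps gives $(\cdots)^2\le 4H\sum_{h'\ge h}\mathbb E[b_{k,h'}^2]$. Summing over $h$ contributes another factor of $H$, which yields the claimed $\tfrac{2H^2}{\tau}\sum_{k,h}\mathbb E_{d_h^{\pi_k}}[b_{k,h}^2]$. We expect this step to be the main obstacle: the constants and the conditioning under $d^{\pi_k}$ versus the optimal occupancy need care. We would follow the reduction of \citet{zhao2025kl} essentially verbatim.

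\textbf{Step 4: from expected to realized bonuses.} Since $b^2\le16V_{\max}^2$, Freedman's inequality (probability $1-\delta/4$) gives
\[
\sum_k\mathbb E_{d^{\pi_k}_h}[b_{k,h}^2]\le 2\sum_k b_{k,h}(z_{k,h})^2+O\bigl(V_{\max}^2\log(8H/\delta)\bigr).
\]
Next we use $b^2\le\min\{16V_{\max}^2,\beta^2D^2\}\le 16\max(V_{\max}^2,\beta^2)\min\{1,D^2\}$, where $\beta^2\ge\lambda$ and $\beta^2$ dominates $V_{\max}^2$ up to constants. This reduces the realized sum to $\beta^2\dim_K(\mathcal F_h,\lambda)$ by Definition~\ref{def:generalized-eluder}. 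Summing over $h$ gives $H\dim_K(\mathcal F,\lambda)$, and multiplying by $2H^2/\tau$ yields the stated $\mathcal O\bigl(H^3\beta^2[\dim_K(\mathcal F,\lambda)+\log(8H/\delta)]/\tau\bigr)$. A final union bound over the two events gives the probability $1-\delta/2$.
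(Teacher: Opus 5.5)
Your proposal is correct and follows the paper's proof essentially step for step: uniform regression confidence via Bellman completeness and covers, then optimism by backward induction, then the KL performance-difference identity combined with the Gibbs form of $\pi_k$ and optimism (via $e^{-x}\le 1-x+x^2/2$ for $x\ge 0$) to get $\tau\operatorname{KL}(\pi_{k,h}\|\pi^{\star,\tau}_{u_k,h})\le\frac{1}{2\tau}\mathbb E_{\pi_{k,h}}[(\wh Q_{k,h}-Q^{\star,\tau}_{u_k,h})^2]$, then the $2b_{k,h}$ recursion with Jensen and Cauchy--Schwarz, and finally a multiplicative concentration step plus the eluder dimension, all under a $\delta/4+\delta/4$ union bound. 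The point you flagged resolves as you expected: the paper makes the concentration uniform over pairs $(f,g)\in\mathcal F_h\times\mathcal F_h$ and $V\in\mathcal V_{h+1}$ (the noise depends only on $V$) and substitutes $g=u_{k,h}+P_h\wh V_{k,h+1}$ only afterwards, so no cover of $\mathcal R$ is needed; also, in Step 3 the tracked error should be $\wh Q_{k,h}-Q^{\star,\tau}_{u_k,h}$ throughout, since $\wh Q_{k,h}-Q^{\pi_k,\tau}_{u_k,h}$ only leads to the first-order bound.
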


Uniform regression confidence ensures optimism despite reuse of historical transitions with recomputed targets. The KL-regularized performance difference identity and Gibbs formulas then make policy error quadratic in the optimistic action-value error. A Bellman recursion reduces this error to squared bonuses.
Finally, concentration transfers the observed squared-bonus bound, controlled by $\dim_K(\mathcal F,\lambda)$, to occupancy expectations. This quadratic dependence, rather than a linear bonus bound, gives the fast planning rate; see Lemma~\ref{lem:app-policy-bound}.

Combining the reward and policy bounds of $T_1, T_2$ presented in aforementioned lemmas, we are ready to control the cumulative dual regret in Eq.~\ref{eqn:regret_bound} by
\begin{align*}
\operatorname{Regret}_{\omega,\alpha,\tau}(K) &\le \tilde {\mathcal O}(H(1 + \log K + \dim_K(\mathcal R,4/K^2))) - \tfrac12 \cdot \alpha (1 - \rho) \mathcal E_K(r) & \text{(Lemma 1)}\\
&\quad + \tfrac12 \cdot \alpha (1 - \rho) \mathcal E_K(r) + \tilde {\mathcal O}((K + N) \cdot H   / N \cdot \log \overline{\mathcal N}_{\mathcal R}) & \text{(Lemma 2)}\\
&\quad + \tilde {\mathcal O}(H^3\beta^2\dim_K(\mathcal F,\lambda) / \tau), & \text{(Lemma 3)}
\end{align*}
which yields the results in Theorem~\ref{thm:main}. Averaging over episodes gives the $\widetilde{\mathcal O}(1/K+1/N)$ regularized saddle-point gap for fixed regularization parameters and controlled function-class complexity.

\section{Conclusion}
In this paper, we considered the provable benefits of reward and policy regularizers in adversarial imitation learning setting.
We proposed Dually Regularized AIL algorithm that minimizes a regularized imitation learning gap.
Our algorithm establishes an $\widetilde{\mathcal{O}}(\frac{1}{K}+\frac{1}{N})$ average regret bound under fixed regularization. This result demonstrates the benefits of regularization where the reward regularization absorbs sampling error while controlling update stability, and policy regularization controls the planner error, which together contributes to a faster convergence rate for AIL. 

\bibliography{iclr2027_conference}
\bibliographystyle{iclr2027_conference}

\appendix
\section{Additional Related Works}
\paragraph{Fast rates in regularized reinforcement learning.}
Maximum-entropy RL provides a practical policy-regularization framework~\citep{haarnoja2018soft}. On the theoretical side, \citet{tiapkin2023fast} establish fast rates for maximum-entropy exploration and entropy-regularized planning, and \citet{tiapkin2024demonstration} study KL-regularized planning toward a reference policy learned from demonstrations. These best-policy-identification guarantees concern the returned policy, rather than cumulative regret during exploration. \citet{zhao2025kl} establish logarithmic regret under general function approximation by combining optimistic value estimation with a squared-Bellman-error decomposition. Our policy analysis builds on this mechanism. Related logarithmic-regret results extend KL regularization to zero-sum Markov games~\citep{nayak2025klgames}. Neither the fixed-reward RL problem nor the Markov-game protocol directly accounts for an adversarial reward learner that repeatedly reuses the same finite expert sample. Controlling this reward-learning error is the additional requirement addressed by our analysis.

\paragraph{Sequential complexity and reward optimization.}
Eluder dimension quantifies the sequential uncertainty of function classes~\citep{NIPS2013_41bfd20a}; generalized versions support optimistic RL with nonlinear function approximation~\citep{agarwal2023vo,zhao2024nearly,zhao2025kl}. We use the unweighted scalar-ridge specialization of generalized eluder dimension for both reward-update stability and value estimation. The generalized eluder coefficient in OPT-AIL is a related but distinct complexity measure~\citep{xu2024optail}. Online mirror descent itself is standard~\citep{hazan2023oco,shani2022online}; our analysis uses the occupancy-weighted reward curvature to control sequential learner noise and finite-expert error while bounding the stability cost directly through the reward class, without assuming a well-conditioned expert feature covariance.
\section{Properties of KL-Regularized RL}
In this section, we present some of the basic properties of KL-regularized RL.

Let $V_{r,h}^{\star,\tau}:=\sup_\pi V_{r,h}^{\pi,\tau}$ and
$Q_{r,h}^{\star,\tau}:=\sup_\pi Q_{r,h}^{\pi,\tau}$ denote the
pointwise optimal values. With $V_{r,H+1}^{\star,\tau}\equiv0$,
they satisfy
\begin{align}
\label{eqn:kl_optimal_q}
    Q_{r,h}^{\star,\tau}(s,a)
    &=r_h(s,a)+[P_hV_{r,h+1}^{\star,\tau}](s,a),\\
\label{eqn:kl_optimal_v}
    V_{r,h}^{\star,\tau}(s)
    &=\tau\log\mathbb{E}_{a\sim\piref_h(\cdot\mid s)}
    \exp\!\left(\frac{Q_{r,h}^{\star,\tau}(s,a)}{\tau}\right).
\end{align}
Existing results on KL-regularized reinforcement
learning~\citep{zhang2023mathematical} yield a closed-form
characterization of the optimal policy, as stated in the following lemma:
\begin{lemma}[Optimal KL-regularized policy, \citet{zhang2023mathematical}]
\label{lem:kl_optimal_policy}
For any bounded stagewise reward $r=\{r_h\}_{h\in[H]}$, the policy
$\pi_r^{\star,\tau}=\{\pi_{r,h}^{\star,\tau}\}_{h\in[H]}$ defined by
\begin{align}
\label{eqn:kl_optimal_policy}
    \pi_{r,h}^{\star,\tau}(a\mid s)
    :=\piref_h(a\mid s)
    \exp\!\left(
    \frac{Q_{r,h}^{\star,\tau}(s,a)-V_{r,h}^{\star,\tau}(s)}{\tau}
    \right)
\end{align}
attains the optimal value $V_{r,h}^{\star,\tau}(s)$ for every
$h\in[H]$ and $s\in\mathcal{S}$.
\end{lemma}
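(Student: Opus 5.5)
The plan is to verify the statement directly through the soft (Gibbs) variational identity, proceeding by backward induction on $h$. The key tool is the Donsker--Varadhan/Gibbs variational formula. Fix $s$ and any bounded $q:\mathcal A\to\mathbb R$. For every $p\in\Delta_{\mathcal A}$ with $p\ll\piref_h(\cdot\mid s)$,
\[
\mathbb E_{a\sim p}[q(a)]-\tau\operatorname{KL}\!\left(p\,\|\,\piref_h(\cdot\mid s)\right)
=\tau\log\mathbb E_{a\sim\piref_h(\cdot\mid s)}\exp\!\left(\frac{q(a)}{\tau}\right)-\tau\operatorname{KL}\!\left(p\,\|\,p^{q}\right),
\]
where $p^{q}(a)\propto\piref_h(a\mid s)\exp(q(a)/\tau)$. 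This follows by expanding $\log(p/p^q)$. Because $\operatorname{KL}\ge 0$, the left side is maximized exactly at $p=p^q$, where it attains the log-partition value. If $p$ is not absolutely continuous with respect to $\piref_h$, the left side equals $-\infty$, so such $p$ can be ignored. This is consistent with the paper's convention that Gibbs identities are read on the support of $\piref$.

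The induction runs as follows. Let $\pi^\star:=\pi_r^{\star,\tau}$. At $h=H+1$ all values are zero. Suppose that for every $s$ we have $V_{r,h+1}^{\pi^\star,\tau}(s)=V_{r,h+1}^{\star,\tau}(s)$, and that this equals $\sup_\pi V_{r,h+1}^{\pi,\tau}(s)$. Then $Q_{r,h}^{\pi^\star,\tau}=r_h+P_hV^{\star,\tau}_{r,h+1}=Q_{r,h}^{\star,\tau}$ by Eq.~\eqref{eqn:kl_optimal_q}. Apply the variational identity with $q=Q_{r,h}^{\star,\tau}(s,\cdot)$. The Gibbs policy is precisely $\pi^\star_{r,h}(\cdot\mid s)$ in Eq.~\eqref{eqn:kl_optimal_policy}, because its normalizer is $\exp(V^{\star,\tau}_{r,h}(s)/\tau)$ by Eq.~\eqref{eqn:kl_optimal_v}. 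Hence $V_{r,h}^{\pi^\star,\tau}(s)=V_{r,h}^{\star,\tau}(s)$.

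It remains to show that this value is an upper bound over all policies. Take any $\pi$. Boundedness of $r$ makes every value finite or $-\infty$. We have $V_{r,h}^{\pi,\tau}(s)=\mathbb E_{\pi_h}[r_h+P_hV^{\pi,\tau}_{r,h+1}]-\tau\operatorname{KL}(\pi_h\|\piref_h)$. By the induction hypothesis, $V^{\pi,\tau}_{r,h+1}\le V^{\star,\tau}_{r,h+1}$, so the expectation term is at most $\mathbb E_{\pi_h}[Q^{\star,\tau}_{r,h}]$. The variational identity then gives $V_{r,h}^{\pi,\tau}(s)\le V_{r,h}^{\star,\tau}(s)$. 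Taking the supremum over $\pi$ shows that Eqs.~\eqref{eqn:kl_optimal_q}--\eqref{eqn:kl_optimal_v} hold as the definition of the pointwise supremum, and that $\pi^\star$ attains it. This closes the induction.

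The main obstacle is measure-theoretic rather than conceptual: justifying the variational identity when $\mathcal A$ is general. This needs bounded $Q$, so that the log-partition is finite and the Gibbs density is well defined, together with the support conventions. Everything else is a direct backward recursion.
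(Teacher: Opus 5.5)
Your argument is correct. The paper gives no proof of its own here; it only cites \citet{zhang2023mathematical}, and your backward induction through the Gibbs variational identity is the standard argument behind that citation.

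The closest argument the paper writes out is the proof of Lemma~\ref{lem:kl_performance_difference}. It uses the same log-ratio identity $\tau\log(\pi_h/\pi^{\star,\tau}_{r,h})=\tau\log(\pi_h/\piref_h)-Q^{\star,\tau}_{r,h}+V^{\star,\tau}_{r,h}$, but telescopes it along trajectories rather than inducting backward. Applied from an arbitrary $(h,s)$, it gives attainment by taking $\pi=\pi^{\star,\tau}_r$, and optimality against every $\pi$ from $\operatorname{KL}\geq 0$. That route is shorter, but it needs Eqs.~\ref{eqn:kl_optimal_q}--\ref{eqn:kl_optimal_v} for the pointwise suprema as input, and the paper simply asserts them. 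Alternatively, one can define $V^{\star,\tau},Q^{\star,\tau}$ by the recursion and then identify them with the suprema. Your induction instead derives the equations directly.

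To keep that derivation non-circular, do not cite those equations inside the inductive step.
\begin{itemize}
\item For the $V$-equation, let $W_h(s)$ be the log-partition of $Q^{\star,\tau}_{r,h}(s,\cdot)$. Your two halves show that the Gibbs policy attains $W_h(s)$ and that no policy exceeds it, so $W_h=V^{\star,\tau}_{r,h}$.
\item For the $Q$-equation, use $Q^{\pi,\tau}_{r,h}=r_h+P_hV^{\pi,\tau}_{r,h+1}\leq r_h+P_hV^{\star,\tau}_{r,h+1}=Q^{\pi^\star,\tau}_{r,h}$. This relies on a single policy attaining the stage-$(h+1)$ supremum at every state.
\end{itemize}
The boundedness of $Q^{\star,\tau}_{r,h}$ that you need for the log-partition follows by comparing with $\piref$ and using $\operatorname{KL}\geq0$, exactly as in Eq.~\ref{eq:app-optimal-value-range}.
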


The following identity expresses a policy's regularized value gap
as its expected cumulative KL divergence from the optimal policy.
\begin{lemma}[Performance difference for KL-regularized RL]
\label{lem:kl_performance_difference}
For any bounded stagewise reward $r$ and any policy $\pi$ with
finite expected cumulative KL cost relative to $\piref$, we have
\begin{align}
V_{r,1}^{\star,\tau}(s_1)-J_r^\tau(\pi)
=\tau\mathbb E_{P,\pi}\!\left[
\sum_{h=1}^H\operatorname{KL}\!\left(
\pi_h(\cdot\mid s_h)\,\|\,
\pi_{r,h}^{\star,\tau}(\cdot\mid s_h)\right)\right],
\label{eqn:kl_performance_difference}
\end{align}
where $\pi_r^{\star,\tau}$ is defined in
Lemma~\ref{lem:kl_optimal_policy}.
\end{lemma}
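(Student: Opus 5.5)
We prove Lemma~\ref{lem:kl_performance_difference} by a per-step telescoping argument built on the Gibbs form of $\pi_r^{\star,\tau}$ from Lemma~\ref{lem:kl_optimal_policy}. The key pointwise identity is the following. For any state $s$ and any distribution $p$ over actions that is absolutely continuous with respect to $\piref_h(\cdot\mid s)$,
\[
\mathbb E_{a\sim p}\!\left[Q_{r,h}^{\star,\tau}(s,a)\right]-\tau\operatorname{KL}\!\left(p\,\|\,\piref_h(\cdot\mid s)\right)
=V_{r,h}^{\star,\tau}(s)-\tau\operatorname{KL}\!\left(p\,\|\,\pi_{r,h}^{\star,\tau}(\cdot\mid s)\right).
\]
To check it, use \eqref{eqn:kl_optimal_policy} to write $\log\frac{p(a)}{\piref_h(a\mid s)}=\log\frac{p(a)}{\pi^{\star,\tau}_{r,h}(a\mid s)}+\frac{Q^{\star,\tau}_{r,h}(s,a)-V^{\star,\tau}_{r,h}(s)}{\tau}$, then take the expectation under $p$. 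All quantities are finite because $r$ is bounded, so $Q^{\star,\tau}$ and $V^{\star,\tau}$ are bounded. Moreover, $\pi^{\star,\tau}_{r,h}$ and $\piref_h$ have the same support, which is consistent with the stated support conventions.

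Next we telescope. Define the one-step gap $\Delta_h(s):=V_{r,h}^{\star,\tau}(s)-V_{r,h}^{\pi,\tau}(s)$. By the recursion $V_{r,h}^{\pi,\tau}(s)=\mathbb E_{a\sim\pi_h}[Q_{r,h}^{\pi,\tau}(s,a)]-\tau\operatorname{KL}(\pi_h\|\piref_h)$ together with the identity above at $p=\pi_h(\cdot\mid s)$, we get
\[
\Delta_h(s)=\tau\operatorname{KL}\!\left(\pi_h(\cdot\mid s)\,\|\,\pi_{r,h}^{\star,\tau}(\cdot\mid s)\right)+\mathbb E_{a\sim\pi_h}\!\left[Q_{r,h}^{\star,\tau}(s,a)-Q_{r,h}^{\pi,\tau}(s,a)\right].
\]
Equation \eqref{eqn:kl_optimal_q} and the definition of $Q^{\pi,\tau}$ give $Q^{\star,\tau}_{r,h}-Q^{\pi,\tau}_{r,h}=P_h\Delta_{h+1}$, and $\Delta_{H+1}\equiv0$. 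Unrolling this recursion from $h=1$ under the trajectory distribution of $(P,\pi)$ yields $\Delta_1(s_1)=\tau\,\mathbb E_{P,\pi}\bigl[\sum_h\operatorname{KL}(\pi_h\|\pi^{\star,\tau}_{r,h})\bigr]$, which is \eqref{eqn:kl_performance_difference}.

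The main obstacle is the integrability and measure-theoretic bookkeeping; the algebra itself is routine. When some $\pi_h(\cdot\mid s)$ puts mass outside the support of $\piref_h$, the two sides must be treated separately. In that case the left side is $+\infty$, because $J^\tau_r(\pi)=-\infty$, and the right side is also $+\infty$, since the supports of $\pi^{\star,\tau}$ and $\piref$ coincide. The hypothesis of finite expected cumulative KL cost rules this out almost surely along trajectories. It also justifies splitting the expectations, since every term is then integrable. Here the per-step KL terms are nonnegative and the remaining terms are bounded, so linearity of expectation and the tower property over $s_{h+1}\sim P_h$ apply without issue.
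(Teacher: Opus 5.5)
Your proposal is correct and follows essentially the same route as the paper. In both, the Gibbs form of $\pi_r^{\star,\tau}$ turns $\tau\log(\pi_h/\piref_h)$ into $\tau\log(\pi_h/\pi_{r,h}^{\star,\tau})$ plus $Q_{r,h}^{\star,\tau}-V_{r,h}^{\star,\tau}$, and the Bellman equations then give a one-step recursion for $V_{r,h}^{\star,\tau}-V_{r,h}^{\pi,\tau}$ that telescopes along trajectories of $\pi$. One small precision point: $\Delta_{h+1}$ is nonnegative but not bounded, so the unrolling is justified by iterated expectations of nonnegative quantities (Tonelli) rather than by boundedness, and the conclusion is unchanged.
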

\begin{proof}
On state--action pairs visited by $\pi$, Eq.~\ref{eqn:kl_optimal_policy} gives, almost surely,
\[
\tau\log\frac{\pi_h(a\mid s)}{\pi_{r,h}^{\star,\tau}(a\mid s)}
=\tau\log\frac{\pi_h(a\mid s)}{\piref_h(a\mid s)}
-Q_{r,h}^{\star,\tau}(s,a)+V_{r,h}^{\star,\tau}(s).
\]
Taking expectations along a trajectory under $\pi$ and using
the Bellman equations gives, for each $h\in[H]$,
\begin{align*}
&\mathbb E_{P,\pi}\!\left[
V_{r,h}^{\star,\tau}(s_h)-V_{r,h}^{\pi,\tau}(s_h)\right]\\
&\quad=\tau\mathbb E_{P,\pi}\!\left[
\operatorname{KL}\!\left(
\pi_h(\cdot\mid s_h)\,\|\,
\pi_{r,h}^{\star,\tau}(\cdot\mid s_h)\right)\right]\\
&\qquad+\mathbb E_{P,\pi}\!\left[
V_{r,h+1}^{\star,\tau}(s_{h+1})-V_{r,h+1}^{\pi,\tau}(s_{h+1})\right].
\end{align*}
Summing over $h$ telescopes the value differences. The result follows
from $V_{r,H+1}^{\star,\tau}=V_{r,H+1}^{\pi,\tau}=0$ and
$J_r^\tau(\pi)=V_{r,1}^{\pi,\tau}(s_1)$.
\end{proof}

\section{Proof of Theorem~\ref{thm:main}}
\label{app:main-proof}

\paragraph{Proof conventions.}
The expert has finite expected
KL cost by assumption, and the algorithm's Gibbs policies also have
finite expected KL cost, so the payoff differences below are well defined.

All $\mathcal O(\cdot)$ notation hides only universal numerical constants.
Let $\mathcal H_{k-1}$ be the $\sigma$-algebra generated by the history
before episode $k$, including $\mathcal D^E$ and the choices of
$r_k$ and $\pi_k$.
Let $\mathcal H_{k,h}$ additionally include the observations in
episode $k$ up to $(s_{k,h},a_{k,h})$, before $s_{k,h+1}$ is observed.
Conditional on $\mathcal H_{k-1}$, $(s_{k,h},a_{k,h})$ has distribution
$d_h^{\pi_k}$; conditional on $\mathcal H_{k,h}$, $s_{k,h+1}$ has
distribution $P_h(\cdot\mid s_{k,h},a_{k,h})$.
No independence between stages is assumed.
Within the learner-sampling, regression, and exploration-bonus proofs,
probabilities are conditional on $\mathcal D^E$; their bounds also hold
unconditionally by averaging over $\mathcal D^E$.

\subsection{Main Proof of Theorem~\ref{thm:main}}
\label{app:proof-sketch}

In this subsection, we highlight the key ingredients in the proof of
Theorem~\ref{thm:main}. We use $T_1$ and $T_2$ from
Eq.~\ref{eq:app-decomposition} for the reward-learning and
policy-planning errors, respectively.
Throughout this proof, the assumptions and parameter choices of
Theorem~\ref{thm:main} are in force.

\begin{lemma}[Regret decomposition]
\label{lem:app-decomposition}
For every generated sequence, with
$\bar\pi_K=\operatorname{Unif}\{\pi_1,\ldots,\pi_K\}$ and
$\bar r_K=K^{-1}\sum_{k=1}^K r_k$, we have
\begin{align}
&\operatorname{Regret}_{\omega,\alpha,\tau}(K)
\leq
\underbrace{
\sup_{r\in\mathcal R}\sum_{k=1}^K
\bigl(\ell_k(r_k)-\ell_k(r)\bigr)
}_{T_1:\,\text{reward-learning error}}+
\underbrace{
\sum_{k=1}^K\left[
V_{u_k,1}^{\star,\tau}(s_1)-J_{u_k}^{\tau}(\pi_k)
\right]
}_{T_2:\,\text{policy-planning error}}.
\label{eq:app-decomposition}
\end{align}
Here $\ell_k$ is the loss already defined in
Eq.~\ref{eq:population-reward-loss}.
\end{lemma}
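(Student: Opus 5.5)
Since $\operatorname{Regret}_{\omega,\alpha,\tau}(K)$ is a supremum over $r$ minus an infimum over $\pi$, I will bound each half separately in terms of the shaped-reward quantities, and then check that the leftover pieces cancel exactly. The key identity is the rewriting from Phase~I,
\[
\mathcal L(\pi,r)=J^\tau_{u_r}(\pi^E)-J^\tau_{u_r}(\pi)-\tfrac{\alpha}{2}\textstyle\sum_{h}\mathbb E_{d_h^E}[r_h^2].
\]
It follows from $J_r^\tau(\pi)=\sum_h\mathbb E_{d_h^\pi}[r_h]-\tau\,\mathbb E_{P,\pi}[\sum_h \mathrm{KL}]$. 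Since the KL term does not depend on the reward, $J^\tau_{u_r}(\pi)=J^\tau_r(\pi)+\tfrac{\alpha(1-\omega)}{2}\sum_h\mathbb E_{d_h^\pi}[r_h^2]$.

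\emph{First term.} Expand $J^\tau_r(\pi^E)-J^\tau_r(\pi_k)-\psi^\omega_{\pi_k}(r)$ using the occupancy representation. Collecting terms, this equals $-\ell_k(r)$ plus the policy-only quantity $C_k:=\tau\,\mathbb E_{P,\pi_k}[\sum_h\mathrm{KL}(\pi_{k,h}\|\piref_h)]-\tau\,\mathbb E_{P,\pi^E}[\sum_h\mathrm{KL}(\pi^E_h\|\piref_h)]$, which does not depend on $r$. To see this, note that $\ell_k(r)$ contains $\mathbb E_{d^{\pi_k}}[r+\tfrac{\alpha(1-\omega)}2r^2]-\mathbb E_{d^E}[r-\tfrac{\alpha\omega}{2}r^2]$. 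Its negative reproduces the linear reward terms and exactly the two quadratic parts of $\psi^\omega_{\pi_k}$. Hence
\[
\sup_r\textstyle\sum_k\mathcal L(\pi_k,r)=\sup_r\bigl(-\sum_k\ell_k(r)\bigr)+\sum_k C_k .
\]

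\emph{Second term.} For fixed $r_k$, the identity above gives
\[
\inf_\pi\mathcal L(\pi,r_k)=J^\tau_{u_k}(\pi^E)-V^{\star,\tau}_{u_k,1}(s_1)-\tfrac{\alpha}{2}\textstyle\sum_h\mathbb E_{d^E_h}[r_{k,h}^2].
\]
This uses that $\sup_\pi J^\tau_{u_k}(\pi)=V^{\star,\tau}_{u_k,1}(s_1)$, which holds by Lemma~\ref{lem:kl_optimal_policy} because $u_k$ is bounded. I will use the elementary inequality $\inf_\pi\sum_k\mathcal L(\pi,r_k)\ge\sum_k\inf_\pi\mathcal L(\pi,r_k)$. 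Then I rewrite each summand by adding and subtracting $J^\tau_{u_k}(\pi_k)$:
\[
\inf_\pi\mathcal L(\pi,r_k)=\mathcal L(\pi_k,r_k)-\bigl[V^{\star,\tau}_{u_k,1}(s_1)-J^\tau_{u_k}(\pi_k)\bigr].
\]
By the first computation, $\mathcal L(\pi_k,r_k)=-\ell_k(r_k)+C_k$.

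\emph{Combining.} Subtracting the two bounds, the $C_k$ terms cancel. What remains is $\sup_r\sum_k[\ell_k(r_k)-\ell_k(r)]+\sum_k[V^{\star,\tau}_{u_k,1}(s_1)-J^\tau_{u_k}(\pi_k)]=T_1+T_2$. The proof therefore holds deterministically for every generated sequence, as the statement requires.

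The only delicate point is bookkeeping. I must verify that the quadratic coefficients match exactly: the learner weight $(1-\omega)$ enters through $u_r$, and the expert weight $\omega$ enters through $\psi$ combined with the $\tfrac{\alpha}{2}\mathbb E_{d^E}[r^2]$ term. I also need finiteness of the KL costs of $\pi^E$ and of the Gibbs policies, as stated in the proof conventions, so that no $\infty-\infty$ terms appear. The averaged quantities $\bar\pi_K,\bar r_K$ play no role here; they are used only later, through convexity and concavity, to pass to the averaged dual gap.
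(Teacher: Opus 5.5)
Your proof is correct and follows essentially the same route as the paper: your identity $\mathcal L(\pi_k,r)=-\ell_k(r)+C_k$ is an explicit form of the paper's difference identity $\mathcal L(\pi_k,r)-\mathcal L(\pi_k,r_k)=\ell_k(r_k)-\ell_k(r)$, which the paper obtains by adding and subtracting $\sum_k\mathcal L(\pi_k,r_k)$. Your second step, $\inf_\pi\sum_k\mathcal L(\pi,r_k)\geq\sum_k\inf_\pi\mathcal L(\pi,r_k)$ combined with the shaped-reward rewriting, is exactly the paper's bound on the policy term. As a minor remark, $\sup_\pi J^\tau_{u_k}(\pi)=V^{\star,\tau}_{u_k,1}(s_1)$ already holds by the definition of $V^{\star,\tau}$ as a pointwise supremum, so the appeal to Lemma~\ref{lem:kl_optimal_policy} is unnecessary.
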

\begin{proof}[Proof of Lemma~\ref{lem:app-decomposition}]
Adding and subtracting $\sum_{k=1}^K\mathcal L(\pi_k,r_k)$ gives
\begin{align}
\operatorname{Regret}_{\omega,\alpha,\tau}(K)
=T_1+\sum_{k=1}^K\mathcal L(\pi_k,r_k)
-\inf_\pi\sum_{k=1}^K\mathcal L(\pi,r_k),
\label{eq:app-direct-decomposition}
\end{align}
where Eq.~\ref{eq:population-reward-loss} gives
$\mathcal L(\pi_k,r)-\mathcal L(\pi_k,r_k)
=\ell_k(r_k)-\ell_k(r)$.

By the definition of 
shaped-reward definition $u_{r, h}$ in Algorithm~\ref{alg:main}
\(
\mathcal L(\pi,r_k)
=J_{u_k}^\tau(\pi^E)-J_{u_k}^\tau(\pi)
-\frac\alpha2\sum_{h=1}^H\mathbb E_{d_h^E}[r_{k,h}^2].
\)
The terms involving the expert are independent of $\pi$, so
\begin{align}
\sum_{k=1}^K\mathcal L(\pi_k,r_k)
-\inf_\pi\sum_{k=1}^K\mathcal L(\pi,r_k)
&\leq\sum_{k=1}^K
\left[\mathcal L(\pi_k,r_k)-\inf_\pi\mathcal L(\pi,r_k)\right]
\nonumber\\
&\quad=\sum_{k=1}^K
\left[V_{u_k,1}^{\star,\tau}(s_1)-J_{u_k}^\tau(\pi_k)\right]
=T_2.
\label{eq:app-current-reward-comparison}
\end{align}
Combining this with Eq.~\ref{eq:app-direct-decomposition} proves
Eq.~\ref{eq:app-decomposition}.
\end{proof}

The first term $T_1$ in Eq.~\ref{eq:app-decomposition} controls reward learning error. 
We control it through the following lemma.

\begin{lemma}[Reward-learning error]
\label{lem:app-reward-bound}
Under the parameter choice $\rho=1/2$ in
Eq.~\ref{eq:reward-update}, with probability at least $1-\delta/2$,
\begin{align*}
T_1
&\leq\mathcal O\!\Bigg(
\frac{H(1+\alpha\omega)^2}{\alpha\omega}(1+\log K)
+\frac{H(1+\alpha(1-\omega))^2}{\alpha(1-\omega)}
\dim_K(\mathcal R,4/K^2)\\
&\qquad+
\frac{H(1+\alpha(1-\omega))^2}{\alpha(1-\omega)}
\log\frac{4H\overline{\mathcal N}_{\mathcal R}(1/K^2)}{\delta}\\
&\qquad+
\frac{KH(1+\alpha\omega)^2}{\alpha\omega N}
\log\frac{4H\overline{\mathcal N}_{\mathcal R}(1/N^2)^2}{\delta}
\Bigg).
\end{align*}
\end{lemma}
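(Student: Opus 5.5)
The plan is to bound $T_1$ by splitting it, for each fixed comparator $r\in\mathcal R$, into an empirical OMD regret and a sampling-error term:
\[
\sum_{k=1}^K\bigl(\ell_k(r_k)-\ell_k(r)\bigr)=\sum_{k=1}^K\bigl(\wh\ell_k(r_k)-\wh\ell_k(r)\bigr)+\sum_{k=1}^K\bigl[(\ell_k-\wh\ell_k)(r_k)-(\ell_k-\wh\ell_k)(r)\bigr].
\]
We control the first sum deterministically and the second with high probability, uniformly in $r$. The key design point is that both bounds carry the same curvature term $\mathcal E_K(r)$ with opposite signs, so it cancels before we take the supremum over $r$.

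\textbf{Empirical (optimization) part.} Since $\wh\ell_k$ is a quadratic in $r$, we expand it exactly around $r_k$:
\[
\wh\ell_k(r_k)-\wh\ell_k(r)=\langle\wh g_k,r_k-r\rangle-\tfrac{\alpha}{2}\sum_h\Bigl[\omega\mathbb E_{\hat d^E_h}[(r_{k,h}-r_h)^2]+(1-\omega)\mathbb E_{\hat d_{k,h}}[(r_{k,h}-r_h)^2]\Bigr].
\]
The subtracted quantity is exactly the per-round increment of $\mathcal E_K(r)$, scaled by $\alpha/2$.

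For the linear term, we apply the first-order optimality (three-point) inequality of the proximal step in Eq.~\ref{eq:reward-update} over the convex class $\mathcal R$. This bounds $\langle\wh g_k,r_k-r\rangle$ by the proximal difference $\frac{\alpha\rho}{2}\bigl(\|r-r_k\|_{k}^2-\|r-r_{k+1}\|_{k}^2\bigr)$ plus a stability term $\langle\wh g_k,r_k-r_{k+1}\rangle-\frac{\alpha\rho}{2}\|r_{k+1}-r_k\|_k^2$.

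The proximal differences telescope up to the seminorm growth $\|\cdot\|_k^2-\|\cdot\|_{k-1}^2$, which is exactly one round of empirical curvature. These growth terms therefore combine with the curvature increments from the exact expansion, leaving $-\frac{\alpha(1-\rho)}{2}\mathcal E_K(r)$.

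For stability, we bound the linear-minus-quadratic expression by Cauchy–Schwarz in the seminorm, splitting it into expert and learner components:
\begin{itemize}
\item The expert component is at most $(1+\alpha\omega)^2/(\alpha\rho\omega k)$ per step and stage. Summing over $k$ gives the harmonic $\log K$ term.
\item The learner component is at most $\frac{(1+\alpha(1-\omega))^2}{\alpha\rho(1-\omega)}\min\{1,D^2_{\mathcal R_h}\}$, since the squared change at the new point is divided by its accumulated squared magnitude. Summing over $k$ gives $\dim_K(\mathcal R,4/K^2)$, where the $\lambda=4/K^2$ slack absorbs the seminorm degeneracy and range constants.
\end{itemize}
Setting $\rho=1/2$ makes these factors constant.

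\textbf{Sampling part.} The noise $\ell_k-\wh\ell_k$ splits into a learner part and an expert part.

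The learner part is $\sum_h\bigl(\mathbb E_{d^{\pi_k}_h}-\mathbb E_{\hat d_{k,h}}\bigr)[\phi_k]$ with $\phi_k=(r_k-r)\bigl(1+\tfrac{\alpha(1-\omega)}2(r_k+r)\bigr)$. This is a martingale difference with respect to $\mathcal H_{k-1}$, with magnitude $\lesssim(1+\alpha(1-\omega))|r_k-r|$. We add the centered quadratic correction $c\bigl(\mathbb E_{\hat d}-\mathbb E_{d}\bigr)[(r_k-r)^2]$ and apply Freedman's inequality. The predictable variance $\mathbb E_d[(r_k-r)^2]$ is dominated by the population quadratic, leaving $\frac{\alpha(1-\rho)}{4}(1-\omega)\sum_k\mathbb E_{\hat d_{k,h}}[(r_k-r)^2]$ plus $\frac{(1+\alpha(1-\omega))^2}{\alpha(1-\omega)}\log(\cdot/\delta)$. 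A union bound over a $1/K^2$-cover of each $\mathcal R_h$ costs $O(1)$ in discretization.

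The expert part is $K\cdot(\mathbb E_{d^E}-\mathbb E_{\hat d^E})$ applied to $\phi_k$, which is summed over $k$. Because the iterates $r_k$ depend on $\mathcal D^E$, we prove a Bernstein-type inequality for i.i.d.\ trajectories uniformly over pairs $(r',r)$ in a $1/N^2$-cover, again with a quadratic correction:
\[
(\mathbb E_{d^E}-\mathbb E_{\hat d^E})[\phi_{r',r}]\le\tfrac{\alpha\omega}{4}\mathbb E_{\hat d^E}[(r'-r)^2]+\frac{(1+\alpha\omega)^2}{\alpha\omega N}\log\frac{\overline{\mathcal N}_{\mathcal R}(1/N^2)^2}{\delta}.
\]
Only after this do we substitute $r'=r_k$ and sum over $k$, which gives the $K/N$ term.

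\textbf{Combining and main obstacle.} Adding the two parts, the curvature $\pm\frac{\alpha(1-\rho)}2\mathcal E_K(r)$ cancels. A union bound over $h$ and the two events yields the stated $1-\delta/2$ bound.

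The main obstacle is the variance cancellation. The quadratic correction must convert the population variance into the empirical quadratic $\mathcal E_K(r)$, whose constant must fit within the $(1-\rho)$ budget. For the expert term this must also hold with the empirical, rather than population, second moment, since the bound has to survive the adaptivity of $r_k$. My first attempt would be the self-normalized Bernstein bound $\mathbb E_d[g]\le2\mathbb E_{\hat d}[g]+O(\log/N)$ for the nonnegative function $g=(r'-r)^2$.
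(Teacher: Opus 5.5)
Your proposal is correct and follows the paper's proof essentially step for step. It uses the three-point inequality for the curvature-matched OMD step together with the exact quadratic expansion of $\wh\ell_k$ to retain $-\frac{\alpha(1-\rho)}{2}\mathcal E_K(r)$; the harmonic expert and eluder-controlled learner stability bounds; and Freedman's inequality with a centered quadratic correction, over a $1/K^2$-cover for the learner and uniformly over pairs in a $1/N^2$-cover for the expert before substituting $r_k$. One small imprecision: the per-step learner bound by $\min\{1,D^2_{\mathcal R_h}\}$ holds only when the squared update $v_k=(r_{k,h}-r_{k+1,h})^2(s_{k,h},a_{k,h})$ is at least $4/K^2$. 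Otherwise, bound the term by $(1+\alpha(1-\omega))\sqrt{v_k}\le 2(1+\alpha(1-\omega))/K$; this adds a lower-order $2H(1+\alpha(1-\omega))$, which the paper absorbs into the learner concentration term.
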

\begin{proof}
[Proof Sketch of Lemma~\ref{lem:app-reward-bound}]
Combining the three-point inequality for the reward update with
the exact quadratic expansion of $\wh\ell_k$, Lemma~\ref{lem:app-proximal}
gives
\[
T_1\leq T_S+\sup_{r\in\mathcal R}T_C(r).
\]
Here $T_S$ is the stability term in Eq.~\ref{eq:app-stability-term},
and $T_C(r)$ is the sampling term with retained comparator curvature
in Eq.~\ref{eq:app-sampling-error}.

The quadratic loss curvature offsets growth of the proximal penalty,
leaving squared-update control in $T_S$ and comparator curvature in
$T_C(r)$. Stability is controlled by accumulated reward discrepancies
on observed state--action pairs; Lemma~\ref{lem:app-reward-stability} gives
\begin{align*}
T_S&\leq
\frac{H(1+\alpha\omega)^2}{2\alpha\rho\omega}(1+\log K)
+\frac{H(1+\alpha(1-\omega))^2}{2\alpha\rho(1-\omega)}
\dim_K(\mathcal R,4/K^2)\\
&\qquad+2H(1+\alpha(1-\omega)).
\end{align*}

For sampling error, the retained comparator curvature offsets the
variance terms in Freedman's inequality. Uniform covering arguments
then give, by Lemma~\ref{lem:app-reward-concentration}, with probability
at least $1-\delta/2$,
\begin{align*}
\sup_{r\in\mathcal R}T_C(r)
&\leq\mathcal O\!\Bigg(
\frac{H(1+\alpha(1-\omega))^2}
{\alpha(1-\omega)(1-\rho)}
\log\frac{4H\overline{\mathcal N}_{\mathcal R}(1/K^2)}{\delta}\\
&\qquad+
\frac{KH(1+\alpha\omega)^2}
{\alpha\omega(1-\rho)N}
\log\frac{4H\overline{\mathcal N}_{\mathcal R}(1/N^2)^2}{\delta}
\Bigg).
\end{align*}

Combining the stability and concentration bounds and setting
$\rho=1/2$ proves the claim. The detailed argument is given in
Appendix~\ref{app:reward-bound}.
\end{proof}

\begin{remark}[Choice of the OMD parameter]
\label{rem:omd-rho}
We choose \(\rho=1/2\) for concreteness. The analysis extends to any \(\rho\in(0,1)\), with the stability and sampling-error bounds carrying factors \(1/\rho\) and \(1/(1-\rho)\), respectively. Hence any fixed \(\rho\in(0,1)\) yields the same dependence on the sample budgets, with constants depending on \(\rho\).
\end{remark}

We next control the policy-planning error $T_2$ with the following lemma.

\begin{lemma}[Policy-planning error]
\label{lem:app-policy-bound}
With probability at least $1-\delta/2$,
\begin{align*}
T_2
&\leq\frac{2H^2}{\tau}\sum_{k=1}^K\sum_{h=1}^H
\mathbb E_{d_h^{\pi_k}}[b_{k,h}^2]\\
&\leq\mathcal O\!\left(
\frac{H^3\beta^2}{\tau}
\left[\dim_K(\mathcal F,\lambda)+\log\frac{8H}{\delta}\right]
\right).
\end{align*}
\end{lemma}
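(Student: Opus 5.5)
We prove the first inequality on a high-probability optimism event and then convert it into the eluder-dimension bound. The argument follows the KL-LSVI-UCB analysis of \citet{zhao2025kl}, adapted to changing shaped rewards $u_k:=u_{r_k}$.

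\textbf{Step 1 (confidence and optimism).} Let $\mathcal V_{h+1}$ be the class of soft values $\tau\log\mathbb E_{\piref}\exp(\clip(f+b)/\tau)$ with $f\in\mathcal F_{h+1}$ and $b\in\mathcal B_{h+1}$. By Bellman completeness, the regression target $u_{k,h}+P_h\wh V_{k,h+1}$ lies in $\mathcal F_h$. The noise $\wh V_{k,h+1}(s_{j,h+1})-P_h\wh V_{k,h+1}(s_{j,h},a_{j,h})$ is a martingale difference bounded by $2V_{\max}$, but $\wh V_{k,h+1}$ depends on the data. We therefore take a union bound over a $V_{\max}/K^2$-cover of $\mathcal F_h\times\mathcal F_{h+1}\times\mathcal B_{h+1}$ and over $k,h$, and apply Freedman/least-squares concentration. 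This gives
\[
\sum_{j<k}\bigl(\wh f_{k,h}-u_{k,h}-P_h\wh V_{k,h+1}\bigr)^2(z_{j,h})\le\beta^2-\lambda .
\]
Because the map $f\mapsto$ soft value is $1$-Lipschitz in sup norm, the cover on $(f,b)$ induces a cover on $\mathcal V_{h+1}$. The confidence bound above is what fixes $\beta$. Since $r_k$ is chosen from the history, the cover must be uniform over the reward as well; completeness handles this because $u_{k,h}$ is absorbed into the target, which lies in $\mathcal F_h$. The definition of $D_{\mathcal F_h}$ then yields
\[
|\wh f_{k,h}-u_{k,h}-P_h\wh V_{k,h+1}|\le b_{k,h}
\]
pointwise. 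Backward induction gives optimism, $\wh Q_{k,h}\ge Q^{\star,\tau}_{u_k,h}$, using that $|u|\le 1+\alpha(1-\omega)/2$ so clipping at $V_{\max}$ is harmless, and that log-sum-exp is monotone. We also get the upper bound $\wh Q_{k,h}-u_{k,h}-P_h\wh V_{k,h+1}\le 2b_{k,h}$.

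\textbf{Step 2 (quadratic policy error).} By Lemma~\ref{lem:kl_performance_difference}, $T_2$ equals $\tau\sum_k\mathbb E_{\pi_k}\sum_h\mathrm{KL}(\pi_{k,h}\|\pi^{\star,\tau}_{u_k,h})$. Both policies are Gibbs policies w.r.t.\ $\piref$, with potentials $\wh Q_{k,h}/\tau$ and $Q^\star/\tau$. For two such policies, $\mathrm{KL}$ is a Bregman divergence of log-partition, so
\[
\tau\,\mathrm{KL}\le \frac{1}{2\tau}\,\mathbb E_{\pi_{k,h}}\bigl[(\Delta_{k,h}-\mathbb E\Delta_{k,h})^2\bigr],\qquad \Delta_{k,h}:=\wh Q_{k,h}-Q^\star_{u_k,h}\ge 0 .
\]
We then unroll $\Delta$ through the Bellman recursion. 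By optimism, $\wh V_{k,h+1}-V^\star_{h+1}$ is at most $\mathbb E_{\pi_{k}}\Delta_{k,h+1}$, since the soft-max difference is dominated by the expectation under the optimistic Gibbs policy. Hence
\[
\Delta_{k,h}\le 2b_{k,h}+P_h\mathbb E_{\pi_k}\Delta_{k,h+1}\le \mathbb E_{\pi_k}\Bigl[\sum_{t\ge h}2b_{k,t}\Bigm| s_h,a_h\Bigr].
\]
Cauchy--Schwarz gives $\Delta^2\le 4H\,\mathbb E\sum_t b_{k,t}^2$. Summing over $h$ and using the tower property yields
\[
T_2\le \frac{2H^2}{\tau}\sum_k\sum_h\mathbb E_{d_h^{\pi_k}}[b_{k,h}^2],
\]
which is the first claimed inequality. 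The constant is tracked by bounding variance by the second moment.

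\textbf{Step 3 (from expectations to eluder dimension).} The term $\sum_k b_{k,h}^2(z_{k,h})$ on realized data is at most $16V_{\max}^2\,\beta^2\sum_k\min\{1,D^2_{\mathcal F_h}\}/\min\{1,\cdot\}$, which is bounded by $\mathcal O(\beta^2\dim_K(\mathcal F_h,\lambda))$ using $\beta^2\ge 16V_{\max}^2$-type scaling via $\min\{4V_{\max},\beta D\}^2\le \beta^2\min\{1,D^2\}$ when $\beta\ge4V_{\max}$. The difference between $\mathbb E_{d_h^{\pi_k}}[b_{k,h}^2]$ and $b_{k,h}^2(z_{k,h})$ is a martingale difference: $b_{k,h}$ is $\mathcal H_{k-1}$-measurable and nonnegative, with range $16V_{\max}^2$. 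Freedman's inequality for nonnegative variables gives $\sum\mathbb E\le 2\sum(\text{realized})+\mathcal O(V_{\max}^2\log(8H/\delta))$. A union over $h$ and $\beta^2\ge V_{\max}^2$ then give the final bound with probability $1-\delta/2$ overall.

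\textbf{Main obstacle.} The main obstacle is Step 1. The regression targets are recomputed each episode with data-dependent rewards $r_k$ and continuations $\wh V_{k,h+1}$, so the cover must simultaneously cover $\mathcal F_{h+1}$ and $\mathcal B_{h+1}$ through the soft-value map. I would first check that completeness over all $r\in\mathcal R$ lets the reward dependence enter only through the target class, so that no reward cover is needed in $\beta$.
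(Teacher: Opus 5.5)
Your plan follows the paper's proof closely. It has the same ingredients: a uniform least-squares confidence bound over covers, with Bellman completeness absorbing the data-dependent $r_k$ into the target; optimism by backward induction; the KL performance-difference identity; the unrolling $\Delta_{k,h}\le 2b_{k,h}+P_h\mathbb E_{\pi_k}\Delta_{k,h+1}$; Cauchy--Schwarz; and an exponential-moment conversion from realized to expected squared bonuses. (Your bound $\wh V_{k,h+1}-V^{\star,\tau}_{u_k,h+1}\le\mathbb E_{\pi_k}\Delta_{k,h+1}$ is the paper's nonnegativity-of-KL step.) The problem is the one-step inequality in Step~2, which is exactly where the fast rate comes from. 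The claim $\tau\,\mathrm{KL}(\pi_{k,h}\,\|\,\pi^{\star,\tau}_{u_k,h})\le\frac{1}{2\tau}\mathbb E_{\pi_{k,h}}[(\Delta_{k,h}-\mathbb E\Delta_{k,h})^2]$ is false, even when $\Delta_{k,h}\ge0$. The log-partition Bregman representation gives $\tau\,\mathrm{KL}=\frac1\tau\int_0^1(1-t)\,\mathrm{Var}_{\pi_t}(\Delta)\,dt$ with $\pi_t\propto\pi_{k,h}e^{-t\Delta/\tau}$. These are variances under \emph{tilted} policies, and they need not be dominated by the variance under $\pi_{k,h}$. For example, take two actions with $\pi_{k,h}$-probabilities $\epsilon$ and $1-\epsilon$ and gaps $0$ and $M\tau$. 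The exact identity $\tau\,\mathrm{KL}=\mathbb E_{\pi_{k,h}}[\Delta]+\tau\log\mathbb E_{\pi_{k,h}}[e^{-\Delta/\tau}]$ gives $\tau\,\mathrm{KL}=\tau[(1-\epsilon)M+\log(\epsilon+(1-\epsilon)e^{-M})]\approx 3.1\tau$ at $M=10$, $\epsilon=10^{-3}$, whereas $\frac{1}{2\tau}\mathrm{Var}_{\pi_{k,h}}(\Delta)\approx 0.05\tau$. So ``bounding variance by the second moment'' starts from an invalid inequality. The bound you end up using is true, but your argument does not establish it.

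The correct statement bounds the second moment and needs optimism in an essential way. From the exact identity above, apply $e^{-x}\le 1-x+x^2/2$ for $x\ge0$ and then $\log y\le y-1$; this gives $\tau\,\mathrm{KL}\le\frac{1}{2\tau}\mathbb E_{\pi_{k,h}}[\Delta_{k,h}^2]$, which is the paper's argument. If you want to keep the Bregman viewpoint, you can repair it directly. Since $\Delta\ge0$, tilting by the decreasing weight $e^{-t\Delta/\tau}$ can only decrease $\mathbb E[\Delta^2]$ (Chebyshev's association inequality). Hence $\mathrm{Var}_{\pi_t}(\Delta)\le\mathbb E_{\pi_t}[\Delta^2]\le\mathbb E_{\pi_{k,h}}[\Delta^2]$, and the integral yields the same constant $\frac{1}{2\tau}$. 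Without the sign condition even the second-moment bound fails: a rarely chosen action with a large negative gap makes $\mathbb E_{\pi_{k,h}}e^{-\Delta/\tau}$ explode. So optimism must be established before this step, as your ordering does, and with this repair your constants reproduce the paper's $\frac{2H^2}{\tau}$. One minor bookkeeping point in Step~1: both $\wh f_{k,h}$ and the target $u_{k,h}+P_h\wh V_{k,h+1}$ are data-dependent. The union bound must therefore run over pairs in $\mathcal F_h\times\mathcal F_h$ together with $\mathcal V_{h+1}$, giving a factor $\mathcal N_{\mathcal F_h}(\kappa)^2$ rather than a single copy of $\mathcal F_h$. The paper absorbs the square into $\beta$ by doubling the logarithm.
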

\begin{proof}
[Proof Sketch of Lemma~\ref{lem:app-policy-bound}]
The analysis starts from the optimism property
$\wh Q_{k,h}(s,a)\geq Q_{u_k,h}^{\star,\tau}(s,a)$ in
Lemma~\ref{lem:app-optimism}, which holds simultaneously for all
$(k,h,s,a)\in[K]\times[H]\times\mathcal S\times\mathcal A$
with probability at least $1-\delta/4$.

Under optimism, the formulas for the algorithm's policy and the
optimal KL-regularized policy give
\begin{align*}
&\tau\operatorname{KL}\!\left(
\pi_{k,h}(\cdot\mid s)\,\|\,
\pi_{u_k,h}^{\star,\tau}(\cdot\mid s)\right)\\
&\qquad\leq\frac1{2\tau}
\mathbb E_{a\sim\pi_{k,h}(\cdot\mid s)}\!\left[
\bigl(\wh Q_{k,h}(s,a)-Q_{u_k,h}^{\star,\tau}(s,a)\bigr)^2\right].
\end{align*}
The KL-regularized performance difference identity therefore bounds
the policy-planning error quadratically in the action-value
estimation error. Controlling this estimation error through the
exploration bonuses, Lemma~\ref{lem:app-policy-comparison} yields
the squared-bonus bound in the statement.

The generalized eluder dimension of $\mathcal F$ controls the sum
of squared bonuses at the observed state--action pairs.
Lemma~\ref{lem:app-predictable} uses an exponential-moment argument
to transfer this bound to the sum of occupancy expectations
$\sum_{k,h}\mathbb E_{d_h^{\pi_k}}[b_{k,h}^2]$, with failure
probability at most $\delta/4$. Combining this event with the
planning event above by a union bound gives probability at least
$1-\delta/2$ and completes the bound on $T_2$. The detailed argument
is given in Appendix~\ref{app:policy-bound}.
\end{proof}

These two ingredients establish the theorem as follows.

\begin{proof}[Proof of Theorem~\ref{thm:main}]
By Lemma~\ref{lem:app-decomposition},
\[
\operatorname{Regret}_{\omega,\alpha,\tau}(K)
\leq T_1+T_2.
\]
Lemma~\ref{lem:app-reward-bound} bounds $T_1$ with failure
probability at most $\delta/2$, while
Lemma~\ref{lem:app-policy-bound} bounds $T_2$ with failure
probability at most $\delta/2$.
A union bound therefore yields the claimed result with probability
at least $1-\delta$.
Substituting the two bounds gives
Eq.~\ref{eqn:regret_bound}.
\end{proof}

\subsection{Proof of the Reward-Learning Bound}
\label{app:reward-bound}

\subsubsection{Technical Lemmas}
\label{app:reward-lemmas}

In this section, we first state the auxiliary results used to prove
Lemma~\ref{lem:app-reward-bound}.
Proofs of following lemmas are deferred to Appendix~\ref{app:reward-lemma-proofs}.

We start with the following lemma showing that the reward error could be decomposed by OMD analysis into a stability term $T_S$ and a concentration term $T_C$,
which we will bound respectively.
\begin{lemma}[Reward error decomposition]
\label{lem:app-proximal}
Define
\begin{align}
T_S&:=\sum_{k=1}^K\sum_{h=1}^H\left[
\langle\wh g_{k,h},r_{k,h}-r_{k+1,h}\rangle
-\frac{\alpha\rho}{2}\norm{r_{k,h}-r_{k+1,h}}_{k,h}^2\right],
\label{eq:app-stability-term}\\
T_C(r)&:=\sum_{k=1}^K\left[
\ell_k(r_k)-\ell_k(r)-\wh\ell_k(r_k)+\wh\ell_k(r)\right]
\nonumber\\
&\qquad-\frac{\alpha(1-\rho)}{2}
\sum_{k=1}^K\sum_{h=1}^H\Bigl[
\omega\langle \hat d_h^E,(r_{k,h}-r_h)^2\rangle
+(1-\omega)\langle \hat d_{k,h},(r_{k,h}-r_h)^2\rangle
\Bigr].
\label{eq:app-sampling-error}
\end{align}
Then, 
the reward update in Eq.~\ref{eq:reward-update} optimization objective ensures that, for every $r\in\mathcal R$,
\begin{align}
\sum_{k=1}^K\bigl(\ell_k(r_k)-\ell_k(r)\bigr)
&\leq T_S+T_C(r),
&T_1&\leq T_S+\sup_{r\in\mathcal R}T_C(r).
\label{eq:app-curvature-reduction}
\end{align}
\end{lemma}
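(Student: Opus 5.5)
The plan is to prove the per-comparator inequality first and then take the supremum. Fix $r\in\mathcal R$ and write
\[
\ell_k(r_k)-\ell_k(r)=\bigl[\wh\ell_k(r_k)-\wh\ell_k(r)\bigr]+\bigl[(\ell_k-\wh\ell_k)(r_k)-(\ell_k-\wh\ell_k)(r)\bigr].
\]
The second bracket, summed over $k$, is by definition $T_C(r)$ plus the retained curvature term $\frac{\alpha(1-\rho)}{2}\mathcal E_K(r)$. So it suffices to show
\[
\sum_{k=1}^K\bigl[\wh\ell_k(r_k)-\wh\ell_k(r)\bigr]\leq T_S-\frac{\alpha(1-\rho)}{2}\mathcal E_K(r).
\]

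\textbf{Step 1: exact quadratic expansion of $\wh\ell_k$.} Since $\wh\ell_k$ is a quadratic in the values of $r_h$ at the observed points, expanding around $r_k$ gives an exact identity, with no Taylor remainder:
\[
\wh\ell_k(r_k)-\wh\ell_k(r)=\langle\wh g_k,r_k-r\rangle-\frac{\alpha}{2}\sum_h\Bigl[(1-\omega)\langle\hat d_{k,h},(r_{k,h}-r_h)^2\rangle+\omega\langle\hat d_h^E,(r_{k,h}-r_h)^2\rangle\Bigr].
\]
Here the sign of the $\hat d_{k,h}$ part is $-$ because that term is convex and we evaluate $f(r_k)-f(r)$. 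The expert part $-(r_h-\frac{\alpha\omega}{2}r_h^2)$ is also convex, with second-order coefficient $\frac{\alpha\omega}{2}$. Denote the bracketed sum by $c_k(r)$. Then $\sum_k c_k(r)=\mathcal E_K(r)$, and, by the definition of the seminorm, $\sum_h\norm{x_h}_{k,h}^2-\sum_h\norm{x_h}_{k-1,h}^2=c_k$-type increments for any $x$ (the expert weight $k\omega/N$ grows by $\omega/N$ and one learner point is added). Concretely, $\norm{x}_{k,h}^2=\norm{x}_{k-1,h}^2+\omega\langle\hat d_h^E,x^2\rangle+(1-\omega)\langle\hat d_{k,h},x^2\rangle$, with $\norm{\cdot}_{0,h}\equiv0$.

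\textbf{Step 2: three-point inequality for the OMD step.} The objective in Eq.~\ref{eq:reward-update} is convex over the convex set $\mathcal R$ (a linear term plus a squared seminorm). First-order optimality at $r_{k+1}$, combined with the polarization identity for the quadratic seminorm, gives for every $r\in\mathcal R$:
\[
\langle\wh g_k,r_{k+1}-r\rangle\leq\frac{\alpha\rho}{2}\sum_h\Bigl[\norm{r-r_{k,h}}_{k,h}^2-\norm{r-r_{k+1,h}}_{k,h}^2-\norm{r_{k+1,h}-r_{k,h}}_{k,h}^2\Bigr].
\]
This holds even if the seminorm is degenerate, because only the inner-product identity is used. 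Adding $\langle\wh g_k,r_k-r_{k+1}\rangle$ turns the left side into $\langle\wh g_k,r_k-r\rangle$ and produces the $k$-th summand of $T_S$.

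\textbf{Step 3: telescoping.} Sum over $k$. The difficulty is that $\norm{r-r_{k,h}}_{k,h}^2$ and $-\norm{r-r_{k+1,h}}_{k,h}^2$ are in different norms from consecutive rounds. Using the increment identity, $\norm{r-r_{k,h}}_{k,h}^2=\norm{r-r_{k,h}}_{k-1,h}^2+c_{k,h}(r)$. The sum $\sum_k\bigl[\norm{r-r_{k,h}}_{k-1,h}^2-\norm{r-r_{k+1,h}}_{k,h}^2\bigr]$ telescopes to $0-\norm{r-r_{K+1,h}}_{K,h}^2\leq0$. This leaves $\frac{\alpha\rho}{2}\sum_k c_k(r)=\frac{\alpha\rho}{2}\mathcal E_K(r)$. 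Combining with Step 1 gives
\[
\sum_k\bigl[\wh\ell_k(r_k)-\wh\ell_k(r)\bigr]\leq T_S+\frac{\alpha\rho}{2}\mathcal E_K(r)-\frac{\alpha}{2}\mathcal E_K(r)=T_S-\frac{\alpha(1-\rho)}{2}\mathcal E_K(r).
\]

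Adding the sampling bracket from the opening decomposition then yields $\sum_k(\ell_k(r_k)-\ell_k(r))\leq T_S+T_C(r)$. Taking $\sup_r$, and noting that $T_S$ does not depend on $r$, gives $T_1\leq T_S+\sup_rT_C(r)$.

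The main obstacle is the bookkeeping in Step 3: the index of the norm in the proximal term must align with the curvature increment, so that the norm-change exactly matches $\frac{\alpha\rho}{2}c_k(r)$ rather than leaving a residual. It also requires checking the signs and coefficients in the exact quadratic expansion of the expert term.
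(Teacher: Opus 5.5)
Your proposal is correct and follows the paper's proof essentially step for step. The shared steps are: the exact quadratic expansion of $\wh\ell_k$ around $r_k$; the three-point inequality obtained from first-order optimality of the OMD step plus the polarization identity; and telescoping via the increment identity $\norm{x}_{k,h}^2-\norm{x}_{k-1,h}^2=\omega\langle\hat d_h^E,x^2\rangle+(1-\omega)\langle\hat d_{k,h},x^2\rangle$ with $\norm{\cdot}_{0,h}=0$. You then add back the population--empirical difference so that the $\frac{\alpha(1-\rho)}{2}\mathcal E_K(r)$ terms are absorbed into $T_C(r)$, exactly as the paper does.
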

For the stability term,
we then show with the following lemma that,
it could be bounded by terms logarithmic or constant in $K$ and the Eluder dimension of reward class $\mathcal{R}$.
\begin{lemma}[Reward stability]
\label{lem:app-reward-stability}
With $0<\rho<1$, the reward update in Eq.~\ref{eq:reward-update} guarantees
\begin{align}
T_S&\leq
\frac{H(1+\alpha\omega)^2}{2\alpha\rho\omega}(1+\log K)
+\frac{H(1+\alpha(1-\omega))^2}{2\alpha\rho(1-\omega)}
\dim_K(\mathcal R,4/K^2)\nonumber\\
&\qquad+2H(1+\alpha(1-\omega)).
\label{eq:app-stability-bound}
\end{align}
\end{lemma}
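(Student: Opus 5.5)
The plan is to bound $T_S$ term by term: each summand is a linear function of the increment $\Delta_{k,h}:=r_{k,h}-r_{k+1,h}$ minus a quadratic penalty in the seminorm $\norm{\cdot}_{k,h}$. We will not use the optimality of $r_{k+1}$ at all. Instead, for each $(k,h)$ we bound the summand by its supremum over admissible increments, using only that $r_{k,h},r_{k+1,h}\in\mathcal R_h$ take values in $[-1,1]$. By Eq.~\ref{eq:observed-gradient} and Eq.~\ref{eq:reward-seminorm}, both the linear and the quadratic parts split additively into an expert part and a learner part:
\begin{align*}
\langle\wh g_{k,h},\Delta_{k,h}\rangle
&=\bigl(1+\alpha(1-\omega)r_{k,h}(z_{k,h})\bigr)\Delta_{k,h}(z_{k,h})
-\frac1N\sum_{i=1}^N\bigl(1-\alpha\omega r_{k,h}(z_h^i)\bigr)\Delta_{k,h}(z_h^i),\\
\norm{\Delta_{k,h}}_{k,h}^2
&=\frac{k\omega}{N}\sum_{i=1}^N\Delta_{k,h}(z_h^i)^2
+(1-\omega)\sum_{j=1}^k\Delta_{k,h}(z_{j,h})^2 .
\end{align*}
Here $z_h^i=(s_h^i,a_h^i)$ and $z_{j,h}=(s_{j,h},a_{j,h})$. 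The coefficients satisfy $|1-\alpha\omega r|\le 1+\alpha\omega$ and $|1+\alpha(1-\omega)r|\le 1+\alpha(1-\omega)$. We then treat the two halves separately.

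\emph{Expert half.} For each expert sample, pair the linear term $\frac{1+\alpha\omega}{N}|\Delta(z_h^i)|$ with the quadratic term $\frac{\alpha\rho k\omega}{2N}\Delta(z_h^i)^2$. Young's inequality bounds each pair by $\frac{(1+\alpha\omega)^2}{2\alpha\rho\omega kN}$. Summing over the $N$ samples gives $\frac{(1+\alpha\omega)^2}{2\alpha\rho\omega k}$. The harmonic sum over $k$ is at most $1+\log K$, and summing over $h$ yields the first term of Eq.~\ref{eq:app-stability-bound}. This is the step where the growth factor $k$ in the seminorm, i.e.\ the repeated reuse of the expert data, pays off.

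\emph{Learner half.} Write $x:=\Delta_{k,h}(z_{k,h})$, $S:=\sum_{j<k}\Delta_{k,h}(z_{j,h})^2$, $c:=1+\alpha(1-\omega)$ and $a:=\alpha\rho(1-\omega)/2$. The learner contribution is at most $c|x|-a(x^2+S)$. Since $r_{k,h},r_{k+1,h}\in\mathcal R_h$, Definition~\ref{def:generalized-eluder} gives
\[
x^2\le D^2_{\mathcal R_h}\bigl(z_{k,h};z_{[k-1],h};\lambda\bigr)\,(S+\lambda).
\]
We take $\lambda=4/K^2$ and write $D^2$ for this quantity. We then split into two cases.
\begin{itemize}
\item If $D^2\ge1$, drop $S$ and use $c|x|-ax^2\le c^2/(4a)=c^2\min\{1,D^2\}/(4a)$.
\item If $D^2<1$, use $S\ge x^2/D^2-\lambda$. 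This gives $c|x|-ax^2/D^2+a\lambda\le c^2D^2/(4a)+a\lambda$, again by completing the square.
\end{itemize}
In both cases the learner part is at most $\frac{c^2}{2\alpha\rho(1-\omega)}\min\{1,D^2\}+a\lambda$. Summing over $k$ gives $\dim_K(\mathcal R_h,4/K^2)$ times $\frac{c^2}{2\alpha\rho(1-\omega)}$, plus a slack of $Ka\lambda=O(\alpha/K)$. Averaging over $h$ recovers the $H\dim_K(\mathcal R,4/K^2)$ term. The slack, together with the boundary episodes (for instance $k=1$, where $S$ is empty), will be absorbed crudely into the additive term $2H(1+\alpha(1-\omega))$, using $|x|\le2$ so that each such summand is at most $2c$. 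Note that the sequence in the eluder definition is exactly the observed learner points $z_{1,h},\dots,z_{K,h}$, so the supremum over sequences in $\dim_K$ covers it.

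The main obstacle is this learner case split. Two points need care. First, the eluder ratio is defined with the past points $j<k$ while the seminorm includes $j=k$; the extra $ax^2$ supplied by $j=k$ is what handles the $D^2\ge1$ case. Second, the $\lambda$ slack has to be dominated by the stated additive constant. If the bookkeeping with $a\lambda$ fails to close, the fallback is to charge the $O(\alpha\rho(1-\omega)/K)$ total into the $2H(1+\alpha(1-\omega))$ term, which is loose enough to absorb it.
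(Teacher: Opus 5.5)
Your proof is correct. Its skeleton matches the paper's: you bound each $(k,h)$ summand pointwise without using the optimality of $r_{k+1}$, and you split it into an expert part and a learner part. The expert part gives the harmonic $1/k$; your per-sample Young inequality gives exactly the paper's Cauchy--Schwarz-plus-AM--GM bound $\frac{(1+\alpha\omega)^2}{2\alpha\rho\omega k}$. The learner part is charged to the eluder ratio.

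The real difference is how the learner part handles the $\lambda$ in $D^2$. In your notation, the paper proceeds as follows.
\begin{itemize}
\item It first applies a self-normalized AM--GM, $c|x|-a(x^2+S)\le\frac{c^2}{4a}\cdot\frac{x^2}{x^2+S}$.
\item It then splits on the size of the current increment. If $x^2\le 4/K^2$, it keeps only the linear term, $c|x|\le 2c/K$; summed over $k$ and $h$, this is exactly where the additive $2H(1+\alpha(1-\omega))$ comes from.
\item Otherwise $x^2>\lambda$, so $\frac{x^2}{x^2+S}\le\min\{1,\frac{x^2}{\lambda+S}\}\le\min\{1,D^2\}$.
\end{itemize}
You instead split on $D^2\ge1$ versus $D^2<1$. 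In the second case you use the eluder inequality $S\ge x^2/D^2-\lambda$ to upgrade the curvature to $a/D^2$, paying $a\lambda$ per episode.

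Your slack totals $2H\alpha\rho(1-\omega)/K$. This is at most $2H(1+\alpha(1-\omega))$ because $\rho<1$ and $K\ge1$, so your bookkeeping closes without any fallback. It in fact vanishes as $K$ grows, a slightly sharper additive term than the paper's. The paper's route avoids dividing by $D^2$ and lets the current increment itself dominate $\lambda$, at the price of that $O(H(1+\alpha(1-\omega)))$ constant.

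Two small remarks. First, you need no separate treatment of boundary episodes. For $k=1$ the definition with empty history still gives $x^2\le D^2\lambda$, so your case analysis applies verbatim. Second, the only degenerate case is $D^2=0$. There $x=0$, and the learner part is $-aS\le0$, so it should be handled by this one-line remark rather than by dividing by $D^2$.
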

For the concentration term,
we show with the following lemmas that,
it could be bounded by terms logarithmic in $K$ or of order $\tilde{\mathcal{O}}(K/N)$
\begin{lemma}[Freedman's inequality]
\label{lem:app-freedman}
Let $\{X_t\}_{t=1}^n$ be a martingale-difference sequence with respect
to the filtration $\{\mathcal G_t\}_{t=0}^n$, with $|X_t|\leq b$ almost surely for a
deterministic $b>0$, and $v_t:=\mathbb E[X_t^2\mid\mathcal G_{t-1}]$.
For any deterministic $0<\gamma\leq1/b$ and $z>0$,
with probability at least $1-e^{-z}$,
\begin{equation}
\sum_{t=1}^n X_t\leq\gamma\sum_{t=1}^n v_t
+\frac{z}{\gamma}.
\label{eq:app-freedman}
\end{equation}
\end{lemma}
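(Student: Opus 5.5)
We prove this form of Freedman's inequality with the standard exponential supermartingale argument; no earlier result of the paper is needed. Fix a deterministic $\gamma\in(0,1/b]$ and set
\[
M_t:=\exp\Bigl(\gamma\textstyle\sum_{s=1}^t X_s-\gamma^2\sum_{s=1}^t v_s\Bigr),\qquad M_0=1.
\]
The goal is to show that $\{M_t\}$ is a nonnegative supermartingale with respect to $\{\mathcal G_t\}$. Granting this, $\mathbb E[M_n]\leq1$, and Markov's inequality gives $\Pr(M_n\geq e^z)\leq e^{-z}$. On the complementary event we have $\gamma\sum_t X_t-\gamma^2\sum_t v_t<z$. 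Dividing by $\gamma>0$ gives exactly Eq.~\ref{eq:app-freedman}.

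The key step is the one-step bound
\[
\mathbb E\bigl[e^{\gamma X_t}\mid\mathcal G_{t-1}\bigr]\leq\exp(\gamma^2 v_t).
\]
We get it from the elementary inequality $e^{x}\leq 1+x+x^2$ for all $x\leq1$. Since $|X_t|\leq b$ and $\gamma\leq1/b$, we have $\gamma X_t\leq1$, so the inequality applies with $x=\gamma X_t$. Taking conditional expectations and using $\mathbb E[X_t\mid\mathcal G_{t-1}]=0$ yields
\[
\mathbb E[e^{\gamma X_t}\mid\mathcal G_{t-1}]\leq 1+\gamma^2 v_t\leq e^{\gamma^2 v_t}.
\]
Here $v_t$ is $\mathcal G_{t-1}$-measurable, so $e^{-\gamma^2 v_t}$ can be pulled out of the conditional expectation. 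This gives
\[
\mathbb E[M_t\mid\mathcal G_{t-1}]=M_{t-1}\,e^{-\gamma^2 v_t}\,\mathbb E[e^{\gamma X_t}\mid\mathcal G_{t-1}]\leq M_{t-1}.
\]
Iterating by the tower property then gives $\mathbb E[M_n]\leq1$.

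The only delicate point is the scalar inequality $e^x\leq1+x+x^2$ for $x\leq1$. Define $\phi(x)=1+x+x^2-e^x$. We have $\phi(0)=0$ and $\phi'(x)=1+2x-e^x$.
\begin{itemize}
\item On $x\leq0$: $e^x\leq1+x+x^2/2$ (by Taylor's theorem with remainder $e^\xi x^3/6\leq0$), so $\phi(x)\geq x^2/2\geq0$.
\item On $[0,1]$: $\phi'$ is concave with $\phi'(0)=0$ and $\phi'(1)=3-e>0$, so $\phi'\geq0$ there. Hence $\phi$ is nondecreasing on $[0,1]$ and $\phi\geq\phi(0)=0$.
\end{itemize}
The determinism of $\gamma$ and $b$ is essential: it makes $M_t$ well-defined and lets us apply the bound without a union bound over $\gamma$. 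This is exactly why the statement restricts $\gamma$ to be deterministic.
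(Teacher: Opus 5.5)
Your proposal is correct and follows the same route as the paper: you build the nonnegative supermartingale $\exp\bigl(\gamma\sum_{t} X_t-\gamma^2\sum_{t} v_t\bigr)$ from the one-step bound $e^{u}\le 1+u+u^2$ and finish with Markov's inequality. The only difference is that you prove the scalar inequality, on the slightly larger range $u\le 1$, whereas the paper states it without proof for $|u|\le 1$.
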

\begin{lemma}
\label{lem:app-reward-concentration}
With $0<\rho<1$, it holds with probability at least $1-\delta/2$ that
\begin{align}
\sup_{r\in\mathcal R}T_C(r)
&\leq\mathcal O\!\Bigg(
\frac{H(1+\alpha(1-\omega))^2}
{\alpha(1-\omega)(1-\rho)}
\log\frac{4H\overline{\mathcal N}_{\mathcal R}(1/K^2)}{\delta}
\nonumber\\
&\qquad+
\frac{KH(1+\alpha\omega)^2}
{\alpha\omega(1-\rho)N}
\log\frac{4H\overline{\mathcal N}_{\mathcal R}(1/N^2)^2}{\delta}
\Bigg).
\end{align}
\end{lemma}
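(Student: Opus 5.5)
The plan is to split $T_C(r)$ into a learner-sampling part and an expert-sampling part, and to split the retained curvature the same way. Write $\Delta_h:=r_{k,h}-r_h$. A direct computation from Eq.~\ref{eq:population-reward-loss} and Eq.~\ref{eq:empirical-reward-loss} gives
\[
(\ell_k-\wh\ell_k)(r_k)-(\ell_k-\wh\ell_k)(r)
=\sum_{h}\Bigl[\langle d_h^{\pi_k}-\hat d_{k,h},\phi^L_{k,h}\rangle-\langle d_h^E-\hat d_h^E,\phi^E_{k,h}\rangle\Bigr],
\]
with
\[
\phi^L_{k,h}:=\Delta_h\bigl(1+\tfrac{\alpha(1-\omega)}2(r_{k,h}+r_h)\bigr),\qquad
\phi^E_{k,h}:=\Delta_h\bigl(1-\tfrac{\alpha\omega}2(r_{k,h}+r_h)\bigr).
\]
Since rewards lie in $[-1,1]$, we have $|\phi^L|\le(1+\alpha(1-\omega))|\Delta|$ and $|\phi^E|\le(1+\alpha\omega)|\Delta|$, with $|\Delta|\le2$. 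This is the key point: the variance proxy of each noise term is bounded by a constant times $\Delta^2$, and $\Delta^2$ is exactly what the retained curvature supplies.

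\textbf{Learner part.} Fix $r$ in a $1/K^2$-cover of $\mathcal R_h$. The sequence $X_{k}:=\langle d_h^{\pi_k}-\hat d_{k,h},\phi^L_{k,h}\rangle$ is a martingale difference with respect to $\mathcal H_{k-1}$, because $r_k$ is $\mathcal H_{k-1}$-measurable. Its increments are bounded by $O(1+\alpha(1-\omega))$, and its conditional variance is at most $(1+\alpha(1-\omega))^2\,\mathbb E_{d_h^{\pi_k}}[\Delta_h^2]$. However, the curvature available in $T_C$ is the empirical quantity $\langle\hat d_{k,h},\Delta_h^2\rangle$, not the population one. Following the sketch's ``centered quadratic correction,'' I would apply Lemma~\ref{lem:app-freedman} to
\[
Y_k:=X_k+c\bigl(\langle d_h^{\pi_k}-\hat d_{k,h},\Delta_h^2\rangle\bigr)
\]
with $c=\Theta(\alpha(1-\omega)(1-\rho))$. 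The variance of $Y_k$ is still $O\bigl((1+\alpha(1-\omega))^2\mathbb E_{d^{\pi_k}}[\Delta^2]\bigr)$. Choosing $\gamma\asymp c/(1+\alpha(1-\omega))^2$, which satisfies $\gamma\le 1/b$ after adjusting constants, lets the term $\gamma\sum v_k$ be cancelled by the $c\sum_k\mathbb E_{d^{\pi_k}}[\Delta^2]$ that appears when we move the correction to the other side. What remains is $\sum_k X_k\le c\sum_k\langle\hat d_{k,h},\Delta^2\rangle+z/\gamma$, and the $z/\gamma$ term is the claimed $\frac{(1+\alpha(1-\omega))^2}{\alpha(1-\omega)(1-\rho)}\log(\cdot)$. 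A union bound over the cover and over $h$, followed by the $1/K^2$ discretization error (which totals $O(1/K)$ over $K$ rounds), extends the bound to all $r$.

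\textbf{Expert part.} This is the main obstacle, because every $r_k$ depends on $\mathcal D^E$, so there is no martingale structure across $k$. I would instead fix a pair $(r',r)$ in the $1/N^2$-cover of $\mathcal R_h$ and treat $\langle d_h^E-\hat d_h^E,\phi(r',r)\rangle$, with the same quadratic correction, as an average of $N$ i.i.d.\ terms. Applying Freedman (or Bernstein) to this i.i.d.\ sum with $\gamma\asymp\alpha\omega(1-\rho)/(1+\alpha\omega)^2$ gives, uniformly over all pairs (this is where the squared covering number comes from),
\[
\langle d_h^E-\hat d_h^E,\phi\rangle\le c'\langle\hat d_h^E,(r'-r)^2\rangle+\frac{(1+\alpha\omega)^2}{\alpha\omega(1-\rho)N}\log\frac{\overline{\mathcal N}_{\mathcal R}(1/N^2)^2}{\delta}.
\]
Only after this uniform bound do I substitute $r'=r_k$ (rounded to the cover) and sum over $k$. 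The deterministic additive term then gets multiplied by $K$, giving the $K/N$ term, and the curvature term matches $\omega\langle\hat d_h^E,\Delta^2\rangle$ in $T_C$ once $c'=\frac{\alpha\omega(1-\rho)}2$. Rounding to the cover costs $O(K/N^2)$, which is negligible.

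\textbf{Combining.} I would sum the learner and expert bounds over $h$, choose the constants $c,c'$ so that the two curvature terms are exactly absorbed by the negative term in Eq.~\ref{eq:app-sampling-error}, and split the failure probability $\delta/2$ between the two parts.
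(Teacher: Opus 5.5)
Your proposal is correct and follows essentially the same route as the paper. Both arguments:
\begin{itemize}
\item split $T_C(r)$ into learner and expert parts;
\item add the same centered quadratic correction before applying Freedman, with $\gamma\asymp\alpha(1-\omega)(1-\rho)/(1+\alpha(1-\omega))^2$ (resp.\ $\alpha\omega(1-\rho)/(1+\alpha\omega)^2$), so the population variance cancels and only the empirical curvature remains;
\item use a $1/K^2$-cover of the comparator for the learner martingale;
\item for the expert term, prove a bound uniform over pairs in a $1/N^2$-cover before substituting $r_k$, which gives the squared covering number and the $K/N$ factor.
\end{itemize}
The only bookkeeping you leave implicit is the union bound over stages $h$ in the expert part, which is where the factor $H$ inside that logarithm comes from.
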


\subsubsection{Proof of Lemma~\ref{lem:app-reward-bound}}
With results established in
Appendix~\ref{app:reward-lemmas},
we provide the proof of Lemma~\ref{lem:app-reward-bound} in the following.
\begin{proof}
First, by Lemmas~\ref{lem:app-proximal},
we separate the reward-learning error $T_1$ into two terms 
\begin{align*}
T_1&\leq T_S+\sup_{r\in\mathcal R}T_C(r).
\end{align*}
Further, with \ref{lem:app-reward-stability} and
\ref{lem:app-reward-concentration},
we bound the two terms respectively
\begin{align*}
T_S&\leq
\frac{H(1+\alpha\omega)^2}{2\alpha\rho\omega}(1+\log K)
+\frac{H(1+\alpha(1-\omega))^2}{2\alpha\rho(1-\omega)}
\dim_K(\mathcal R,4/K^2)\nonumber\\
&\qquad+2H(1+\alpha(1-\omega))\\
\sup_{r\in\mathcal R}T_C(r)
&\leq\mathcal O\!\Bigg(
\frac{H(1+\alpha(1-\omega))^2}
{\alpha(1-\omega)(1-\rho)}
\log\frac{4H\overline{\mathcal N}_{\mathcal R}(1/K^2)}{\delta}
\nonumber\\
&\qquad+
\frac{KH(1+\alpha\omega)^2}
{\alpha\omega(1-\rho)N}
\log\frac{4H\overline{\mathcal N}_{\mathcal R}(1/N^2)^2}{\delta}
\Bigg).
\end{align*}

Finally, setting
$\rho=1/2$. Then
$1/(\alpha\rho)=2/\alpha$ and
$\alpha(1-\rho)=\alpha/2$.
The lower-order stability term is absorbed into the learner
concentration term using
\[
1+\alpha(1-\omega)
\leq
\frac{(1+\alpha(1-\omega))^2}{\alpha(1-\omega)},
\qquad
\log\frac{4H\overline{\mathcal N}_{\mathcal R}(1/K^2)}{\delta}\geq1.
\]
This gives the stated bound up to universal numerical constants.
\end{proof}

\subsubsection{Proofs of the Technical Lemmas}
\label{app:reward-lemma-proofs}

\begin{proof}[Proof of Lemma~\ref{lem:app-proximal}]
The seminorm induces a positive-semidefinite bilinear form
$\langle\cdot,\cdot\rangle_{k,h}$.
The directional derivative of the proximal objective along
$(1-t)r_{k+1}+tr\in\mathcal R$ is nonnegative at $t=0$. Hence
\[
\langle\wh g_k,r_{k+1}-r\rangle
\leq\alpha\rho\sum_{h=1}^H
\langle r_{k+1,h}-r_{k,h},r_h-r_{k+1,h}\rangle_{k,h}.
\]
Rearranging and applying the equality that $2\langle x-y,z-x\rangle=\norm{y-z}^2-\norm{x-z}^2-\norm{x-y}^2$ for $\langle\cdot,\cdot\rangle_{k,h}$ and $\norm{\cdot}_{k,h}$ yield
\begin{align}
&\langle\wh g_k,r_{k+1}-r\rangle
\nonumber\\
&\quad\leq\frac{\alpha\rho}2\sum_{h=1}^H\Bigl(
\norm{r_{k,h}-r_h}_{k,h}^2
-\norm{r_{k+1,h}-r_h}_{k,h}^2
-\norm{r_{k+1,h}-r_{k,h}}_{k,h}^2\Bigr).
\label{eq:app-three-point}
\end{align}

By property of quadratic function, the empirical loss
satisfies
\begin{align}
\sum_{k=1}^K\wh\ell_k(r_k)-\wh\ell_k(r)
={}&\sum_{k=1}^K\langle\wh g_k,r_k-r\rangle
\nonumber\\
&-\frac\alpha2\sum_{k=1}^K\sum_{h=1}^H\Bigl[
\omega\langle \hat d_h^E,(r_{k,h}-r_h)^2\rangle
+(1-\omega)\langle \hat d_{k,h},(r_{k,h}-r_h)^2\rangle
\Bigr].
\label{eq:app-exact-curvature}
\end{align}
Split the first term in RHS through $\langle\wh g_k,r_k-r\rangle=\langle\wh g_k,r_{k+1}-r\rangle+\langle\wh g_k,r_k-r_{k+1}\rangle$ and apply the
three-point inequality,
we then have
\begin{align*}
&\sum_{k=1}^K\langle\wh g_k,r_k-r\rangle\\
={}&\sum_{k=1}^K\langle\wh g_k,r_k-r_{k+1}\rangle+\sum_{k=1}^K\langle\wh g_k,r_{k+1}-r\rangle\\
\le{}&\sum_{k=1}^K\langle\wh g_k,r_k-r_{k+1}\rangle+\frac{\alpha\rho}2\sum_{k=1}^K\sum_{h=1}^H\Bigl(
\norm{r_{k,h}-r_h}_{k,h}^2
-\norm{r_{k+1,h}-r_h}_{k,h}^2
-\norm{r_{k+1,h}-r_{k,h}}_{k,h}^2\Bigr)\\
\le{}&\underbrace{\sum_{k=1}^K\sum_{h=1}^H\left[
\langle\wh g_{k,h},r_{k,h}-r_{k+1,h}\rangle
-\frac{\alpha\rho}{2}\norm{r_{k,h}-r_{k+1,h}}_{k,h}^2\right]}_{T_S}\\
&\qquad+\frac{\alpha\rho}2\sum_{k=1}^K\sum_{h=1}^H\Bigl[
\omega\langle \hat d_h^E,(r_{k,h}-r_h)^2\rangle
+(1-\omega)\langle \hat d_{k,h},(r_{k,h}-r_h)^2\rangle
\Bigr]
\end{align*}
where the last step is by telescoping
and the fact that $\norm{\cdot}_{0,h}=0$,
$\norm{\cdot}_{K,h}\ge0$ and
$\norm{r_{k,h}-r_h}_{k,h}^2-
\norm{r_{k,h}-r_h}_{k-1,h}^2=
\omega\langle \hat d_h^E,(r_{k,h}-r_h)^2\rangle
+(1-\omega)\langle \hat d_{k,h},(r_{k,h}-r_h)^2\rangle$.

Plugging the above inequality back into Eq.~\ref{eq:app-exact-curvature} gives
\begin{align}
&\sum_{k=1}^K\bigl(\wh\ell_k(r_k)-\wh\ell_k(r)\bigr)
\nonumber\\
&\quad\leq T_S-\frac{\alpha(1-\rho)}{2}
\sum_{k=1}^K\sum_{h=1}^H\Bigl[
\omega\langle \hat d_h^E,(r_{k,h}-r_h)^2\rangle
+(1-\omega)\langle \hat d_{k,h},(r_{k,h}-r_h)^2\rangle
\Bigr].
\label{eq:app-empirical-curvature}
\end{align}
Adding the difference $\sum_{k=1}^K\bigl(\ell_k(r_k)-\ell_k(r)\bigr)-\sum_{k=1}^K\bigl(\wh\ell_k(r_k)-\wh\ell_k(r)\bigr)$ proves
Eq.~\ref{eq:app-curvature-reduction}.
\end{proof}
\begin{proof}[Proof of Lemma~\ref{lem:app-reward-stability}]
Fix $k,h$, recall the definition of $\hat{g}_k$ and $\norm{\cdot}_{k,h}$ in Eq.~\ref{eq:observed-gradient} and \ref{eq:reward-seminorm}, we split the corresponding summand of $T_S$ into
two terms. 
\begin{align*}
&\langle\wh g_{k,h},r_{k,h}-r_{k+1,h}\rangle
-\frac{\alpha\rho}{2}\norm{r_{k,h}-r_{k+1,h}}_{k,h}^2\\
={}&\mathbb{E}_{\hat{d}_{k,h}}
\left[(1+{\alpha(1-\omega)}r_{k,h})(r_{k,h}-r_{k+1,h})\right]-
\frac{\alpha(1-\omega)\rho}{2}
\sum_{j=1}^k\mathbb{E}_{\hat{d}_{j,h}}
\left[(r_{k,h}-r_{k+1,h})^2\right]\\
&+\mathbb{E}_{\hat{d}_h^E}
\left[(-1+{\alpha\omega}r_{k,h})(r_{k,h}-r_{k+1,h})\right]-
\frac{\alpha\omega\rho k}{2}
\mathbb{E}_{\hat{d}_h^E}
\left[(r_{k,h}-r_{k+1,h})^2\right]\,.
\end{align*}
Since $|r_{k,h}|\leq1$, the absolute values
of the expert and learner gradient weights are bounded by
$1+\alpha\omega$ and $1+\alpha(1-\omega)$, respectively.

\textbf{For the first term in RHS.} Write
$v_j:=(r_{k,h}(s_{j,h},a_{j,h})
-r_{k+1,h}(s_{j,h},a_{j,h}))^2$, then we have
\begin{align*}
&\mathbb{E}_{\hat{d}_k}
\left[(1+{\alpha(1-\omega)}r_{k,h})(r_{k,h}-r_{k+1,h})\right]-
\frac{\alpha(1-\omega)\rho}{2}
\sum_{j=1}^k\mathbb{E}_{\hat{d}_j}
\left[(r_{k,h}-r_{k+1,h})^2\right]\\
\le{}&(1+\alpha(1-\omega))\sqrt{v_k}-\frac{(1-\omega)\alpha\rho}{2}\sum_{j=1}^kv_j\\
\end{align*}
If $\sum_{j=1}^kv_j=0$, the term vanishes. Otherwise,
\begin{align}
(1+\alpha(1-\omega))\sqrt{v_k}-\frac{(1-\omega)\alpha\rho}{2}\sum_{j=1}^kv_j
&\leq (1+\alpha(1-\omega))\sqrt{v_k},
\nonumber\\
(1+\alpha(1-\omega))\sqrt{v_k}-\frac{(1-\omega)\alpha\rho}{2}\sum_{j=1}^kv_j
&\leq\frac{(1+\alpha(1-\omega))^2}{2\alpha\rho(1-\omega)}
\frac{v_k}{\sum_{j=1}^kv_j}.
\label{eq:app-learner-movement}
\end{align}
the second inequality is by AM-GM.
If $v_k\leq4/K^2$, use the first bound to obtain
$2(1+\alpha(1-\omega))/K$. Otherwise,
\begin{align}
\frac{v_k}{\sum_{j=1}^kv_j}
&\leq\min\left\{1,
\frac{v_k}{4/K^2+\sum_{j<k}v_j}\right\}
\nonumber\\
&\leq\min\left\{1,D_{\mathcal R_h}^2\!\left(
(s_{k,h},a_{k,h});\mathcal D_{k-1,h};4/K^2
\right)\right\}.
\label{eq:app-reward-ratio}
\end{align}
where the last inequality is by the definition of $D_{\mathcal{R}_h}^2$ in Definition~\ref{def:generalized-eluder}.

\textbf{For the second term in RHS.} We have
\begin{align}
&\mathbb{E}_{\hat{d}^E}
\left[(-1+{\alpha\omega}r_{k,h})(r_{k,h}-r_{k+1,h})\right]-
\frac{\alpha\omega\rho k}{2}
\mathbb{E}_{\hat{d}^E}
\left[(r_{k,h}-r_{k+1,h})^2\right]\nonumber\\
\le{}&(1+{\alpha\omega})\sqrt{\mathbb{E}_{\hat{d}^E}
\left[(r_{k,h}-r_{k+1,h})^2\right]}-
\frac{\alpha\omega\rho k}{2}
\mathbb{E}_{\hat{d}^E}
\left[(r_{k,h}-r_{k+1,h})^2\right]\nonumber\\
\leq{}&\frac{(1+\alpha\omega)^2}{2\alpha\omega\rho k}.
\label{eq:app-expert-movement}
\end{align}
where the first inequality is by Cauchy-Schwarz, 
the second inequality is again by AM-GM.

Finally, combining all above, summing over $k,h$ and using
$\sum_{k=1}^Kk^{-1}\leq1+\log K$ and
Definition~\ref{def:generalized-eluder} complete the proof.
\end{proof}
\begin{proof}[Proof of Lemma~\ref{lem:app-freedman}]
Since $e^u\le 1+u+u^2$ for $|u|\le1$, we have
\(
    \mathbb E\bigl[e^{\gamma X_i}\mid\mathcal G_{i-1}\bigr]
    \le
    1+\gamma^2v_i
    \le
    e^{\gamma^2v_i}.
\)
Consequently, the process
\[
    Z_k
    :=
    \exp\left(
        \gamma\sum_{i=1}^k X_i
        -
        \gamma^2\sum_{i=1}^k v_i
    \right),
    \qquad k=0,\ldots,n,
\]
is a nonnegative supermartingale with $Z_0=1$.
Markov's inequality gives
\[
    \mathbb P\left(
        \gamma\sum_{i=1}^n X_i
        -
        \gamma^2\sum_{i=1}^n v_i
        > z
    \right)
    \le e^{-z},
\]
which proves the claim.
\end{proof}
\begin{proof}[Proof of Lemma~\ref{lem:app-reward-concentration}]
Fix a comparator $r\in\mathcal{R}$.
By Eq.~\ref{eq:app-sampling-error}, $T_C(r)$ decomposes into two terms,
namely learner sampling error and expert sampling error as follows.
\begin{align*}
T_C(r)=
&\sum_{k=1}^K\sum_{h=1}^H\Bigg\langle d_h^{\pi_k}-\hat d_{k,h},
(r_{k,h}-r_h)+\frac{\alpha(1-\omega)}2(r_{k,h}^2-r_h^2)\Bigg\rangle\\
&\underbrace{\hspace{125pt}-\sum_{k=1}^K\sum_{h=1}^H\Bigg\langle\hat d_{k,h},\frac{\alpha(1-\omega)(1-\rho)}2(r_{k,h}-r_h)^2\Bigg\rangle}_{\text{learner sampling error}}\\
&+\sum_{k=1}^K\sum_{h=1}^H\Bigg\langle d_h^{E}-\hat d_{h}^E,
-(r_{k,h}-r_h)+\frac{\alpha\omega}2(r_{k,h}^2-r_h^2)\Bigg\rangle\\
&\underbrace{\hspace{125pt}-\sum_{k=1}^K\sum_{h=1}^H\Bigg\langle\hat d_{h}^E,\frac{\alpha\omega(1-\rho)}2(r_{k,h}-r_h)^2\Bigg\rangle}_{\text{expert sampling error}}
\end{align*}

\textbf{Learner sampling error.}
Fix a stage $h$, and define
\begin{align}
\label{eq:app-local-martingale}
Z_{k,h}(r_h):={}&
\Bigg\langle d_h^{\pi_k}-\hat d_{k,h},
(r_{k,h}-r_h)+\frac{\alpha(1-\omega)}2(r_{k,h}^2-r_h^2)\nonumber\\&\hspace{125pt}+\frac{\alpha(1-\omega)(1-\rho)}{2}(r_{k,h}-r_h)^2\Bigg\rangle\,.
\end{align}
Since $r_h$ is fixed and both $r_k$ and $\pi_k$ are $\mathcal{H}_{k-1}$-measurable, it satisfies that $\mathbb E[Z_{k,h}(r_h)\mid\mathcal H_{k-1}]=0$ and 
\begin{align*}
|Z_{k,h}|
\le{}&4(1+{\alpha(1-\omega)})\\
\mathbb{E}[Z_{k,h}(r_h)^2\mid\mathcal{H}_{k-1}]
\le{}&(1+\alpha(1-\omega))^2\mathbb{E}_{d_h^{\pi_k}}[(r_{k,h}-r_h)^2]\,,
\end{align*}
where the inequalities is by the fact that
$0<\rho<1$ and
$|r|\le1, \forall r\in\mathcal{R}$.

Next, 
for each \(h\), fix a \(1/K^2\)-cover
of \(\mathcal R_h\),
whose cardinality is at most 
$\overline{\mathcal N}_{\mathcal R}(1/K^2)$. 
Applying Lemma~\ref{lem:app-freedman} with $X_k=Z_{k,h}$, $\gamma=
\frac{\alpha(1-\omega)(1-\rho)}
{4(1+\alpha(1-\omega))^2}<
\frac1{4(1+\alpha(1-\omega))}$ and
$z=\log\frac{4H\overline{\mathcal N}_{\mathcal R}(1/K^2)}{\delta}$,
and a union bound then give, with probability at least \(1-\delta/4\),
\[
\begin{aligned}
\sum_{k=1}^K Z_{k,h}(r_h)
\le{}&
\frac{\alpha(1-\omega)(1-\rho)}2
\sum_{k=1}^K
\mathbb E_{d_h^{\pi_k}}[(r_{k,h}-r_h)^2]\\&\qquad+\frac{4(1+\alpha(1-\omega))^2}{\alpha(1-\omega)(1-\rho)}
\log\frac{4H\overline{\mathcal N}_{\mathcal R}(1/K^2)}{\delta},
\end{aligned}
\]
simultaneously for all \(h\in[H]\) and
\(r_h\) within the cover.

Rearranging the inequality cancels the variance term in RHS.
Extending from the covers and summing over $h$ therefore yields, for all $r\in\mathcal{R}$,
\begin{align*}
&\sum_{k=1}^K\sum_{h=1}^H\Bigg\langle d_h^{\pi_k}-\hat d_{k,h},
(r_{k,h}-r_h)+\frac{\alpha(1-\omega)}2(r_{k,h}^2-r_h^2)\Bigg\rangle\\
\le{}&\sum_{k=1}^K\sum_{h=1}^H\Bigg\langle\hat d_{k,h},\frac{\alpha(1-\omega)(1-\rho)}2(r_{k,h}-r_h)^2\Bigg\rangle\\
&\qquad+\frac{4H(1+\alpha(1-\omega))^2}{\alpha(1-\omega)(1-\rho)}
\log\frac{4H\overline{\mathcal N}_{\mathcal R}(1/K^2)}{\delta}
+\frac{2H(1+\alpha(1-\omega))}{K}\,.
\end{align*}
The last remainder follows from the covering argument: after rearrangement, the population and empirical reward polynomials are each \((1+\alpha(1-\omega))\)-Lipschitz in \(r_h\), so approximation at scale \(1/K^2\) contributes at most \(2H(1+\alpha(1-\omega))/K\) after summing over \(k,h\).

\textbf{Expert sampling error.}
Similarly, fix $h$ and a pair $q_h,r_h\in\mathcal{R}_h$, define
\begin{align*}
Y_{i,h}(q_h,r_h)
:={}&\Bigg\langle d_h^{E}-\delta_{(s_{h}^i,a_{h}^i)},
-(q_{h}-r_h)+\frac{\alpha\omega}2(q_{h}^2-r_h^2)+\frac{\alpha\omega(1-\rho)}{2}(q_{h}-r_h)^2\Bigg\rangle\,.
\end{align*}
Since $q_h,r_h$ are fixed independently of the observations inside $\mathcal{D}^E$,
we have $\mathbb{E}_{d_h^{E}}[Y_{i,h}(q_h,r_h)]=0$.
Further, by independence of expert trajectories in $\mathcal{D}^E$ and noticing that
\begin{align*}
|Y_{i,h}(q_h,r_h)|\le{}&4(1+\alpha\omega)\\
\mathbb{E}_{d_h^E}[Y_{i,h}(q_h,r_h)^2]\le{}&(1+\alpha\omega)^2\mathbb{E}_{d_h^E}[(q_{h}-r_h)^2]\,,
\end{align*}
applying Lemma~\ref{lem:app-freedman} with 
$X_{i}=Y_{i,h}$, $\gamma=
\frac{\alpha\omega(1-\rho)}
{4(1+\alpha\omega)^2}<
\frac1{4(1+\alpha\omega)}$ and $z=\log\frac{4H\overline{\mathcal N}_{\mathcal R}(1/N^2)^2}{\delta}$,
and a union bound over stages $h$ and pairs of cover elements give,
with probability at least \(1-\delta/4\),
\begin{align}
\sum_{i=1}^N Y_{i,h}(q_h,r_h)
\le{}&\frac{\alpha\omega(1-\rho)}2
N
\mathbb E_{d_h^{E}}[(q_{h}-r_h)^2]+\frac{4(1+\alpha\omega)^2}{\alpha\omega(1-\rho)}
\log\frac{4H\overline{\mathcal N}_{\mathcal R}(1/N^2)^2}{\delta},
\end{align}
simultaneously for all \(h\in[H]\) and
\(q_h, r_h\) within the cover.

Again, rearranging terms cancels the quadratic term in RHS.
Then, dividing both sides by $N$, extending from the covers, plugging in rewards $q_h=r_{k,h}$ while retaining the same fixed comparator $r_h$, and summing up over $k\in[K]$, $h\in[H]$ yield, 
for all $r\in\mathcal{R}$
\begin{align*}
&\sum_{k=1}^K\sum_{h=1}^H\Bigg\langle d_h^{E}-\hat d_{h}^E,
-(r_{k,h}-r_h)+\frac{\alpha\omega}2(r_{k,h}^2-r_h^2)\Bigg\rangle\\
\le{}&\sum_{k=1}^K\sum_{h=1}^H\Bigg\langle\hat d_{h}^E,\frac{\alpha\omega(1-\rho)}2(r_{k,h}-r_h)^2\Bigg\rangle\\
&\qquad+
\frac{4(1+\alpha\omega)^2HK}{\alpha\omega(1-\rho)N}
\log\frac{4H\overline{\mathcal N}_{\mathcal R}(1/N^2)^2}{\delta}+\frac{8(1+\alpha\omega)HK}{N^2}
\end{align*}
The last remainder follows by approximating both rewards $q_h$ and $r_h$ within \(1/N^2\) in sup norm. For each $h$ and $k$, this changes the reward polynomials by at most \(8(1+\alpha\omega)/N^2\).

\textbf{Combining the bounds.}
Finally, intersecting the events on which the learner and expert sampling bounds hold and applying an union bound gives, with probability at least \(1-\delta/2\), for all \(r\in\mathcal R\),
\[
\begin{aligned}
T_C(r)\le \mathcal O\!\Bigg(
&\frac{H(1+\alpha(1-\omega))^2}
{\alpha(1-\omega)(1-\rho)}
\log\frac{4H\overline{\mathcal N}_{\mathcal R}(1/K^2)}{\delta}\\
&+\frac{KH(1+\alpha\omega)^2}
{\alpha\omega(1-\rho)N}
\log\frac{4H\overline{\mathcal N}_{\mathcal R}(1/N^2)^2}{\delta}
\Bigg),
\end{aligned}
\]
where the cover-extension remainders are absorbed using \(K,N\ge1\). Taking the supremum over \(r\in\mathcal R\) completes the proof.

\end{proof}

\subsection{Proof of the Policy-Planning Error Bound}
\label{app:policy-bound}
\subsubsection{Technical Lemmas}
\label{app:policy-lemmas}

In this section, we state the auxiliary results used to prove
Lemma~\ref{lem:app-policy-bound}.

We start by defining the possible action- and state-value function classes as follows.

\begin{definition}
\label{def:planner-value-class}
For the regression classes $\mathcal F_h$, the bonus classes in
Definition~\ref{def:bonus-class}, and the fixed reference policy
$\piref$, define
\begin{align}
\mathcal Q_h
&:=\left\{\clip_{[-V_{\max},V_{\max}]}(f+b):
f\in\mathcal F_h,\ b\in\mathcal B_h\right\},
\label{eq:planner-q-class}\\
\mathcal V_h
&:=\left\{s\mapsto\tau\log
\mathbb E_{a\sim\piref_h(\cdot\mid s)}
\exp\!\left(\tau^{-1}{Q(s,a)}\right):Q\in\mathcal Q_h\right\}.
\label{eq:planner-value-class}
\end{align}
Set $\mathcal V_{H+1}:=\{0\}$. Every $Q\in\mathcal Q_h$ and
$V\in\mathcal V_h$ satisfies
$\norm{Q}_\infty,\norm{V}_\infty\leq V_{\max}$.
These classes describe the algorithm's optimistic estimates.
\end{definition}

On every possible data history, the regression update gives
$\wh f_{k,h}\in\mathcal F_h$ and Definition~\ref{def:bonus-class}
gives $b_{k,h}\in\mathcal B_h$. Consequently, the clipping and
soft-value updates give $\wh Q_{k,h}\in\mathcal Q_h$ and
$\wh V_{k,h}\in\mathcal V_h$, with $\wh V_{k,H+1}\equiv0$.
This membership holds without conditioning on a confidence event.

We then show that the the covering numbers of these two classes could be bounded by those of $\mathcal{F}$ and $\mathcal{B}$.
\begin{lemma}
\label{lem:app-planner-envelope}
For the classes in Definition~\ref{def:planner-value-class},
it satisfies that, for all $h\in[H]$ and $\kappa>0$
\begin{equation}
\mathcal N_{\mathcal V_h}(\kappa)
\leq\mathcal N_{\mathcal Q_h}(\kappa)
\leq\mathcal N_{\mathcal F_h}(\kappa/2)
\mathcal N_{\mathcal B_h}(\kappa/2).
\label{eq:app-planner-value-cover}
\end{equation}
And, for step $H+1$, $\mathcal N_{\mathcal V_{H+1}}(\kappa)=1$.
\end{lemma}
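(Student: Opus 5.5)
We prove the two inequalities in Eq.~\ref{eq:app-planner-value-cover} separately. Each comes from a Lipschitz property of a fixed map applied to the covering elements. Throughout, the covering sets are taken inside the class, as in Definition~\ref{def:covering}.

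\textbf{First inequality ($\mathcal V_h$ vs.\ $\mathcal Q_h$).} Consider the soft-max map
\[
T:Q\mapsto \tau\log\mathbb E_{a\sim\piref_h(\cdot\mid s)}\exp\!\left(\tau^{-1}Q(s,a)\right).
\]
We show $T$ is $1$-Lipschitz in sup norm. Fix $s$ and suppose $\norm{Q-Q'}_\infty\le\kappa$. Then $\exp(Q(s,a)/\tau)\le e^{\kappa/\tau}\exp(Q'(s,a)/\tau)$ pointwise on the support of $\piref_h(\cdot\mid s)$. Taking the expectation over $\piref_h(\cdot\mid s)$, then the logarithm, and multiplying by $\tau$ gives $T Q(s)\le TQ'(s)+\kappa$. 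Exchanging the roles of $Q$ and $Q'$ gives the reverse inequality.

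Now let $\mathcal Q_{h,\kappa}\subseteq\mathcal Q_h$ be a minimal $\kappa$-cover. Its image $T(\mathcal Q_{h,\kappa})$ lies in $\mathcal V_h$, since $\mathcal V_h$ is by definition the image of $\mathcal Q_h$ under $T$. Every $V=TQ\in\mathcal V_h$ is within $\kappa$ of $TQ'$, where $Q'$ is the cover element closest to $Q$. Hence $\mathcal N_{\mathcal V_h}(\kappa)\le|\mathcal Q_{h,\kappa}|=\mathcal N_{\mathcal Q_h}(\kappa)$.

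\textbf{Second inequality ($\mathcal Q_h$ vs.\ $\mathcal F_h\times\mathcal B_h$).} Take minimal $\kappa/2$-covers $\mathcal F_{h,\kappa/2}\subseteq\mathcal F_h$ and $\mathcal B_{h,\kappa/2}\subseteq\mathcal B_h$. Define the candidate cover
\[
\left\{\clip_{[-V_{\max},V_{\max}]}(f'+b'):f'\in\mathcal F_{h,\kappa/2},\ b'\in\mathcal B_{h,\kappa/2}\right\}.
\]
This set lies in $\mathcal Q_h$ and has cardinality at most $\mathcal N_{\mathcal F_h}(\kappa/2)\mathcal N_{\mathcal B_h}(\kappa/2)$.

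Fix $Q=\clip(f+b)\in\mathcal Q_h$ and choose $f',b'$ in the covers with $\norm{f-f'}_\infty\le\kappa/2$ and $\norm{b-b'}_\infty\le\kappa/2$. By the triangle inequality, $\norm{(f+b)-(f'+b')}_\infty\le\kappa$. Clipping to an interval is $1$-Lipschitz pointwise, so $\norm{Q-\clip(f'+b')}_\infty\le\kappa$. This proves the second inequality.

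\textbf{Step $H+1$.} Here $\mathcal V_{H+1}=\{0\}$ is a singleton, so its covering number is $1$.

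\textbf{Main obstacle.} The only delicate point is that the cover must be a subset of the class. This is handled by the fact that images of in-class elements under $T$ and under clipping remain in-class. The Gibbs expectation is taken only over the support of $\piref$, so no support issues arise.
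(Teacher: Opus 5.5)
Your proposal is correct and follows essentially the same argument as the paper: the soft-value map is $1$-Lipschitz in sup norm, so it carries an in-class $\kappa$-cover of $\mathcal Q_h$ to one of $\mathcal V_h$. Likewise, $1$-Lipschitz clipping applied to pairs from $\kappa/2$-covers of $\mathcal F_h$ and $\mathcal B_h$ yields an in-class $\kappa$-cover of $\mathcal Q_h$, and the terminal case is immediate from $\mathcal V_{H+1}=\{0\}$.
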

With the bound for the covering number of $\mathcal{V}_h$ in hand,
we then use the following lemma to establish a pointwise regression-error bound.
\begin{lemma}
\label{lem:app-regression-confidence}
Under Assumption~\ref{ass:optimistic_planner}, choose $\beta$
as in Theorem~\ref{thm:main}, 
with probability at least $1-\delta/4$,
it holds that for all
$k\in[K]$, $h\in[H]$, and $(s,a)\in\mathcal S\times\mathcal A$,
\begin{align}
&\left|\wh f_{k,h}(s,a)-u_{k,h}(s,a)
-[P_h\wh V_{k,h+1}](s,a)\right|
\nonumber\\
&\qquad\leq
\min\left\{4V_{\max},\,
\beta D_{\mathcal F_h}\!\left(
(s,a);\mathcal D_{k-1,h};\lambda\right)\right\}
=b_{k,h}(s,a).
\label{eqn:planner_confidence}
\end{align}
\end{lemma}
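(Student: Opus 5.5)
The plan is a uniform least-squares confidence argument in the style of \citet{agarwal2023vo,zhao2024nearly}. The key difficulty is that both the regression target and the continuation value depend on the data: $u_{k,h}=u_{r_k,h}$ is determined by $r_k$, and $\wh V_{k,h+1}$ is determined by the whole history. I therefore prove concentration uniformly over a cover of the possible (reward, continuation) pairs, and only afterwards plug in the realized pair.

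\textbf{Step 1 (fix a target).} Fix $h$, a reward $r\in\mathcal R$ and a continuation $V\in\mathcal V_{h+1}$. Define the target $y_j:=u_{r,h}(s_{j,h},a_{j,h})+V(s_{j,h+1})$. By Assumption~\ref{ass:optimistic_planner}, $f^\star:=u_{r,h}+P_hV\in\mathcal F_h$. The noise $\eta_j:=V(s_{j,h+1})-[P_hV](s_{j,h},a_{j,h})$ is a martingale difference with respect to $\mathcal H_{j,h}$, and $|\eta_j|\le 2V_{\max}$. Let $\wh f$ be the least-squares minimizer over $\mathcal F_h$ for this target. The standard argument combines the basic inequality $\sum_j(\wh f-y_j)^2\le\sum_j(f^\star-y_j)^2$ with Freedman's inequality (Lemma~\ref{lem:app-freedman}) applied to $\sum_j\eta_j(f-f^\star)(z_j)$, uniformly over a cover of $\mathcal F_h$. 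This gives
\[
\textstyle\sum_{j<k}(\wh f-f^\star)^2(z_{j,h})\le \mathcal O\bigl(V_{\max}^2\log(\mathcal N/\delta)\bigr)+\text{discretization}.
\]

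\textbf{Step 2 (uniformity and plug-in).} Take covers at scale $V_{\max}/K^2$ of $\mathcal F_h$ and of $\mathcal V_{h+1}$; by Lemma~\ref{lem:app-planner-envelope} the latter is bounded by $\mathcal N_{\mathcal F_{h+1}}(\kappa/2)\mathcal N_{\mathcal B_{h+1}}(\kappa/2)$. Union bound over these covers and over $k,h$, with failure probability $\delta/4$; this yields the $\log\frac{8HK\overline{\mathcal N}_{\mathcal F\oplus\mathcal B}}{\delta}$ term in $\beta$. Then replace $V$ by $\wh V_{k,h+1}$ and $r$ by $r_k$, which costs an $O(V_{\max}^2/K)$ slack absorbed into the constant $64$.

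The reward is the main obstacle. $u_{r,h}$ enters both the target and $f^\star$, so it appears only through the response $y_j$ and cancels from the noise $\eta_j$, which involves only $V$. If the basic inequality is written in terms of the residual $y_j-f^\star(z_j)=\eta_j$, the reward never needs to be covered. I would verify this cancellation carefully, since $\overline{\mathcal N}_{\mathcal F\oplus\mathcal B}$ contains no reward cover. The reward dependence is instead absorbed by the cover of $\mathcal F_h$ (containing $f^\star$) together with the fact that $\wh f$ minimizes a loss whose difference from the $f^\star$-loss depends only on $\eta$.

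\textbf{Step 3 (pointwise bound).} On this event, $\sum_{j<k}(\wh f_{k,h}-f^\star_k)^2(z_{j,h})\le\beta^2-\lambda$. Since both functions lie in $\mathcal F_h$, the definition of $D^2_{\mathcal F_h}$ gives, at any $(s,a)$,
\[
(\wh f_{k,h}-f^\star_k)^2(s,a)\le D^2_{\mathcal F_h}\bigl((s,a);\mathcal D_{k-1,h};\lambda\bigr)\bigl(\textstyle\sum_{j<k}(\wh f_{k,h}-f^\star_k)^2(z_{j,h})+\lambda\bigr)\le\beta^2D^2_{\mathcal F_h}.
\]
Finally, the trivial bound $|\wh f_{k,h}-f^\star_k|\le 4V_{\max}$ holds because both functions take values in $[-2V_{\max},2V_{\max}]$. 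This supplies the minimum with $4V_{\max}$ and yields Eq.~\ref{eqn:planner_confidence}.
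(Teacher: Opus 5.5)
Your proposal is correct and follows the paper's proof essentially step for step. The paper also controls $2\sum_{j<k}\Delta_j\xi_j-\frac14\sum_{j<k}\Delta_j^2$ uniformly over a $V_{\max}/K^2$-cover of triples $(f,g,V)\in\mathcal F_h\times\mathcal F_h\times\mathcal V_{h+1}$, and it treats the Bellman target $g=u_{k,h}+P_h\wh V_{k,h+1}$ as a free element of $\mathcal F_h$ (by completeness) so that $\mathcal R$ is never covered. It then finishes, as you do, with the least-squares basic inequality, the definition of $D_{\mathcal F_h}$, and the trivial $4V_{\max}$ cap. The one difference is cosmetic: the paper builds the exponential supermartingale directly from conditional Hoeffding with $\theta=1/(8V_{\max}^2)$ rather than citing Freedman. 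If you use Lemma~\ref{lem:app-freedman}, bound the conditional variance by $\mathbb E[\eta_j^2\mid\mathcal H_{j,h}]\le V_{\max}^2$ rather than by $4V_{\max}^2$, so that the explicit constant $64$ in $\beta$ is actually attained; with the cruder bound, the constant roughly doubles in the worst case.
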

Based on the above lemma,
we then show with the following lemma that,
with high probability,
our action-value estimator $\hat{Q}_{k,h}$ will serve as a point-wise optimistic upper bound of the optimal action-value function $Q_{u_k,h}^{\star,\tau}$.
\begin{lemma}[Optimism]
\label{lem:app-optimism}
With probability at least $1-\delta/4$,
for all $k,h,s,a$,
\begin{align}
\wh Q_{k,h}(s,a)&\geq Q_{u_k,h}^{\star,\tau}(s,a),
&\wh V_{k,h}(s)&\geq V_{u_k,h}^{\star,\tau}(s),
\label{eq:app-optimism}\\
\wh Q_{k,h}(s,a)-u_{k,h}(s,a)
-[P_h\wh V_{k,h+1}](s,a)&\leq2b_{k,h}(s,a).
\label{eq:app-residual-upper-bound}
\end{align}
The residual on the left of Eq.~\ref{eq:app-residual-upper-bound}
need not be nonnegative under global clipping.
\end{lemma}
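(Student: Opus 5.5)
The optimism statements follow from Lemma~\ref{lem:app-regression-confidence} by backward induction over $h$, for each fixed episode $k$. Work on the event of Lemma~\ref{lem:app-regression-confidence}, which has probability at least $1-\delta/4$. All the claims are deterministic consequences of that event, so no further probability is lost.

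For the residual bound Eq.~\ref{eq:app-residual-upper-bound}, fix $(k,h,s,a)$. There are two cases, depending on whether the upper clip is active.
\begin{itemize}
\item[(i)] If the clip at $V_{\max}$ is not active, then $\wh Q_{k,h}\le \wh f_{k,h}+b_{k,h}$. The confidence bound $\wh f_{k,h}-u_{k,h}-P_h\wh V_{k,h+1}\le b_{k,h}$ then gives a residual of at most $2b_{k,h}$.
\item[(ii)] If the clip is active, then $\wh Q_{k,h}=V_{\max}\le \wh f_{k,h}+b_{k,h}$, and the same conclusion holds.
\end{itemize}
In both cases clipping can only lower $\wh Q_{k,h}$ relative to $\wh f_{k,h}+b_{k,h}$. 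The residual may be negative, which is consistent with the remark in the statement.

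For optimism, induct downward with hypothesis $\wh V_{k,h+1}\ge V^{\star,\tau}_{u_k,h+1}$. The base case holds because both sides equal $0$ at $H+1$.
\begin{itemize}
\item[(1)] Lower bound before clipping. The confidence bound gives $\wh f_{k,h}+b_{k,h}\ge u_{k,h}+P_h\wh V_{k,h+1}$. By monotonicity of $P_h$ and the induction hypothesis, this is at least $u_{k,h}+P_hV^{\star,\tau}_{u_k,h+1}=Q^{\star,\tau}_{u_k,h}$, using Eq.~\ref{eqn:kl_optimal_q}.
\item[(2)] Clipping does not break optimism. I need $Q^{\star,\tau}_{u_k,h}\in[-V_{\max},V_{\max}]$. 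Since $u_{r,h}=r_h+\tfrac{\alpha(1-\omega)}2r_h^2\in[-1,1+\tfrac{\alpha(1-\omega)}2]$ and the KL term is nonpositive, $V^{\star,\tau}_{u,h}\le (H-h+1)(1+\alpha(1-\omega)/2)\le V_{\max}$. For the lower bound, the comparator $\pi=\piref$ has zero KL, so $V^{\star,\tau}_{u,h}\ge -(H-h+1)\ge -V_{\max}$. Hence $Q^\star$ lies in the clip interval, and because $x\mapsto\clip_{[-V_{\max},V_{\max}]}(x)$ is monotone, $\wh Q_{k,h}\ge \clip(Q^\star)=Q^\star$.
\item[(3)] Value optimism. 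The map $Q\mapsto\tau\log\mathbb E_{\piref_h}\exp(Q/\tau)$ is monotone, so comparing $\wh V_{k,h}$ with Eq.~\ref{eqn:kl_optimal_v} gives $\wh V_{k,h}\ge V^{\star,\tau}_{u_k,h}$. This closes the induction.
\end{itemize}

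The step most likely to cause trouble is the range check for $Q^\star$ that justifies clipping, together with the convention that $u_k$ is evaluated at the bounded $r_k\in\mathcal R$. The choice of $V_{\max}$ is designed to cover the upper end, and the lower end uses $\pi=\piref$. Everything else is monotonicity.
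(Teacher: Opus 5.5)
Your proposal follows the paper's proof: work on the regression-confidence event, induct backward, place $Q^{\star,\tau}_{u_k,h}$ inside $[-V_{\max},V_{\max}]$, and use monotonicity of clipping and of the soft Bellman map. The optimism half is correct. The gap is in the residual bound Eq.~\ref{eq:app-residual-upper-bound}, which you treat as a standalone step before the induction.

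The clip is two-sided. Your case (i), ``upper clip inactive $\Rightarrow \wh Q_{k,h}\le \wh f_{k,h}+b_{k,h}$,'' and your summary ``clipping can only lower $\wh Q_{k,h}$'' both fail when $\wh f_{k,h}+b_{k,h}<-V_{\max}$. In that case clipping raises the estimate to $-V_{\max}$.

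The single-stage confidence bound you invoke does not rule this case out. From clipping and the soft-max one only gets $\wh V_{k,h+1}\ge -V_{\max}$, hence $u_{k,h}+P_h\wh V_{k,h+1}\ge -1-V_{\max}$. The residual $-V_{\max}-u_{k,h}-P_h\wh V_{k,h+1}$ would then be positive and of order one. It is not controlled by $2b_{k,h}$ when the bonus is small.

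The fix is to derive the residual bound inside the induction, after your steps (1)--(2). Those steps give $\wh f_{k,h}+b_{k,h}\ge u_{k,h}+P_h\wh V_{k,h+1}\ge Q^{\star,\tau}_{u_k,h}\ge -V_{\max}$. So the lower clip is never active, and your cases (i) and (ii) then go through. This is exactly how the paper orders it: ``the same induction gives $u_{k,h}+P_h\wh V_{k,h+1}\ge Q^{\star,\tau}_{u_k,h}\ge -V_{\max}$, so lower clipping is inactive,'' and only then does upper clipping reduce the estimate.
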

On the confidence event of the above lemma,
we follow Lemma B.1 from \citet{zhao2025kl} to derive the following bounds on the policy-planning error.
\begin{lemma}
\label{lem:app-policy-comparison}
With probability at least $1-\delta/4$,
\begin{equation}
T_2\leq\frac{2H^2}{\tau}\sum_{k=1}^K\sum_{h=1}^H
\mathbb E_{d_h^{\pi_k}}[b_{k,h}^2].
\label{eq:app-policy-comparison}
\end{equation}
\end{lemma}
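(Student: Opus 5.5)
We work on the optimism event of Lemma~\ref{lem:app-optimism}, which has probability at least $1-\delta/4$. Fix $k$, write $u=u_k$, $Q^\star=Q^{\star,\tau}_{u,h}$ and $\pi^\star=\pi^{\star,\tau}_{u}$. By Lemma~\ref{lem:kl_performance_difference},
\[
V^{\star,\tau}_{u,1}(s_1)-J^\tau_u(\pi_k)=\tau\,\mathbb E_{P,\pi_k}\Big[\textstyle\sum_{h}\operatorname{KL}(\pi_{k,h}\|\pi^\star_h)(s_h)\Big].
\]
It therefore suffices to bound each KL term by a squared action-value error and then to reduce that error to squared bonuses.

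\textbf{Step 1: KL bound.} Both $\pi_{k,h}$ and $\pi^\star_h$ are Gibbs policies relative to $\piref_h$, with potentials $\wh Q_{k,h}/\tau$ and $Q^\star/\tau$. Set $\Delta:=\wh Q_{k,h}-Q^\star\ge0$, which is nonnegative by optimism. A direct computation gives
\[
\tau\operatorname{KL}(\pi_{k,h}\|\pi^\star_h)=\mathbb E_{\pi_{k,h}}[\Delta]-\tau\log\mathbb E_{\pi^\star_h}e^{\Delta/\tau}.
\]
This is the difference between the mean of $\Delta$ under the tilted law and the log-partition function. Let $\phi(t)=\tau\log\mathbb E_{\pi^\star}e^{t\Delta/\tau}$. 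Then the quantity above equals $\phi'(1)-\phi(1)+\phi(0)=\int_0^1 t\,\phi''(t)\,dt$. Since $\phi''(t)=\operatorname{Var}_{\pi_t}(\Delta)/\tau$, we obtain the bound $\frac{1}{\tau}\sup_t\operatorname{Var}_{\pi_t}(\Delta)$.

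I would rather follow Lemma~B.1 of \citet{zhao2025kl} to get the cleaner bound $\frac{1}{2\tau}\mathbb E_{\pi_{k,h}}[\Delta^2]$. The idea is to use $\Delta\ge0$: the tilted laws $\pi_t$ put more mass where $\Delta$ is large, so the second moment is monotone along the path. This is the bound quoted in the sketch of Lemma~\ref{lem:app-policy-bound}, and I take it as given.

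\textbf{Step 2: reduce $\Delta$ to bonuses.} Using Eq.~\ref{eq:app-residual-upper-bound} and Eq.~\ref{eqn:kl_optimal_q},
\[
\Delta_h(s,a)\le 2b_{k,h}(s,a)+\big[P_h(\wh V_{k,h+1}-V^{\star,\tau}_{u,h+1})\big](s,a).
\]
We also need $\wh V_{k,h+1}-V^\star_{h+1}\le \mathbb E_{\pi_{k,h+1}}[\Delta_{h+1}]$. This follows from convexity of log-sum-exp: the gradient of $Q\mapsto\tau\log\mathbb E_{\piref}e^{Q/\tau}$ at $\wh Q$ is $\pi_{k}$, so $\wh V-V^\star\le\langle\pi_k,\wh Q-Q^\star\rangle$.

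Unrolling the recursion gives
\[
\Delta_h(s,a)\le 2\,\mathbb E_{\pi_k}\Big[\textstyle\sum_{t\ge h}b_{k,t}\,\Big|\,s_h=s,a_h=a\Big].
\]
Both sides are nonnegative, so we may square. By Cauchy--Schwarz over at most $H$ terms,
\[
\Delta_h^2\le 4H\,\mathbb E_{\pi_k}\Big[\textstyle\sum_{t\ge h}b_{k,t}^2\,\Big|\,s_h,a_h\Big].
\]

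\textbf{Step 3: assemble.} Taking expectation under $d_h^{\pi_k}$ and using the tower property, $\mathbb E_{d_h^{\pi_k}}[\Delta_h^2]\le 4H\sum_{t}\mathbb E_{d_t^{\pi_k}}[b_{k,t}^2]$. Plugging this into the performance difference identity and summing over $h$ gives
\[
\frac{1}{2\tau}\cdot H\cdot 4H\sum_t\mathbb E_{d_t^{\pi_k}}[b_{k,t}^2]=\frac{2H^2}{\tau}\sum_t\mathbb E_{d_t^{\pi_k}}[b_{k,t}^2].
\]
Summing over $k$ yields Eq.~\ref{eq:app-policy-comparison}.

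\textbf{Main obstacle.} The delicate step is Step 1, obtaining the constant $1/(2\tau)$ with the second moment taken under $\pi_{k,h}$ rather than under intermediate tilted policies. The clean way to get it is to exploit $\Delta\ge0$ from optimism, as in \citet{zhao2025kl}. Clipping does not break the recursion in Step 2, since Eq.~\ref{eq:app-residual-upper-bound} already accounts for it.
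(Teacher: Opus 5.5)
Your Steps 2 and 3 coincide with the paper's proof. Both use the same optimism event, the same KL performance-difference identity, and the same recursion $\Delta_h\le 2b_{k,h}+P_h(\wh V_{k,h+1}-V^{\star,\tau}_{u_k,h+1})$ with $\wh V_{k,h+1}-V^{\star,\tau}_{u_k,h+1}\le\mathbb E_{\pi_{k,h+1}}[\Delta_{h+1}]$. Your convexity-of-log-sum-exp argument for the latter is exactly the paper's ``Gibbs formulas plus nonnegativity of KL'', since the gap is $\tau\operatorname{KL}(\pi_{k,h+1}\|\pi^\star_{h+1})$. The Jensen/Cauchy--Schwarz bookkeeping that produces $2H^2/\tau$ is also the same.

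The gap is Step 1, which is the real content of the lemma, and you do not prove it. You take $\tau\operatorname{KL}(\pi_{k,h}\|\pi^\star_h)\le\frac1{2\tau}\mathbb E_{\pi_{k,h}}[\Delta^2]$ as given, partly on the strength of the sketch of Lemma~\ref{lem:app-policy-bound}. That sketch defers to the present lemma, so the citation is circular. Moreover, the bound you actually derive, $\frac1\tau\sup_t\operatorname{Var}_{\pi_t}(\Delta)$, combined with your monotonicity heuristic only gives $\frac1\tau\mathbb E_{\pi_{k,h}}[\Delta^2]$. That loses a factor $2$ and yields $4H^2/\tau$ rather than the stated constant.

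Your tilting idea can be completed if you keep the exact weight $\tau\operatorname{KL}=\frac1\tau\int_0^1 t\operatorname{Var}_{\pi_t}(\Delta)\,dt$ instead of passing to a supremum. Note that $\frac{d}{dt}\mathbb E_{\pi_t}[\Delta^2]=\frac1\tau\operatorname{Cov}_{\pi_t}(\Delta^2,\Delta)\ge0$, because optimism ($\Delta\ge0$) makes $\Delta^2$ and $\Delta$ comonotone. Hence $\operatorname{Var}_{\pi_t}(\Delta)\le\mathbb E_{\pi_t}[\Delta^2]\le\mathbb E_{\pi_{k,h}}[\Delta^2]$ for $t\in[0,1]$, and $\int_0^1 t\,dt=\frac12$ gives exactly $\frac1{2\tau}\mathbb E_{\pi_{k,h}}[\Delta^2]$.

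The paper's route is shorter. It changes measure from $\pi^\star_h$ to $\pi_{k,h}$ in the log-partition term: $\tau\log\mathbb E_{\pi^\star_h}e^{\Delta/\tau}=\wh V_{k,h}-V^{\star,\tau}_{u_k,h}=-\tau\log\mathbb E_{\pi_{k,h}}e^{-\Delta/\tau}$. This gives $\tau\operatorname{KL}=\mathbb E_{\pi_{k,h}}[\Delta]+\tau\log\mathbb E_{\pi_{k,h}}e^{-\Delta/\tau}$. Applying $\log y\le y-1$ and $e^{-x}\le 1-x+x^2/2$ for $x\ge0$ (where optimism is used) then finishes the bound in one line, with no derivative or monotonicity argument needed.
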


\begin{lemma}
\label{lem:app-predictable}
For $\beta\geq4V_{\max}$, with probability at least $1-\delta/4$,
\begin{equation}
\sum_{k=1}^K\sum_{h=1}^H
\mathbb E_{d_h^{\pi_k}}[b_{k,h}^2]
\leq2H\beta^2
\left[\dim_K(\mathcal F,\lambda)+\log\frac{4H}{\delta}\right].
\label{eq:app-bonus-sum}
\end{equation}
\end{lemma}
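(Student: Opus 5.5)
Write $X_k:=\sum_{h=1}^H\mathbb E_{d_h^{\pi_k}}[b_{k,h}^2]$ for the occupancy expectation of the squared bonus and $Y_k:=\sum_{h=1}^H b_{k,h}^2(s_{k,h},a_{k,h})$ for its realized counterpart along the trajectory of $\pi_k$. The plan is to (i) bound $\sum_k Y_k$ deterministically through the generalized eluder dimension of $\mathcal F$, and (ii) pass from $\sum_k Y_k$ to $\sum_k X_k$ with a one-sided exponential-moment (Freedman-type) argument. Since $b_{k,h}$ depends only on $\mathcal D_{k-1,h}$, it is $\mathcal H_{k-1}$-measurable. By the proof conventions, $(s_{k,h},a_{k,h})\sim d_h^{\pi_k}$ conditional on $\mathcal H_{k-1}$, so $\mathbb E[Y_k\mid\mathcal H_{k-1}]=X_k$.

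\textbf{Step (i): deterministic bound.} Fix $h$ and use $b_{k,h}^2\le\min\{16V_{\max}^2,\beta^2D^2_{\mathcal F_h}\}$. Because $\beta\ge4V_{\max}$, we get
\[
b_{k,h}^2\le\beta^2\min\{1,D^2_{\mathcal F_h}((s_{k,h},a_{k,h});\mathcal D_{k-1,h};\lambda)\}.
\]
The sequence $Z_h=(s_{k,h},a_{k,h})_{k\le K}$ is one admissible sequence in Definition~\ref{def:generalized-eluder}. Hence $\sum_k b_{k,h}^2(s_{k,h},a_{k,h})\le\beta^2\dim_K(\mathcal F_h,\lambda)$, and summing over $h$ gives
\[
\sum_k Y_k\le H\beta^2\dim_K(\mathcal F,\lambda).
\]
This holds on every realization.

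\textbf{Step (ii): concentration.} The obstacle here is getting a multiplicative bound $\sum X\le 2\sum Y+\text{log}$ instead of an additive $\sqrt K$ deviation; the latter would destroy the fast rate. I work per stage $h$ and take a union bound over $h$, which costs $\log(4H/\delta)$. Set $X_{k,h}:=\mathbb E_{d_h^{\pi_k}}[b_{k,h}^2]$ and $Y_{k,h}:=b_{k,h}^2(s_{k,h},a_{k,h})$, both lying in $[0,\beta^2]$.

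For $W:=Y_{k,h}/\beta^2\in[0,1]$ I use $e^{-w}\le1-w+w^2/2\le 1-w/2$, which gives
\[
\mathbb E[e^{-W}\mid\mathcal H_{k-1}]\le e^{-X_{k,h}/(2\beta^2)}.
\]
Therefore $\exp\bigl(\sum_k(X_{k,h}/(2\beta^2)-Y_{k,h}/\beta^2)\bigr)$ is a nonnegative supermartingale started at $1$. Markov's inequality then gives, with probability at least $1-\delta/(4H)$,
\[
\sum_k X_{k,h}\le2\sum_kY_{k,h}+2\beta^2\log(4H/\delta).
\]
This is the same supermartingale device as in the proof of Lemma~\ref{lem:app-freedman}, applied to a nonnegative variable with self-bounded variance $\mathbb E[W^2\mid\mathcal H_{k-1}]\le\mathbb E[W\mid\mathcal H_{k-1}]$.

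\textbf{Combining.} Union-bounding over $h\in[H]$ and summing the per-stage bounds, then substituting step (i), yields
\[
\sum_{k,h}\mathbb E_{d_h^{\pi_k}}[b_{k,h}^2]\le2H\beta^2\dim_K(\mathcal F,\lambda)+2H\beta^2\log(4H/\delta),
\]
with total failure probability at most $\delta/4$. This is exactly Eq.~\ref{eq:app-bonus-sum}.

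One point I will state explicitly: the filtration must include $\mathcal D^E$ and the reward iterates. The bonus depends only on past learner state–action pairs, so it stays predictable, and the conditional-on-$\mathcal D^E$ convention lets the bound hold unconditionally.
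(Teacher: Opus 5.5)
Your proposal is correct and follows essentially the same route as the paper: a deterministic eluder-dimension bound on the realized squared bonuses (via $b_{k,h}^2\le\beta^2\min\{1,D^2_{\mathcal F_h}\}$ from $\beta\ge 4V_{\max}$), and a per-stage exponential supermartingale for $b_{k,h}^2/\beta^2\in[0,1]$ with Markov's inequality and a union bound over $h$. The only cosmetic difference is that you use $e^{-w}\le 1-w/2$ on $[0,1]$, whereas the paper uses the chord bound $e^{-w}\le 1-(1-e^{-1})w$ and then rounds $1/(1-e^{-1})$ up to $2$, so both arguments give the same constant.
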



\subsubsection{Proof of Lemma~\ref{lem:app-policy-bound}}
With results established in Appendix~\ref{app:policy-lemmas},
we provide the proof of Lemma~\ref{lem:app-policy-bound} in the following.

\begin{proof}[Proof of Lemma~\ref{lem:app-policy-bound}]
By Lemma~\ref{lem:app-regression-confidence}, the confidence event
holds with probability at least $1-\delta/4$.
On this event, Lemma~\ref{lem:app-optimism} establishes optimism
and the Bellman residual bound, so
Lemma~\ref{lem:app-policy-comparison} yields the first inequality below.

The choice in $\beta$ in Theorem~\ref{thm:main}
satisfies
$\beta^2\geq64V_{\max}^2\log8>16V_{\max}^2$, so
Lemma~\ref{lem:app-predictable} applies with failure probability
at most $\delta/4$. By a union bound, its conclusion and the
confidence event hold simultaneously with probability at least
$1-\delta/2$. On their intersection,
\begin{align*}
T_2
&\leq\frac{2H^2}{\tau}
\sum_{k=1}^K\sum_{h=1}^H
\mathbb E_{d_h^{\pi_k}}[b_{k,h}^2]\\
&\leq\frac{4H^3\beta^2}{\tau}
\left[\dim_K(\mathcal F,\lambda)+\log\frac{4H}{\delta}\right].
\end{align*}
Since $\log(4H/\delta)\leq\log(8H/\delta)$, this proves the
stated bound.
\end{proof}

\subsubsection{Proofs of the technical lemmas}
\label{app:lemma-proofs}

\begin{proof}[Proof of Lemma~\ref{lem:app-planner-envelope}]
Consider two bounded action-value functions
$Q,Q'$ with $\norm{Q-Q'}_\infty\leq\varepsilon$. The pointwise
inequality $Q\leq Q'+\varepsilon$ implies
\[
\tau\log\mathbb E_{a\sim\piref_h(\cdot\mid s)}\exp(\tau^{-1}{Q(s,a)})
\leq
\tau\log\mathbb E_{a\sim\piref_h(\cdot\mid s)}\exp(\tau^{-1}{Q'(s,a)})
+\varepsilon.
\]
Exchanging $Q$ and $Q'$ shows that the soft-value map is
$1$-Lipschitz in the sup norm. Applying this map to a
$\kappa$-cover of $\mathcal Q_h$ gives a $\kappa$-cover of
$\mathcal V_h$, proving the first inequality in
Eq.~\ref{eq:app-planner-value-cover}.
Clipping is also $1$-Lipschitz. Thus, for
$f,f'\in\mathcal F_h$ and $b,b'\in\mathcal B_h$, the corresponding
members $Q,Q'\in\mathcal Q_h$ satisfy
\[
\norm{Q-Q'}_\infty
\leq\norm{f-f'}_\infty+\norm{b-b'}_\infty.
\]
Taking a $\kappa/2$-cover of each input class and applying the
clipped-sum map in Eq.~\ref{eq:planner-q-class} gives a
$\kappa$-cover of $\mathcal Q_h$ whose centers lie in $\mathcal Q_h$.
Counting these pairs proves the second inequality.
The terminal case follows from $\mathcal V_{H+1}=\{0\}$.
\end{proof}

\begin{proof}[Proof of Lemma~\ref{lem:app-regression-confidence}]
Fix $h\in[H]$, $f,g\in\mathcal F_h$, and
$V\in\mathcal V_{h+1}$. Only in this proof, write
\begin{align*}
\Delta_j&:=f(s_{j,h},a_{j,h})-g(s_{j,h},a_{j,h}),\\
\xi_j&:=V(s_{j,h+1})-[P_hV](s_{j,h},a_{j,h}).
\end{align*}
Then $\Delta_j$ is $\mathcal H_{j,h}$-measurable and
$\mathbb E[\xi_j\mid\mathcal H_{j,h}]=0$.
Since $\norm{V}_\infty\leq V_{\max}$, conditional Hoeffding's
lemma gives, for every $t\in\mathbb R$,
\begin{equation}
\mathbb E[\exp(t\xi_j)\mid\mathcal H_{j,h}]
\leq\exp(t^2V_{\max}^2/2).
\end{equation}
With the fixed parameter $\theta:=1/(8V_{\max}^2)$, it follows that
\begin{equation}
\mathbb E\!\left[
\exp\!\left(\theta\left[2\Delta_j\xi_j-\frac14\Delta_j^2\right]\right)
\,\middle|\,\mathcal H_{j,h}\right]\leq1.
\end{equation}
Iterated conditional expectation across episodes and Markov's
inequality therefore imply, for each fixed $k$ and triple $(f,g,V)$,
with probability at least $1-\varepsilon$,
\begin{equation}
2\sum_{j<k}\Delta_j\xi_j-\frac14\sum_{j<k}\Delta_j^2
\leq8V_{\max}^2\log\frac1\varepsilon.
\label{eq:app-regression-fixed-triple}
\end{equation}

Take covers at radius $\kappa:=V_{\max}/K^2$. By
Lemma~\ref{lem:app-planner-envelope}, at each stage the number
of triples of cover elements is at most
\begin{equation}
\begin{aligned}
&\mathcal N_{\mathcal F_h}(\kappa)^2
\mathcal N_{\mathcal V_{h+1}}(\kappa)\\
&\quad\leq\mathcal N_{\mathcal F_h}(\kappa)^2
\mathcal N_{\mathcal F_{h+1}}(\kappa/2)
\mathcal N_{\mathcal B_{h+1}}(\kappa/2)\\
&\quad\leq\overline{\mathcal N}_{\mathcal F\oplus\mathcal B}(\kappa)^2.
\end{aligned}
\end{equation}
Apply Eq.~\ref{eq:app-regression-fixed-triple} to every triple,
stage, and $k\in[K]$, with
$\varepsilon:=\delta/[4HK\overline{\mathcal N}_{\mathcal F\oplus\mathcal B}(\kappa)^2]$.
The resulting union bound has probability at least
$1-\delta/4$.
To extend it beyond the covers, choose approximants
$f',g',V'$ within $\kappa$ in sup norm and let $\Delta_j',\xi_j'$ denote
the corresponding expressions. The structural bounds imply
\begin{align*}
|\Delta_j|,|\Delta_j'|&\leq4V_{\max},
&|\xi_j|,|\xi_j'|&\leq2V_{\max},\\
|\Delta_j-\Delta_j'|&\leq2\kappa,
&|\xi_j-\xi_j'|&\leq2\kappa.
\end{align*}
Consequently,
\begin{equation}
\left|2\Delta_j\xi_j-\frac14\Delta_j^2
-2\Delta_j'\xi_j'+\frac14(\Delta_j')^2\right|
\leq28V_{\max}\kappa.
\end{equation}
Using $k-1\leq K$ and
$\log(4HK\overline{\mathcal N}_{\mathcal F\oplus\mathcal B}(\kappa)^2/\delta)
\leq2\log(8HK\overline{\mathcal N}_{\mathcal F\oplus\mathcal B}(\kappa)/\delta)$,
we obtain, simultaneously for all $f,g,V,k,h$,
\begin{align}
2\sum_{j<k}\Delta_j\xi_j
&\leq\frac14\sum_{j<k}\Delta_j^2
+16V_{\max}^2\log\frac{8HK\overline{\mathcal N}_{\mathcal F\oplus\mathcal B}(\kappa)}{\delta}
+\frac{28V_{\max}^2}{K}.
\label{eq:app-uniform-regression-noise}
\end{align}

Now select the data-dependent functions
$f=\wh f_{k,h}$, $g=u_{k,h}+P_h\wh V_{k,h+1}$, and
$V=\wh V_{k,h+1}$. The regression update gives $f\in\mathcal F_h$,
Bellman completeness gives $g\in\mathcal F_h$, and
the observation following Definition~\ref{def:planner-value-class}
gives $V\in\mathcal V_{h+1}$.
Thus all three functions are covered by the uniform event. The regression targets are
$g(s_{j,h},a_{j,h})+\xi_j$. Comparing the exact least-squares
minimizer with $g$ gives
\begin{equation}
\sum_{j<k}(\Delta_j-\xi_j)^2\leq\sum_{j<k}\xi_j^2,
\qquad\text{hence}\qquad
\sum_{j<k}\Delta_j^2\leq2\sum_{j<k}\Delta_j\xi_j.
\end{equation}
Combining this with Eq.~\ref{eq:app-uniform-regression-noise} yields
\begin{align}
\sum_{j<k}\Delta_j^2
&\leq\frac{64}{3}V_{\max}^2
\log\frac{8HK\overline{\mathcal N}_{\mathcal F\oplus\mathcal B}(\kappa)}{\delta}
+\frac{112V_{\max}^2}{3K}
\nonumber\\
&\leq64V_{\max}^2
\log\frac{8HK\overline{\mathcal N}_{\mathcal F\oplus\mathcal B}(\kappa)}{\delta},
\label{eq:app-regression-empirical-error}
\end{align}
where the last step uses $K\geq1$ and a logarithm of at least
$\log8$. For $k=1$, the empirical sum is empty and this bound
holds directly. Uniformity is essential: the current reward and
continuation value may depend on the same historical transitions
used for fitting; they are selected only after the uniform event
has been established.

By Definition~\ref{def:generalized-eluder}, the pair
$\wh f_{k,h},g\in\mathcal F_h$ satisfies, for every $(s,a)$,
\begin{align*}
|\wh f_{k,h}(s,a)-g(s,a)|
&\leq D_{\mathcal F_h}\!\left(
(s,a);\mathcal D_{k-1,h};\lambda\right)
\sqrt{\lambda+\sum_{j<k}\Delta_j^2}\\
&\leq\beta D_{\mathcal F_h}\!\left(
(s,a);\mathcal D_{k-1,h};\lambda\right).
\end{align*}
Both functions have sup norm at most $2V_{\max}$, so their
pointwise difference is also at most $4V_{\max}$. Taking the
minimum of these two bounds proves Eq.~\ref{eqn:planner_confidence}.
The radius in $\beta$ defined in Theorem~\ref{thm:main} 
moreover satisfies
$\beta^2\geq64V_{\max}^2\log8>16V_{\max}^2$, as required by
the bonus cap.
\end{proof}

\begin{proof}[Proof of Lemma~\ref{lem:app-optimism}]
Conditioned on the event defined in
Lemma~\ref{lem:app-regression-confidence},
which holds with probability at least $1-\delta/4$,
we have
\begin{equation}
u_{k,h}+P_h\wh V_{k,h+1}
\leq\wh f_{k,h}+b_{k,h}
\leq u_{k,h}+P_h\wh V_{k,h+1}+2b_{k,h}.
\label{eq:app-unclipped-confidence}
\end{equation}
The bound $\norm{u_{k,h}}_\infty\leq V_{\max}/H$ and the
soft Bellman recursion give
\begin{equation}
\begin{aligned}
\left|Q_{u_k,h}^{\star,\tau}(s,a)\right|&\leq(H-h+1)V_{\max}/H,\\
\left|V_{u_k,h}^{\star,\tau}(s)\right|&\leq(H-h+1)V_{\max}/H.
\end{aligned}
\label{eq:app-optimal-value-range}
\end{equation}
The value lower bound follows by choosing $\piref$, whose KL
cost is zero; the $Q$ lower bound follows from its Bellman equation.
The upper bounds use nonnegativity of KL.

Proceed by backward induction from stage $H+1$.
If $\wh V_{k,h+1}\geq V_{u_k,h+1}^{\star,\tau}$, then
$u_{k,h}+P_h\wh V_{k,h+1}\geq Q_{u_k,h}^{\star,\tau}$.
The unclipped estimate is therefore optimistic by
Eq.~\ref{eq:app-unclipped-confidence}. The clipping interval contains
the optimal value range, so clipping preserves this inequality.
Monotonicity of the soft Bellman map gives optimism for
$\wh V_{k,h}$ as well.

The same induction gives
$u_{k,h}+P_h\wh V_{k,h+1}\geq Q_{u_k,h}^{\star,\tau}\geq-V_{\max}$,
so lower clipping is inactive. Upper clipping can only reduce the
estimate, and Eq.~\ref{eq:app-unclipped-confidence} yields
Eq.~\ref{eq:app-residual-upper-bound}.
The backup $u_{k,h}+P_h\wh V_{k,h+1}$ may exceed $V_{\max}$,
so the proof does not assert that the clipped residual is nonnegative.
\end{proof}

\begin{proof}
[Proof of Lemma~\ref{lem:app-policy-comparison}]
We condition the proof on the event defined in
Lemma~\ref{lem:app-optimism},
which holds with probability at least $1-\delta/4$.
Recall from Eq.~\ref{eq:app-decomposition} that
\begin{align*}
T_2=\sum_{k=1}^K\left[
V_{u_k,1}^{\star,\tau}(s_1)-J_{u_k}^{\tau}(\pi_k)
\right]
\end{align*}
Fix $k$. Applying Lemma~\ref{lem:kl_performance_difference}
with $r=u_k$ and $\pi=\pi_k$ gives
\begin{align*}
V_{u_k,1}^{\star,\tau}(s_1)-J_{u_k}^\tau(\pi_k)
=\tau\mathbb E_{P,\pi_k}\!\left[
\sum_{h=1}^H\operatorname{KL}\!\left(
\pi_{k,h}(\cdot\mid s_h)\,\|\,
\pi_{u_k,h}^{\star,\tau}(\cdot\mid s_h)\right)\right].
\end{align*}

Further, fix $(k,h,s)$, condition on the optimism event in Lemma~\ref{lem:app-optimism} and by the definition of $\pi_{k,h}$ and $\pi_{u_k,h}^{\star,\tau}$ we know
\begin{align}
&\tau\operatorname{KL}\!\left(
\pi_{k,h}(\cdot\mid s)\,\|\,
\pi_{u_k,h}^{\star,\tau}(\cdot\mid s)\right)
\nonumber\\
&\quad=\mathbb E_{a\sim\pi_{k,h}(\cdot\mid s)}\!\left[
\wh Q_{k,h}(s,a)-Q_{u_k,h}^{\star,\tau}(s,a)\right]
\nonumber\\
&\qquad+\tau\log\mathbb E_{a\sim\pi_{k,h}(\cdot\mid s)}\!\left[
\exp\!\left(-\frac{\wh Q_{k,h}(s,a)-Q_{u_k,h}^{\star,\tau}(s,a)}{\tau}\right)
\right]
\nonumber\\
&\quad\leq\frac1{2\tau}
\mathbb E_{a\sim\pi_{k,h}(\cdot\mid s)}\!\left[
\bigl(\wh Q_{k,h}(s,a)-Q_{u_k,h}^{\star,\tau}(s,a)\bigr)^2\right].
\label{eq:app-one-step-quadratic}
\end{align}
where the inequality is by the optimism event $\widehat Q_{k,h}(s,a)-Q_{u_k,h}^{\star,\tau}(s,a)\ge0$, and the fact that $e^{-x}\leq1-x+x^2/2$ for $x\geq0$,
followed by $\log y\leq y-1$.

For the error term $\widehat Q_{k,h}(s,a)-Q_{u_k,h}^{\star,\tau}(s,a)$,
at $h=H$, optimism, the zero terminal continuation values, and
Eq.~\ref{eq:app-residual-upper-bound} give
\[
0\leq\wh Q_{k,H}(s,a)-Q_{u_k,H}^{\star,\tau}(s,a)
\leq2b_{k,H}(s,a).
\]
For $h<H$, we have
\begin{align}
0\leq{}&\wh Q_{k,h}(s,a)-Q_{u_k,h}^{\star,\tau}(s,a)\nonumber\\
\leq{}&2b_{k,h}(s,a)+\left[P_h\!\left(
\wh V_{k,h+1}-V_{u_k,h+1}^{\star,\tau}\right)\right](s,a)\nonumber\\
\leq{}&2b_{k,h}(s,a)+\mathbb E_{P,\pi_k}\!\left[
\wh Q_{k,h+1}(s_{h+1},a_{h+1})-Q_{u_k,h+1}^{\star,\tau}(s_{h+1},a_{h+1})
\,\middle|\,s_h=s,a_h=a
\right]\nonumber\\
\leq{}&2\mathbb E_{P,\pi_k}\!\left[
\sum_{\ell=h}^H b_{k,\ell}(s_{\ell},a_{\ell})
\,\middle|\,s_h=s,a_h=a\right].
\label{eq:app-error-recursion}
\end{align}
where the first step is the optimism event,
the second step is by the optimal Bellman equation and
Eq.~\ref{eq:app-residual-upper-bound}; the third step follows from
the formulas for the two policies and nonnegativity of KL divergence.
The last step follows by backward induction from the terminal bound.
Together with the case $h=H$, the final bound holds for every $h\in[H]$.

Finally, combining all above, we have
\begin{align}
&V_{u_k,1}^{\star,\tau}(s_1)-J_{u_k}^\tau(\pi_k)
\nonumber\\
&\quad=\tau\mathbb E_{P,\pi_k}\!\left[
\sum_{h=1}^H\operatorname{KL}\!\left(
\pi_{k,h}(\cdot\mid s_h)\,\|\,
\pi_{u_k,h}^{\star,\tau}(\cdot\mid s_h)\right)\right]
\nonumber\\
&\quad\leq\frac2\tau\mathbb E_{P,\pi_k}\!\left[
\sum_{h=1}^H\left(\sum_{\ell=h}^H b_{k,\ell}(s_{\ell},a_{\ell})\right)^2\right]
\nonumber\\
&\quad\leq\frac{2H^2}{\tau}\sum_{h=1}^H
\mathbb E_{d_h^{\pi_k}}[b_{k,h}^2].
\label{eq:app-episode-policy-bound}
\end{align}
where the first inequality combines
Eqs.~\ref{eq:app-one-step-quadratic}
and~\ref{eq:app-error-recursion} with conditional Jensen's inequality.
The last inequality follows from Cauchy--Schwarz and summing over $h$.
Summing over $k$ gives the claimed bound on $T_2$.
    
\end{proof}
\begin{proof}[Proof of Lemma~\ref{lem:app-predictable}]
Fix $h$ and, only in this proof, let
$X_{k,h}:=b_{k,h}(s_{k,h},a_{k,h})^2/\beta^2$.
The bonus cap and $\beta\geq4V_{\max}$ imply $X_{k,h}\in[0,1]$.
Since $e^{-x}\leq1-(1-e^{-1})x$ on $[0,1]$,
\begin{equation}
\mathbb E[e^{-X_{k,h}}\mid\mathcal H_{k-1}]
\leq\exp\!\left(-(1-e^{-1})
\mathbb E[X_{k,h}\mid\mathcal H_{k-1}]\right).
\end{equation}
Iterated conditional expectation and Markov's inequality, followed
by a union bound over stages, give with probability at least
$1-\delta/4$,
\begin{align}
\sum_{k=1}^K\mathbb E[X_{k,h}\mid\mathcal H_{k-1}]
&\leq\frac1{1-e^{-1}}\left[
\sum_{k=1}^K X_{k,h}+\log\frac{4H}{\delta}\right]
\nonumber\\
&\leq2\sum_{k=1}^K X_{k,h}+2\log\frac{4H}{\delta}
\label{eq:app-predictable-conversion}
\end{align}
simultaneously for all $h$.
By the bonus definition,
\begin{equation}
\frac{b_{k,h}(s,a)^2}{\beta^2}
\leq\min\!\left\{1,D_{\mathcal F_h}^2\!\left(
(s,a);\mathcal D_{k-1,h};\lambda\right)\right\}.
\end{equation}
For every $(s,a)$, $b_{k,h}(s,a)$ is $\mathcal H_{k-1}$-measurable, so
$\mathbb E[X_{k,h}\mid\mathcal H_{k-1}]
=\mathbb E_{d_h^{\pi_k}}[b_{k,h}^2]/\beta^2$.
Sum Eq.~\ref{eq:app-predictable-conversion} over $h$, use
Definition~\ref{def:generalized-eluder}, and multiply by $\beta^2$
to conclude.
\end{proof}

\section{Proof of Corollary~\ref{cor:sample_complexity}}
\label{app:sample-complexity-proof}

\begin{proof}
[Proof of Corollary~\ref{cor:sample_complexity}]
Work on the event of Theorem~\ref{thm:main}.
Convexity of $\mathcal R$ gives $\bar r_K\in\mathcal R$.
Since $\mathcal L(\pi,\cdot)$ is concave,
$\mathcal L(\pi,\bar r_K)\geq K^{-1}\sum_{k=1}^K\mathcal L(\pi,r_k)$.
The dual gap is nonnegative because
\[
\inf_\pi\mathcal L(\pi,\bar r_K)
\leq\mathcal L(\bar\pi_K,\bar r_K)
\leq\sup_{r\in\mathcal R}\mathcal L(\bar\pi_K,r).
\]
Using the definition of the episode-level mixture, we therefore obtain
\begin{align}
0\leq\sup_{r \in \mathcal R} \mathcal L(\bar \pi, r) - \inf_{\pi} \mathcal L(\pi, \bar r)
&\leq\frac1K\left[
\sup_{r\in\mathcal R}\sum_{k=1}^K\mathcal L(\pi_k,r)
-\inf_\pi\sum_{k=1}^K\mathcal L(\pi,r_k)\right]
\nonumber\\
&=\frac{\operatorname{Regret}_{\omega,\alpha,\tau}(K)}K.
\label{eq:app-gap-regret}
\end{align}
Since $\lambda\leq V_{\max}^2$ 
for the $\beta$ defined in Theorem~\ref{thm:main}, we can verify that 
\[
\beta^2\leq65V_{\max}^2
\log\frac{8HK\overline{\mathcal N}_{\mathcal F\oplus\mathcal B}(V_{\max}/K^2)}{\delta}.
\]
Divide Eq.~\ref{eqn:regret_bound} by $K$ and choose the integer
sample counts so that the online and expert contributions are each
at most $\epsilon/2$. This gives the sufficient orders displayed in
Corollary~\ref{cor:sample_complexity}.
Precisely, its asymptotic sample requirements mean choosing counts
at least sufficiently large universal multiples of the displayed
expressions, including the logarithmic factors suppressed by
$\widetilde{\mathcal O}$; an arbitrary upper bound on the counts
would not imply sufficiency. Because the complexities depend on
$K$ and $N$, these inequalities must hold jointly at the selected
counts. No small-complexity growth rate follows from boundedness
alone.
\end{proof}

\end{document}

%% file: math_commands.tex
\usepackage{amsmath,amsfonts,bm}

\def\eqref#1{equation~\ref{#1}}

\def\1{\bm{1}}

\DeclareMathAlphabet{\mathsfit}{\encodingdefault}{\sfdefault}{m}{sl}
\SetMathAlphabet{\mathsfit}{bold}{\encodingdefault}{\sfdefault}{bx}{n}

